%% file: revision.tex
\documentclass[12pt]{article}
\usepackage{amsmath}
\usepackage{graphicx,psfrag,epsf}
\usepackage{enumerate}
\usepackage{natbib}
\usepackage{xcolor}

\newcommand{\diag}{\textrm{diag}}
\newcommand{\tr}{\textrm{tr}}

\newcommand{\spa}{\textrm{sp}}
\newcommand{\sign}{\textrm{sign}}

\usepackage{amsthm,mathtools,amsfonts,amssymb}
\usepackage{booktabs}
\usepackage{subcaption}
\usepackage{amsbsy}
\usepackage{parskip}
\usepackage{sectsty}
\usepackage{setspace}
\usepackage{url} 
\usepackage[utf8]{inputenc}
\usepackage[english]{babel}
\usepackage{longtable}
\usepackage{authblk}
\usepackage{animate}
\usepackage{etoolbox}
\usepackage{titlesec}
\usepackage{tipa}
\usepackage{float}
\usepackage{nameref}
\usepackage{multirow,setspace}
\usepackage{fancyhdr}
\usepackage[ruled,vlined,linesnumbered,resetcount,algosection]{algorithm2e}
\usepackage{array}
\usepackage{algpseudocode}
\usepackage{comment}
\usepackage{adjustbox}
\usepackage{bookmark}
\RequirePackage[OT1]{fontenc}
\RequirePackage{bm,hyperref}
\usepackage{placeins}
\renewcommand{\thefigure}{\arabic{section}.\arabic{figure}}
\counterwithin{figure}{section}
\numberwithin{equation}{section}

\makeatletter
\patchcmd{\@tocline}{\hfill}{%
  \leaders\hbox{$\m@th
    \mkern \@dotsep mu\hbox{\normalfont{.}}\mkern \@dotsep
  mu$}\hfill}{}{}
\makeatother
\makeatletter
\patchcmd{\l@section}{\hfil}{%
  \leaders\hbox{$\m@th
    \mkern \@dotsep mu\hbox{\normalfont{.}}\mkern \@dotsep
  mu$}\hfill}{}{}
\makeatother
\newcommand{\pkg}[1]{{\normalfont\fontseries{b}\selectfont #1}}

\newcommand\BibTeX{{\rmfamily B\kern-.05em \textsc{i\kern-.025em b}\kern-.08em
T\kern-.1667em\lower.7ex\hbox{E}\kern-.125emX}}

\newtheorem{theorem}{Theorem}[section]
\newtheorem{corollary}[theorem]{Corollary}
\newtheorem{lemma}[theorem]{Lemma}
\newtheorem{proposition}[theorem]{Proposition}

\titleformat{\paragraph}{\normalfont\normalsize\bfseries}{\theparagraph}{1em}{}
\titlespacing*{\paragraph}{0pt}{3.25ex plus 1ex minus .2ex}{1.5ex plus .2ex}
\DeclareMathAlphabet\mathbfcal{OMS}{cmsy}{b}{n}

\newcommand{\blind}{0}

\graphicspath{{images/}{./}}

\begin{document}

\def\spacingset#1{\renewcommand{\baselinestretch}%
{#1}\small\normalsize} \spacingset{1}


\if0\blind
{
  \title{\bf A Functional SVD Framework for Regularized Multivariate Functional PCA with Dual Penalization}
  \author{Yue Zhao$^1$, Hossein Haghbin$^2$, Rebecca Sanders$^3$, and Mehdi Maadooliat$^3$\thanks{Corresponding Author: mehdi.maadooliat@mu.edu} \\
    $^1$Division of Biostatistics and Health Data Science,\\ University of Minnesota, USA\\ \vspace{.1in}
    $^2$Faculty of Intelligent Systems Engineering and Data Science,\\ Persian Gulf University, Iran \\ \vspace{.1in}
    $^3$Department of Mathematical and Statistical Sciences,\\ Marquette University, USA\\ 
 }
    \date{}
  \maketitle
} \fi

\if1\blind
{
  \bigskip
  \bigskip
  \bigskip
  \begin{center}
    {\LARGE\bf Title}
\end{center}
  \medskip
} \fi
\vspace{-.1in}
\begin{abstract}
This paper introduces a novel framework for Regularized Multivariate Functional Principal Component Analysis (ReMFPCA) via Functional Singular Value Decomposition (SVD). The proposed method extends existing MFPCA approaches by incorporating a generalized functional SVD within a Hilbert space framework, enabling simultaneous regularization of both functional principal components (PCs) and their associated PC scores. A key innovation of this framework is the inclusion of a sparsity penalty on the PC scores, which enhances interpretability by filtering out irrelevant subject-specific variations. This dual-penalization strategy represents a significant advancement beyond existing covariance-based eigen decomposition methods, which penalize only the functional PCs. Two power algorithm implementations, sequential and joint, are proposed, together with a cross-validation approach based on iterative regression for optimal smoothing parameter selection. Comprehensive simulation studies and real data applications demonstrate that the proposed framework substantially improves the extraction of informative and interpretable components, offering methodological and practical benefits for analyzing multivariate functional data across diverse domains.
\end{abstract}

\noindent%
{\it Keywords:} Functional SVD, Regularized Multivariate Functional PCA, Functional Data, Sparsity, Smoothness, Hilbert Space, Power Algorithm

\spacingset{1.45}
\section{Introduction}
\label{sec:intro}
In recent years, interest in functional data analysis (FDA) has grown significantly due to its exceptional ability to manage complex data structures, such as longitudinal, time series, and biomedical imaging data. Functional Principal Component Analysis (FPCA), as a key technique in FDA, is widely used to identify primary patterns within functional observations, encapsulated in functional principal components (PCs). A comprehensive study of FPCA is available in \cite{ramsay2005}. An extensive literature review of FPCA has been conducted by \cite{shang2014survey}. 

Despite the wide applicability and effectiveness of FPCA, it is not without limitations. One significant challenge is the interpretation of the estimated functional PCs, particularly when the true functional PCs are characterized by issues such as roughness and sparsity. To address roughness, various extensions of FPCA have been proposed. \cite{BesseFPCA}, \cite{RAMSAYFPCA} and \cite{KNEIPFPCA} explored methods for smoothing functional data before applying FPCA. \cite{Joan}, \cite{yao2003}, \cite{Di} and \cite{Goldsmith} applied kernel smoothing or penalized spline techniques to the covariance function. \cite{rice1991estimating}, \cite{silverman1996smoothed}, \cite{huang2008functional} and \cite{lakraj2017some} introduced roughness penalties and proposed regularized Functional Principal Component Analysis (ReFPCA) approaches. To address sparsity, \cite{lei}, \cite{interpretableFPCA} and \cite{spfca} introduced sparsity penalties to balance interpretability and variance explanation. Among these approaches, \cite{huang2008functional} applied singular value decomposition (SVD) to the dataset, while most others rely on eigen decomposition of covariance functions.  

Multivariate FPCA (MFPCA) extends FPCA to multivariate functional data and permits the analysis of interactions among various functional variables. \cite{ramsay2005} proposed an MFPCA approach for bivariate functional data, assuming the variables are measured in the same units and exhibit similar variability. \cite{chiou2014multivariate} overcame these limitations by introducing a normalized MFPCA method. \cite{happ2018multivariate} developed a generalized MFPCA approach that enables the analysis of multivariate functional data across various domains and different dimensions. \cite{jacques2014model} utilized MFPCA for clustering applications. 

Similar to univariate FPCA, existing MFPCA approaches encounter issues with roughness in functions. To address this challenge, various extensions of MFPCA have been proposed. \cite{happ2018multivariate} applied the popular Principal Components Analysis through Conditional Expectation (PACE) algorithm by \cite{PACE}, incorporating penalized splines to smooth the covariance function in a multivariate setting. \cite{kayano2009functional} extended the regularization concept from \cite{silverman1996smoothed} and utilized Gaussian basis expansion to manage the roughness of multivariate functional data under the assumption of a `same domain' condition. \cite{ReMFPCA} explored the mathematical foundation of Regularized MFPCA within a Hilbert space framework, independent of basis choice, and introduced a novel approach for multivariate functional data with `different domains.' Notably, all these approaches rely on the eigen decomposition of covariance functions.

The ReMFPCA introduced by \cite{ReMFPCA} incorporates roughness penalties on the functional PCs, but it does not address an important limitation: the absence of regularization on the PC scores. In practice, this omission results in nonzero scores even for subjects unrelated to a given functional pattern, thereby reducing interpretability and introducing noise. This issue parallels challenges seen in sparse PCA \citep{spca_zou,spca_shen}, where nonzero PC loadings complicate the interpretation of derived PCs. Similarly, nonzero PC scores in FPCA and MFPCA can introduce uncorrelated information as noise, making it more difficult to extract optimal PCs with clearer and more meaningful interpretations. The present paper makes three major contributions beyond existing ReMFPCA frameworks:
\begin{enumerate}
    \item {Generalized Functional SVD for Multivariate Functional Data:} We develop a data-operator-based functional SVD formulation for regularized MFPCA in a product Hilbert space across multiple domains, moving beyond covariance-based eigen decomposition.  
    \item {Dual Regularization on PCs and PC Scores:} In contrast to existing methods including \cite{ReMFPCA} that penalize only the smoothness of functional PCs, the proposed framework introduces an additional sparsity penalty on the subject-specific PC scores. This sparsity constraint is designed to enhance interpretability by allowing irrelevant subject-specific contributions to be shrunk toward zero, thereby yielding components that more clearly represent shared functional patterns rather than noisy individual effects.  
    \item {Efficient Cross-Validation via Regression Connection:} The connection between generalized functional SVD and a regression framework enables the development of a highly efficient cross-validation procedure. This procedure allows distinct smoothing parameters to be selected for different functional PCs, improving both flexibility and computational efficiency.  
\end{enumerate}

In addition, efficient algorithms (sequential and joint power methods) are developed to estimate the components. Simulation studies and real-data applications demonstrate that the proposed framework substantially improves interpretability and performance over prior MFPCA methods.

\cite{Yang2011} is closely connected to the present work through its operator-based use of functional singular value decomposition; however, the two approaches differ in both the operators under study and their inferential goals. \cite{Yang2011} develop a singular decomposition of the cross-covariance operator for bivariate functional data in order to study dependence between two functional processes. In contrast, the proposed ReMFPCA framework is formulated through the singular decomposition of the sample data operator $X:\mathbb{R}^n \to H$ for multivariate functional observations. Accordingly, their framework is tailored to dependence analysis, whereas ours yields a direct SVD-based formulation of multivariate FPCA and its regularized extensions.

The paper is organized as follows: Sections \ref{Methodology} and \ref{Regularized MFPCA via Functional SVD} discuss the mathematical foundations of our approach. Section \ref{Power_Algorithm} discusses the power algorithm for implementation and tuning parameter selection strategy. Section \ref{sect::variability} provides a framework to quantify the variance explained by non-orthogonal PCs extracted from the power algorithm. To demonstrate the effectiveness of the proposed method, Section \ref{Simulation} and \ref{Real data analysis} presents simulation results and an analysis of a real data example. The paper concludes with a discussion and an outlook in Section \ref{Discussion}.

\section{Methodology of Functional SVD}
\label{Methodology}
This section begins with the mathematical foundations utilized throughout this section. The Hilbert space setup and related operator-theoretic concepts introduced below are standard in functional data analysis; see \cite{hsing2015theoretical} for foundational definitions and background. We consider $\mathcal{T}_j,\ j=1,\ldots, p,$ as a compact subset of $\mathbb{R}$, where $p$ denotes the number of variables. Let's define $H_j:=L^2(\mathcal{T}_j)$, consisting of measurable functions $x(\cdot)$ on $\mathcal{T}_j$ with $\int_{\mathcal{T}_j} |x(t)|^2 dt < \infty$. Note that $H_j$ is a Hilbert space equipped with the inner product $\langle x , y \rangle_{H_j}=\int_{\mathcal{T}_j} x(t)y(t) dt$ and the norm $\Vert x \Vert_{H_j}=\sqrt{\langle x , x \rangle_{H_j}}$. The tensor (outer) product $x\otimes y$ corresponds to the operator $x\otimes y: {H}_i\rightarrow {H}_j$, defined as ($x\otimes y)h:=\langle x, h \rangle_{H_i} y$, where $x,h\in {H}_i$ and $y\in {H}_j$. Denote the cartesian product space $\mathbb{H}=H_1\times H_2\times \cdots \times H_p$ equipped with the inner product $\langle {\pmb x,\pmb y} \rangle_\mathbb{H}=\sum_{j=1}^p\langle  x_j,y_j \rangle_{H_j}$. For positive integers $p\ \text{and}\ n$, $\mathbb{F}^{p \times n}$ is the space of all linear operator $\bm{\mathcal{Z}}: \mathbb{R}^n \rightarrow \mathbb{H}$. Any operator $\bm{\mathcal{Z}}$ can be 
specified by $[\pmb{z}_{i}]_{i=1}^{n}$ where
	\begin{equation*}\label{eq: z operator}
		\bm{\mathcal{Z}}\pmb{a}=\sum_{i=1}^n a_i\pmb{z}_{i}
		,
		\ \pmb{z}_{i}\in \mathbb{H},\ \text{and}\ \pmb{a}=(a_1,\ldots, a_n)^{\top}\in\mathbb{R}^n.
	\end{equation*}
	
	For two given operators $\bm{\mathcal{Z}}_1=[\pmb{z}_{i}^{(1)}]_{i=1}^{n}$ and $\bm{\mathcal{Z}}_2=[\pmb{z}_{i}^{(2)}]_{i=1}^{n}$ in $\mathbb{F}^{p\times n}$, define 
	\begin{equation*}
		\langle\bm{\mathcal{Z}}_1,\bm{\mathcal{Z}}_2\rangle_\mathbb{F}:=\sum_{i=1}^n \langle{\pmb{z}}_{i}^{(1)},{\pmb{z}}_{i}^{(2)}\rangle_\mathbb{H}.
	\end{equation*}
	It follows immediately that $\langle \cdot , \cdot \rangle_\mathbb{F}$, defines an inner product. We will call it the Frobenius inner product of two operators in $\mathbb{F}^{p\times n}$. The associated Frobenius norm is $\Vert \bm{\mathcal{Z}} \Vert_\mathbb{F}=\sqrt{\langle\bm{\mathcal{Z}},\bm{\mathcal{Z}}\rangle_\mathbb{F}}$.

\subsection{Data Operator}
Consider the centered sample \(\pmb{x}_i = (x_{i,1}, \ldots, x_{i,p})\) in \(\mathbb{H}\), defined on the domain \(\mathbfcal{T} := \mathcal{T}_1 \times \cdots \times \mathcal{T}_p\). For the collection of samples \(\pmb{x}_1, \ldots, \pmb{x}_n\), we define the operator \(\mathbfcal{X} := [\pmb{x}_i]_{i=1}^{n}\) in \(\mathbb{F}^{p \times n}\). This operator is referred to as the \textit{data operator}.
	
	\begin{proposition}\label{prop:trajO}
The data operator $\mathbfcal{X}$ is a bounded linear operator. If we define $\mathbfcal{X}^*:\mathbb{H} \rightarrow \mathbb{R}^n$, given by
	\begin{equation*}\label{eq:adj.traj}
			\mathbfcal{X}^*{\pmb z}=
			\begin{pmatrix} \langle {\pmb x}_1,{\pmb z}\rangle_{\mathbb{H}}, \langle {\pmb x}_2,{\pmb z}\rangle_{\mathbb{H}}, \ldots, \langle {\pmb x}_n,{\pmb z}\rangle_{\mathbb{H}}\end{pmatrix}^\top,
			\ {\pmb z}\in\mathbb{H},
		\end{equation*}
		then $\mathbfcal{X}^*$ is an adjoint operator for $\mathbfcal{X}$.
	\end{proposition}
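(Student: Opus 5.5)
Linearity of $\mathbfcal{X}$ is immediate from its definition $\mathbfcal{X}\pmb a=\sum_{i=1}^n a_i\pmb x_i$, because the map $\pmb a\mapsto\sum_i a_i\pmb x_i$ is a finite linear combination of fixed elements of $\mathbb{H}$. For boundedness, I will estimate the norm directly. The triangle inequality in $\mathbb{H}$ followed by Cauchy--Schwarz in $\mathbb{R}^n$ gives
\begin{equation*}
\Vert \mathbfcal{X}\pmb a\Vert_{\mathbb{H}}\le \sum_{i=1}^n |a_i|\,\Vert \pmb x_i\Vert_{\mathbb{H}}\le \Big(\sum_{i=1}^n \Vert\pmb x_i\Vert_{\mathbb{H}}^2\Big)^{1/2}\Vert \pmb a\Vert_{2}=\Vert\mathbfcal{X}\Vert_{\mathbb{F}}\,\Vert\pmb a\Vert_2 .
\end{equation*}
This bound is finite because each $x_{i,j}\in L^2(\mathcal{T}_j)$, so $\Vert\pmb x_i\Vert_{\mathbb{H}}^2=\sum_j\Vert x_{i,j}\Vert_{H_j}^2<\infty$. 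Hence $\mathbfcal{X}$ is bounded, with operator norm at most its Frobenius norm. An alternative argument is that any linear map out of the finite-dimensional space $\mathbb{R}^n$ is automatically bounded, but the explicit estimate is more useful later.

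Next I will check that the proposed $\mathbfcal{X}^*$ is well defined and linear. Each coordinate $\pmb z\mapsto\langle\pmb x_i,\pmb z\rangle_{\mathbb{H}}$ is linear, and by Cauchy--Schwarz in $\mathbb{H}$ it is bounded by $\Vert\pmb x_i\Vert_{\mathbb{H}}\Vert\pmb z\Vert_{\mathbb{H}}$. Therefore $\mathbfcal{X}^*$ is a bounded linear map from $\mathbb{H}$ into $\mathbb{R}^n$.

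The main step is to verify the adjoint identity $\langle \mathbfcal{X}\pmb a,\pmb z\rangle_{\mathbb{H}}=\langle \pmb a,\mathbfcal{X}^*\pmb z\rangle_{\mathbb{R}^n}$ for all $\pmb a\in\mathbb{R}^n$ and $\pmb z\in\mathbb{H}$. Since the inner product is linear in its first argument,
\begin{equation*}
\langle \mathbfcal{X}\pmb a,\pmb z\rangle_{\mathbb{H}}=\Big\langle\sum_{i=1}^n a_i\pmb x_i,\pmb z\Big\rangle_{\mathbb{H}}=\sum_{i=1}^n a_i\langle\pmb x_i,\pmb z\rangle_{\mathbb{H}}=\pmb a^\top\mathbfcal{X}^*\pmb z .
\end{equation*}
This is exactly the Euclidean inner product of $\pmb a$ with $\mathbfcal{X}^*\pmb z$, so the identity holds.

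The adjoint of a bounded operator between Hilbert spaces is unique, so $\mathbfcal{X}^*$ is \emph{the} adjoint of $\mathbfcal{X}$. No step is a genuine obstacle. The only points needing care are that $\mathbb{H}$ is a real Hilbert space, so no complex conjugation arises, and that the finiteness of $n$ ensures both operators are bounded.
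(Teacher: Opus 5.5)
Your proof is correct and complete. Linearity, the estimate $\Vert\mathbfcal{X}\pmb a\Vert_{\mathbb{H}}\le\Vert\mathbfcal{X}\Vert_{\mathbb{F}}\Vert\pmb a\Vert_2$, and the identity $\langle\mathbfcal{X}\pmb a,\pmb z\rangle_{\mathbb{H}}=\pmb a^\top\mathbfcal{X}^*\pmb z$ are exactly what is needed. The appeal to uniqueness of the adjoint also justifies calling $\mathbfcal{X}^*$ \emph{the} adjoint.

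The paper does not actually prove this proposition; no proof of it appears in the appendix. The only justification in the text is the later remark that $\mathbfcal{X}$ has rank $L\le n$ and that finite-rank operators are compact, hence bounded. Your route is more solid on boundedness. Finite-dimensional range alone does not make a linear map bounded: an unbounded linear functional has rank one. So that remark tacitly presupposes boundedness. What genuinely makes $\mathbfcal{X}$ bounded is the finite-dimensional domain $\mathbb{R}^n$. Your explicit Cauchy--Schwarz estimate uses this directly and also gives the quantitative bound $\Vert\mathbfcal{X}\Vert_{\mathrm{op}}\le\Vert\mathbfcal{X}\Vert_{\mathbb{F}}$. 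The adjoint verification is the standard one-line computation and needs nothing further.
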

	
\begin{proposition} \label{prop 2.2}
For the data operator $\mathbfcal{X}$, the following relationship holds:
\begin{equation*}
    \mathbfcal{X} \circ \mathbfcal{X}^* = (n-1) \widehat{\mathbfcal{C}},
\end{equation*}
where $\circ$ denotes the composition of operators, and $\widehat{\mathbfcal{C}}$ represents the sample estimator of the covariance operator.
\end{proposition}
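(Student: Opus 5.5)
The plan is a direct computation: evaluate the composition $\mathbfcal{X}\circ\mathbfcal{X}^*$ on an arbitrary element and match it with the sample covariance operator. I would first fix the definition of $\widehat{\mathbfcal{C}}$ that the statement refers to. Since the samples $\pmb{x}_1,\ldots,\pmb{x}_n$ are centered, the unbiased sample covariance operator on $\mathbb{H}$ is
\begin{equation*}
\widehat{\mathbfcal{C}} = \frac{1}{n-1}\sum_{i=1}^n \pmb{x}_i\otimes \pmb{x}_i ,
\end{equation*}
where $\pmb{x}_i\otimes\pmb{x}_i:\mathbb{H}\to\mathbb{H}$ acts by $(\pmb{x}_i\otimes\pmb{x}_i)\pmb{z}=\langle \pmb{x}_i,\pmb{z}\rangle_{\mathbb{H}}\,\pmb{x}_i$. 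This is the tensor product defined earlier, now taken with the product inner product of $\mathbb{H}$. I would note explicitly that centering here means the sample mean has already been subtracted; this is what justifies the $n-1$ normalization rather than $n$.

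Next, for arbitrary $\pmb{z}\in\mathbb{H}$, I would apply Proposition \ref{prop:trajO}. It gives $\mathbfcal{X}^*\pmb{z}=(\langle\pmb{x}_1,\pmb{z}\rangle_{\mathbb{H}},\ldots,\langle\pmb{x}_n,\pmb{z}\rangle_{\mathbb{H}})^\top\in\mathbb{R}^n$. Feeding this vector $\pmb{a}$ into the defining formula $\mathbfcal{X}\pmb{a}=\sum_i a_i\pmb{x}_i$ yields
\begin{equation*}
(\mathbfcal{X}\circ\mathbfcal{X}^*)\pmb{z}=\sum_{i=1}^n \langle \pmb{x}_i,\pmb{z}\rangle_{\mathbb{H}}\,\pmb{x}_i=\sum_{i=1}^n(\pmb{x}_i\otimes\pmb{x}_i)\pmb{z}=(n-1)\widehat{\mathbfcal{C}}\pmb{z}.
\end{equation*}
Because $\pmb{z}$ is arbitrary, the two operators coincide on $\mathbb{H}$.

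Optionally, I would also give the componentwise (kernel) form, to connect with the multivariate covariance used in the MFPCA literature. The $(j,k)$ block acts from $H_k$ to $H_j$ with kernel $\frac{1}{n-1}\sum_i x_{i,j}(s)x_{i,k}(t)$. This follows by expanding $\langle\pmb{x}_i,\pmb{z}\rangle_{\mathbb{H}}=\sum_k\int_{\mathcal{T}_k}x_{i,k}(t)z_k(t)\,dt$ and reading off the $j$th coordinate.

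The main obstacle is not analytic, since no convergence issues arise for a finite sum. It is purely definitional: making sure the paper's $\widehat{\mathbfcal{C}}$ uses the $1/(n-1)$ normalization with already-centered data, and that the tensor-product convention matches. With the paper's convention $(x\otimes y)h=\langle x,h\rangle y$, the operator $\pmb{x}_i\otimes\pmb{x}_i$ is symmetric, so the order of the factors does not matter here.
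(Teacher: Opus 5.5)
Your proposal is correct and follows essentially the same direct computation as the paper. Evaluate $\mathbfcal{X}^*$ on an arbitrary $\pmb{z}\in\mathbb{H}$, feed the resulting coefficient vector into $\mathbfcal{X}$, and recognize $\sum_{i=1}^n \langle \pmb{x}_i,\pmb{z}\rangle_{\mathbb{H}}\,\pmb{x}_i = \sum_{i=1}^n (\pmb{x}_i\otimes\pmb{x}_i)\pmb{z}$. Your explicit statement of the $1/(n-1)$-normalized definition of $\widehat{\mathbfcal{C}}$ for centered data fills in a step the paper leaves implicit.
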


Clearly, $R(\mathbfcal{X})=\spa\left\lbrace {\pmb x}_i\right\rbrace_{i=1}^{n},$ where $R(\mathbfcal{X})$ denotes the range of $\mathbfcal{X}.$ Therefore, $\dim(R(\mathbfcal{X})) := L \leq n$ that result in $\mathbfcal{X}$ (and all operators $\mathbfcal{Z}\in\mathbb{F}^{p\times n}$) being an operator of rank $L$. Note that every finite-rank operator is also compact, and so bounded. 
	
\subsection{Functional SVD Framework}
In this section, we investigate the ReMFPCA problem by applying the SVD technique within a Hilbert space framework to the data operator, a method we refer to as functional SVD. To effectively pursue this approach, it is crucial to first establish the mathematical foundations of the functional SVD technique.
	\begin{theorem}[]\label{th-fsvd}
		Consider the data operator $\mathbfcal{X}$. There exist linearly independent elements $\pmb{\psi}_1, \ldots,  \pmb{\psi}_L$ from $\mathbb{H}$ and $\pmb{u}_1, \ldots, \pmb{u}_L$ from $\mathbb{R}^n$  that are orthonormal and
		\begin{equation}\label{eq-svdl}
			\mathbfcal{X}=\sum_{\ell=1}^L \sqrt{\lambda_\ell} \ \pmb{u}_\ell\otimes \pmb{\psi}_\ell,
		\end{equation}
		where $\lambda_\ell$'s are  non-ascending positive scalars. Then
		\begin{equation*}\label{eq-svdr}
			\mathbfcal{X}^*=\sum_{\ell=1}^L \sqrt{\lambda_\ell} \ \pmb{\psi}_\ell\otimes\pmb{u}_\ell. 
		\end{equation*}
	\end{theorem}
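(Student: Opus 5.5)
The plan is to reduce the statement to the finite-dimensional spectral theorem for the $n\times n$ Gram matrix, and then build the $\pmb{\psi}_\ell$ from its eigenvectors.

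\textbf{Step 1 (Gram operator).} Consider $\mathbfcal{X}^*\circ\mathbfcal{X}:\mathbb{R}^n\to\mathbb{R}^n$. By Proposition~\ref{prop:trajO} this is the matrix $\mathbf{G}$ with entries $\mathbf{G}_{ij}=\langle \pmb x_i,\pmb x_j\rangle_{\mathbb{H}}$. It is symmetric and positive semidefinite, since $\pmb a^\top \mathbf{G}\pmb a=\Vert \mathbfcal{X}\pmb a\Vert_{\mathbb{H}}^2\ge 0$. Its null space equals that of $\mathbfcal{X}$, so $\operatorname{rank}\mathbf{G}=\dim R(\mathbfcal{X})=L$.

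\textbf{Step 2 (spectral decomposition).} By the spectral theorem there are orthonormal eigenvectors $\pmb u_1,\ldots,\pmb u_L\in\mathbb{R}^n$ with positive eigenvalues $\lambda_1\ge\cdots\ge\lambda_L>0$. These can be extended by an orthonormal basis $\pmb u_{L+1},\ldots,\pmb u_n$ of $\ker \mathbf{G}=\ker\mathbfcal{X}$.

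\textbf{Step 3 (construct the $\pmb\psi_\ell$).} Define $\pmb\psi_\ell:=\lambda_\ell^{-1/2}\mathbfcal{X}\pmb u_\ell\in\mathbb{H}$. Then
\[
\langle\pmb\psi_\ell,\pmb\psi_k\rangle_{\mathbb{H}}=(\lambda_\ell\lambda_k)^{-1/2}\,\pmb u_\ell^\top\mathbf{G}\pmb u_k=\delta_{\ell k},
\]
so the $\pmb\psi_\ell$ are orthonormal, and in particular linearly independent.

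\textbf{Step 4 (verify the expansion).} It is enough to check \eqref{eq-svdl} on the basis $\{\pmb u_k\}_{k=1}^n$. Using the paper's convention $(\pmb u\otimes\pmb\psi)\pmb a=\langle \pmb u,\pmb a\rangle\pmb\psi$, the right-hand side applied to $\pmb u_k$ gives $\sqrt{\lambda_k}\,\pmb\psi_k=\mathbfcal{X}\pmb u_k$ when $k\le L$. When $k>L$ it gives $0$, which agrees with $\mathbfcal{X}\pmb u_k=0$ since $\pmb u_k\in\ker\mathbfcal{X}$. By linearity the two operators agree on all of $\mathbb{R}^n$.

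\textbf{Step 5 (adjoint).} Take adjoints term by term. For $\pmb a\in\mathbb{R}^n$ and $\pmb z\in\mathbb{H}$,
\[
\langle(\pmb u\otimes\pmb\psi)\pmb a,\pmb z\rangle_{\mathbb{H}}=\langle\pmb u,\pmb a\rangle\langle\pmb\psi,\pmb z\rangle_{\mathbb{H}}=\langle\pmb a,(\pmb\psi\otimes\pmb u)\pmb z\rangle .
\]
Hence $(\pmb u\otimes\pmb\psi)^*=\pmb\psi\otimes\pmb u$. By linearity of the adjoint and uniqueness of adjoints (Proposition~\ref{prop:trajO}), this yields the stated formula for $\mathbfcal{X}^*$.

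\textbf{Main obstacle.} The only delicate point is Step~4. One must confirm that $\ker\mathbf{G}=\ker\mathbfcal{X}$, so that the eigenvectors with zero eigenvalue contribute nothing, and one must use the tensor-product convention consistently. Everything else is routine linear algebra.
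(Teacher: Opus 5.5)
Your proof is correct. The paper gives no argument of its own here: its proof consists only of a pointer to Theorem 1 of \cite{Haghbin2019fssa}. Your Gram-matrix construction is therefore a self-contained substitute, and the more elementary of the two natural routes.

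The other route works on the $\mathbb{H}$ side. There you apply the spectral theorem for compact self-adjoint operators to $\mathbfcal{X}\circ\mathbfcal{X}^*=(n-1)\widehat{\mathbfcal{C}}$ (Proposition~\ref{prop 2.2}), take its eigenfunctions as the $\pmb{\psi}_\ell$, and set $\pmb{u}_\ell=\mathbfcal{X}^*\pmb{\psi}_\ell/\sqrt{\lambda_\ell}$; this is the form in which Proposition~\ref{prop-svd} is stated. Your route needs only the spectral theorem for the symmetric $n\times n$ matrix $\mathbfcal{X}^*\circ\mathbfcal{X}$, and then transfers to $\mathbb{H}$ via $\pmb{\psi}_\ell=\lambda_\ell^{-1/2}\mathbfcal{X}\pmb{u}_\ell$.

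It also gives the later facts almost for free:
\begin{itemize}
\item The $\pmb{\psi}_\ell$ are orthonormal. The statement only makes their linear independence explicit, but orthonormality is what the proof of Theorem~\ref{col1}\,(i) actually uses.
\item Being $L$ orthonormal elements of the $L$-dimensional space $R(\mathbfcal{X})$, the $\pmb{\psi}_\ell$ form a basis of it.
\item The identity $\mathbfcal{X}^*\pmb{\psi}_\ell=\lambda_\ell^{-1/2}\mathbfcal{X}^*\mathbfcal{X}\pmb{u}_\ell=\sqrt{\lambda_\ell}\,\pmb{u}_\ell$ is Proposition~\ref{prop-svd}.
\item Applying $\mathbfcal{X}$ once more gives $\mathbfcal{X}\circ\mathbfcal{X}^*\pmb{\psi}_\ell=\lambda_\ell\pmb{\psi}_\ell$, which is Theorem~\ref{col1}\,(i).
\end{itemize}

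One cosmetic point: rename your Gram matrix, since $\pmb{G}$ already denotes the basis Gram matrix in Section~\ref{Power_Algorithm}.
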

One may consider Theorem \ref{th-fsvd} as the functional SVD of $\mathbfcal{X}$. We shall refer to $\sqrt{\lambda_\ell}$ as singular value, $\pmb{\psi}_\ell$ as the left singular function, and $\pmb{u}_\ell$ as the right singular vector of the data operator. Therefore, the collection $\left(\sqrt{\lambda_\ell}, \pmb{\psi}_\ell, \pmb{u}_\ell\right)$ can be referred to as the $\ell^{th}$ singular triplet of $\mathbfcal{X}$.
	\begin{proposition}\label{prop-svd}
    In Theorem \ref{th-fsvd}, the set $\{\pmb{\psi}_\ell\}_{\ell=1}^L$ forms a basis for $R(\mathbfcal{X})$, and each $\pmb{u}_i$ can be expressed as:
		\begin{equation*}\label{eq-r.sing.v}
			\pmb{u}_\ell=\dfrac{\mathbfcal{X}^*(\pmb{\psi}_\ell)}{\sqrt{\lambda_\ell}}=\begin{pmatrix} \dfrac{\langle\pmb{\psi}_\ell, {\pmb x}_1\rangle_{\mathbb{H}}}{\sqrt{\lambda_\ell}} , \ldots,\dfrac{ \langle\pmb{\psi}_\ell, {\pmb x}_n\rangle_{\mathbb{H}}}{\sqrt{\lambda_\ell}} \end{pmatrix}^\top,\qquad \ell=1,\ldots,L.
		\end{equation*}
	\end{proposition}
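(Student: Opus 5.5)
We will read everything off the decomposition \eqref{eq-svdl} of Theorem~\ref{th-fsvd}, using only the orthonormality of $\{\pmb{u}_\ell\}$ and of $\{\pmb{\psi}_\ell\}$ (the latter in $\mathbb{H}$, as delivered there). By the convention $(x\otimes y)h=\langle x,h\rangle y$, applying $\mathbfcal{X}$ to $\pmb a\in\mathbb{R}^n$ gives
\begin{equation*}
\mathbfcal{X}\pmb a=\sum_{\ell=1}^L\sqrt{\lambda_\ell}\,\langle \pmb u_\ell,\pmb a\rangle\,\pmb\psi_\ell .
\end{equation*}

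\textbf{The $\pmb\psi_\ell$ form a basis of $R(\mathbfcal{X})$.} The display shows $R(\mathbfcal{X})\subseteq \spa\{\pmb\psi_\ell\}_{\ell=1}^L$. For the reverse inclusion, take $\pmb a=\pmb u_k$; orthonormality gives $\mathbfcal{X}\pmb u_k=\sqrt{\lambda_k}\,\pmb\psi_k$. Since $\lambda_k>0$, we get $\pmb\psi_k=\mathbfcal{X}\pmb u_k/\sqrt{\lambda_k}\in R(\mathbfcal{X})$. Hence the two spaces coincide. The $\pmb\psi_\ell$ are linearly independent (being orthonormal), so they form a basis, consistent with $\dim R(\mathbfcal{X})=L$.

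\textbf{The formula for $\pmb u_\ell$.} We use the adjoint expansion from Theorem~\ref{th-fsvd}, $\mathbfcal{X}^*=\sum_\ell\sqrt{\lambda_\ell}\,\pmb\psi_\ell\otimes\pmb u_\ell$. Applying it to $\pmb\psi_k$ gives
\begin{equation*}
\mathbfcal{X}^*\pmb\psi_k=\sum_\ell\sqrt{\lambda_\ell}\,\langle\pmb\psi_\ell,\pmb\psi_k\rangle_{\mathbb{H}}\,\pmb u_\ell=\sqrt{\lambda_k}\,\pmb u_k ,
\end{equation*}
where the last step uses orthonormality of the $\pmb\psi_\ell$ in $\mathbb{H}$. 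Dividing by $\sqrt{\lambda_k}>0$ yields the first equality of the proposition. The explicit coordinate form then follows by substituting the formula for $\mathbfcal{X}^*$ from Proposition~\ref{prop:trajO}, together with symmetry of $\langle\cdot,\cdot\rangle_{\mathbb{H}}$.

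\textbf{Main point of care.} The only subtlety is the orthonormality of the $\pmb\psi_\ell$. The statement of Theorem~\ref{th-fsvd} literally calls them "linearly independent", and "orthonormal" is clearly meant to apply to both families, as in any SVD. If only linear independence of the $\pmb\psi_\ell$ were available, the basis claim would still follow from the first step, which uses only orthonormality of the $\pmb u_\ell$. The formula for $\pmb u_\ell$, however, needs $\langle\pmb\psi_\ell,\pmb\psi_k\rangle_{\mathbb{H}}=\delta_{\ell k}$.

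We can get this directly from the construction of the SVD. Proposition~\ref{prop 2.2} gives $\mathbfcal{X}\mathbfcal{X}^*=(n-1)\widehat{\mathbfcal{C}}$, a self-adjoint compact operator of finite rank. The $\pmb\psi_\ell$ are its eigenfunctions with eigenvalues $\lambda_\ell$, so they may be taken orthonormal in $\mathbb{H}$ by the spectral theorem.

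Alternatively, we can avoid orthogonality of the $\pmb\psi_\ell$ entirely. Apply $\mathbfcal{X}^*\mathbfcal{X}$ (an $n\times n$ symmetric matrix) to $\pmb u_k$ and use the first step: this gives $\mathbfcal{X}^*\mathbfcal{X}\pmb u_k=\sqrt{\lambda_k}\,\mathbfcal{X}^*\pmb\psi_k$. This reduces the claim to $\pmb u_k$ being an eigenvector of $\mathbfcal{X}^*\mathbfcal{X}$ with eigenvalue $\lambda_k$, which is again the standard construction. We will rely on orthonormality as intended in Theorem~\ref{th-fsvd}.
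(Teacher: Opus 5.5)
Your proof is correct. The paper gives no standalone proof of this proposition; its appendix only defers Theorem~\ref{th-fsvd} to an external reference. Your argument is nonetheless the one the paper relies on implicitly. The identity $\mathbfcal{X}^*\pmb{\psi}_h=\sqrt{\lambda_h}\,\pmb{u}_h$, which you obtain from the expansion of $\mathbfcal{X}^*$ and $\langle\pmb{\psi}_\ell,\pmb{\psi}_h\rangle_{\mathbb{H}}=\delta_{\ell h}$, is exactly the first step of the paper's proof of Theorem~\ref{col1}(i). Your basis argument via $\mathbfcal{X}\pmb{u}_k=\sqrt{\lambda_k}\,\pmb{\psi}_k$ is the natural complement to it.

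You read Theorem~\ref{th-fsvd} as making both families orthonormal. That is the intended reading, since the paper itself uses it in Theorem~\ref{col1}(i) and (iv). You are also right that with mere linear independence of the $\pmb{\psi}_\ell$ the formula for $\pmb{u}_\ell$ would be false in general.

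Two of your side remarks should not be counted as independent justifications of that orthonormality. First, the statement that the $\pmb{\psi}_\ell$ are eigenfunctions of $\mathbfcal{X}\mathbfcal{X}^*$ is Theorem~\ref{col1}(i), whose proof already uses their orthonormality. Directly from the decomposition, $\mathbfcal{X}\mathbfcal{X}^*\pmb{\psi}_k=\sum_m\lambda_m\langle\pmb{\psi}_m,\pmb{\psi}_k\rangle_{\mathbb{H}}\pmb{\psi}_m$, so being eigenfunctions is equivalent to orthonormality. Second, the condition $\mathbfcal{X}^*\mathbfcal{X}\pmb{u}_k=\lambda_k\pmb{u}_k$, given the decomposition, is likewise equivalent to $\langle\pmb{\psi}_m,\pmb{\psi}_k\rangle_{\mathbb{H}}=\delta_{mk}$. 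So this ``alternative'' restates the hypothesis rather than avoiding it.

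Since your main argument simply takes orthonormality of the $\pmb{\psi}_\ell$ as part of the theorem's hypothesis, these remarks do not affect its validity.
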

	
\begin{theorem}\label{col1}
Let the set $\{\pmb{\psi}_\ell\}_{\ell=1}^L$ represent the left singular functions of the data operator $\mathbfcal{X}$ in \eqref{eq-svdl}. Then the following holds:
\begin{enumerate}
    \item[i)] The function $\pmb{\psi}_\ell$ is an eigenfunction of the estimated covariance operator ${\widehat{\mathbfcal{C}}}$ with the corresponding eigenvalue $\dfrac{\lambda_\ell}{n-1}$.
    
    \item[ii)] $ \max_{\pmb{\varphi}}{\langle {\widehat{\mathbfcal{C}}}\pmb{\varphi}, \pmb{\varphi} \rangle_{\mathbb{H}}} = 
    {\langle {\widehat{\mathbfcal{C}}}\pmb{\psi}_1, \pmb{\psi}_1 \rangle_{\mathbb{H}}} = \dfrac{\lambda_1}{n-1},$
    where the maximum is taken over all $\pmb{\varphi}$ with $\Vert \pmb{\varphi} \Vert_{\mathbb{H}} = 1$.
    
    \item[iii)] For $2 \leq \ell \leq L$,
    \begin{equation*}
    \max_{\pmb{\varphi}}{\langle {\widehat{\mathbfcal{C}}}\pmb{\varphi}, \pmb{\varphi} \rangle_{\mathbb{H}}} = 
    {\langle {\widehat{\mathbfcal{C}}}\pmb{\psi}_\ell, \pmb{\psi}_\ell \rangle_{\mathbb{H}}} = \dfrac{\lambda_\ell}{n-1},
    \end{equation*}
    where the maximum is taken over all $\pmb{\varphi} \in \mathbb{H}$ with $\Vert \pmb{\varphi} \Vert_{\mathbb{H}} = 1$ and $\langle \pmb{\varphi}, \pmb{\psi}_h \rangle_{\mathbb{H}} = 0$ for $1 \leq h < \ell$.
    
    \item[iv)] The Frobenius norm of the data operator $\mathbfcal{X}$ satisfies:
    \begin{equation*}
    {\Vert \mathbfcal{X} \Vert_\mathbb{F}^2} = \sum_{\ell=1}^L \lambda_\ell.
    \end{equation*}
\end{enumerate}
\end{theorem}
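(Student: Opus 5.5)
The plan is to work directly from the functional SVD in Theorem~\ref{th-fsvd} together with Proposition~\ref{prop 2.2}, reducing everything to statements about the finite-rank self-adjoint operator $\mathbfcal{X}\circ\mathbfcal{X}^*$. First I will fix the tensor-product convention. With $(x\otimes y)h=\langle x,h\rangle y$, the operator $\pmb u_\ell\otimes\pmb\psi_\ell:\mathbb{R}^n\to\mathbb{H}$ sends $\pmb a\mapsto \langle \pmb u_\ell,\pmb a\rangle\pmb\psi_\ell$, and $\pmb\psi_\ell\otimes\pmb u_\ell$ sends $\pmb z\mapsto\langle\pmb\psi_\ell,\pmb z\rangle_{\mathbb H}\pmb u_\ell$. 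Composing the two expansions and using orthonormality of the $\pmb u_\ell$ in $\mathbb{R}^n$ gives
\[
\mathbfcal{X}\circ\mathbfcal{X}^*\pmb z=\sum_{\ell,k}\sqrt{\lambda_\ell\lambda_k}\,\langle\pmb\psi_k,\pmb z\rangle_{\mathbb H}\langle\pmb u_\ell,\pmb u_k\rangle\pmb\psi_\ell=\sum_{\ell=1}^L\lambda_\ell\langle\pmb\psi_\ell,\pmb z\rangle_{\mathbb H}\pmb\psi_\ell .
\]
I read the orthonormality in Theorem~\ref{th-fsvd} as applying to both families, $\pmb\psi$'s and $\pmb u$'s, as in a genuine SVD. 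If only the $\pmb u$'s were orthonormal, I would first derive orthonormality of the $\pmb\psi$'s from Proposition~\ref{prop-svd}, or from the construction of the SVD via the eigendecomposition of the $n\times n$ Gram matrix $\mathbfcal{X}^*\mathbfcal{X}$. This is the point I expect to need care with, since everything else hinges on it.

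\emph{Part (i).} Setting $\pmb z=\pmb\psi_k$ in the display above gives $\mathbfcal{X}\mathbfcal{X}^*\pmb\psi_k=\lambda_k\pmb\psi_k$. Dividing by $n-1$ and invoking Proposition~\ref{prop 2.2} yields $\widehat{\mathbfcal C}\pmb\psi_k=\frac{\lambda_k}{n-1}\pmb\psi_k$.

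\emph{Parts (ii) and (iii).} I will use a Rayleigh-quotient argument. For $\|\pmb\varphi\|_{\mathbb H}=1$,
\[
\langle\widehat{\mathbfcal C}\pmb\varphi,\pmb\varphi\rangle=\frac{1}{n-1}\sum_\ell\lambda_\ell c_\ell^2,\qquad c_\ell=\langle\pmb\psi_\ell,\pmb\varphi\rangle .
\]
Bessel's inequality gives $\sum_\ell c_\ell^2\le1$. Since the $\lambda_\ell$ are non-ascending and positive, this sum is at most $\lambda_1/(n-1)$, with equality at $\pmb\varphi=\pmb\psi_1$. Under the constraints $c_h=0$ for $h<\ell$, the same bound becomes $\lambda_\ell/(n-1)$, attained at $\pmb\psi_\ell$. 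An equivalent route is to write $\langle\widehat{\mathbfcal C}\pmb\varphi,\pmb\varphi\rangle=\|\mathbfcal X^*\pmb\varphi\|^2/(n-1)$.

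\emph{Part (iv).} By definition, $\|\mathbfcal X\|_{\mathbb F}^2=\sum_i\|\pmb x_i\|^2_{\mathbb H}$, and $\pmb x_i=\mathbfcal X\pmb e_i=\sum_\ell\sqrt{\lambda_\ell}\,u_{\ell i}\pmb\psi_\ell$. Orthonormality of the $\pmb\psi$'s gives $\|\pmb x_i\|^2=\sum_\ell\lambda_\ell u_{\ell i}^2$. Summing over $i$ and using $\|\pmb u_\ell\|=1$ yields $\sum_\ell\lambda_\ell$.
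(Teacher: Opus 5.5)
Your proposal is correct. Part (i) matches the paper's argument up to ordering: the paper first evaluates $\mathbfcal{X}^*\pmb\psi_h=\sqrt{\lambda_h}\,\pmb u_h$ and then applies $\mathbfcal{X}$, while you first derive $\mathbfcal{X}\circ\mathbfcal{X}^*=\sum_\ell\lambda_\ell\,\pmb\psi_\ell\otimes\pmb\psi_\ell$. Both routes use orthonormality of both families, as you anticipated.

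The differences lie in (ii)--(iv).

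For (ii) and (iii), the paper gives no argument and simply defers to (3.1)--(3.2) of \cite{ReMFPCA}. Your bound $\sum_{k\ge\ell}\lambda_k c_k^2\le\lambda_\ell\sum_k c_k^2\le\lambda_\ell$, via Bessel's inequality, makes the statement self-contained. Nothing more is needed: $\widehat{\mathbfcal{C}}$ has finite rank and is already diagonalized by the SVD, so the infinite dimensionality of $\mathbb{H}$ causes no difficulty.

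For (iv), the paper writes $\Vert\sum_\ell\sqrt{\lambda_\ell}\,\pmb u_\ell\otimes\pmb\psi_\ell\Vert_{\mathbb F}^2=\sum_\ell\lambda_\ell\Vert\pmb u_\ell\otimes\pmb\psi_\ell\Vert_{\mathbb F}^2$ and applies Lemma~\ref{lem1}. It drops the cross terms without comment; they vanish because $\langle\pmb u_\ell\otimes\pmb\psi_\ell,\pmb u_k\otimes\pmb\psi_k\rangle_{\mathbb F}=\langle\pmb u_\ell,\pmb u_k\rangle\langle\pmb\psi_\ell,\pmb\psi_k\rangle_{\mathbb H}$. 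Your column-wise computation through $\pmb x_i=\mathbfcal{X}\pmb e_i$ bypasses that lemma and makes both uses of orthonormality explicit.

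You were right to flag the orthonormality of the $\pmb\psi_\ell$, since linear independence alone would not suffice. Your fallback also goes through. Compare $\mathbfcal{X}^*\pmb\psi_\ell=\sum_k\sqrt{\lambda_k}\langle\pmb\psi_k,\pmb\psi_\ell\rangle_{\mathbb H}\,\pmb u_k$ with $\sqrt{\lambda_\ell}\,\pmb u_\ell$ from Proposition~\ref{prop-svd}. Independence of the $\pmb u_k$ and $\lambda_k>0$ then give $\langle\pmb\psi_k,\pmb\psi_\ell\rangle_{\mathbb H}=\delta_{k\ell}$.
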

	
Now consider the problem of finding the optimal low-rank approximation of the data operator $\mathbfcal{X}$ with respect to the Frobenius norm, $\Vert . \Vert_\mathbb{F}$. The following result can be viewed as an extension of the Eckart–Young–Mirsky Theorem in the context of functional SVD.

\begin{theorem}\label{Low-rank approximation}
For $\ell=1, \ldots, L$, let $\left(\sqrt{\lambda_\ell}, \pmb{\psi}_\ell, \pmb{u}_\ell\right)$ denote the $\ell^{th}$ singular triplet of $\mathbfcal{X}$ as defined in \eqref{eq-svdl}. Fix $r$ such that $1 \leq r \leq L$. Then, the best rank-$r$ approximation of $\mathbfcal{X}$ in the Frobenius norm satisfies   
\begin{equation*}
    \min_{\{\pmb{w}_\ell, \pmb{\varphi}_\ell\}_{\ell=1}^{r}}{\Vert \mathbfcal{X} - \sum_{\ell=1}^r \pmb{w}_\ell \otimes \pmb{\varphi}_\ell \Vert_\mathbb{F}^2} = \Vert \mathbfcal{X} - \sum_{\ell=1}^r \sqrt{\lambda_\ell} \ \pmb{u}_\ell \otimes \pmb{\psi}_\ell \Vert_\mathbb{F}^2 = \sum_{\ell=r+1}^L \lambda_\ell,
\end{equation*}
where the minimum is taken over all $\pmb{w}_\ell \in \mathbb{R}^n$ and $\pmb{\varphi}_\ell \in \mathbb{H}$, with the functions $\pmb{\varphi}_\ell$ forming an orthonormal set.
\end{theorem}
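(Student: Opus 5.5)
The plan is to reduce the problem, for a fixed orthonormal set $\{\pmb{\varphi}_\ell\}_{\ell=1}^r$, to a projection problem, and then to use the variational characterization of Theorem \ref{col1}. Note first a convention: for $\pmb{w}\in\mathbb{R}^n$ and $\pmb{\varphi}\in\mathbb{H}$, the operator $\pmb{w}\otimes\pmb{\varphi}$ sends $\pmb{a}$ to $\langle \pmb{w},\pmb{a}\rangle\pmb{\varphi}$. Hence it is the element $[w_i\pmb{\varphi}]_{i=1}^n$ of $\mathbb{F}^{p\times n}$.

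With this, the objective splits subject by subject:
\begin{equation*}
\Big\Vert \mathbfcal{X}-\sum_{\ell=1}^r \pmb{w}_\ell\otimes\pmb{\varphi}_\ell\Big\Vert_\mathbb{F}^2=\sum_{i=1}^n\Big\Vert \pmb{x}_i-\sum_{\ell=1}^r w_{\ell i}\pmb{\varphi}_\ell\Big\Vert_\mathbb{H}^2 .
\end{equation*}
\textbf{Step 1 (optimize the scores).} Fix the orthonormal $\pmb{\varphi}_\ell$. Each summand is minimized by the orthogonal projection onto $\spa\{\pmb{\varphi}_\ell\}$, that is, by $w_{\ell i}=\langle \pmb{x}_i,\pmb{\varphi}_\ell\rangle_\mathbb{H}$, or $\pmb{w}_\ell=\mathbfcal{X}^*\pmb{\varphi}_\ell$ by Proposition \ref{prop:trajO}. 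By Pythagoras the minimal value is
\begin{equation*}
\Vert\mathbfcal{X}\Vert_\mathbb{F}^2-\sum_{\ell=1}^r\sum_{i=1}^n\langle \pmb{x}_i,\pmb{\varphi}_\ell\rangle_\mathbb{H}^2=\Vert\mathbfcal{X}\Vert_\mathbb{F}^2-(n-1)\sum_{\ell=1}^r\langle\widehat{\mathbfcal{C}}\pmb{\varphi}_\ell,\pmb{\varphi}_\ell\rangle_\mathbb{H},
\end{equation*}
where the last equality uses Proposition \ref{prop 2.2}.

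\textbf{Step 2 (Ky Fan maximization).} It remains to show that $\sum_{\ell=1}^r\langle\widehat{\mathbfcal{C}}\pmb{\varphi}_\ell,\pmb{\varphi}_\ell\rangle\le \sum_{\ell=1}^r\lambda_\ell/(n-1)$ over orthonormal $r$-tuples. Equality holds at $\pmb{\varphi}_\ell=\pmb{\psi}_\ell$ by Theorem \ref{col1}(i). Combined with Theorem \ref{col1}(iv), this yields the value $\sum_{\ell>r}\lambda_\ell$.

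This step is the main obstacle, since parts (ii)–(iii) of Theorem \ref{col1} only give a sequential maximization, not the joint sum bound. I would prove it directly using the spectral form $(n-1)\widehat{\mathbfcal{C}}=\sum_{k=1}^L\lambda_k\,\pmb{\psi}_k\otimes\pmb{\psi}_k$, which follows from Theorem \ref{th-fsvd} and Proposition \ref{prop 2.2}. Set $c_k=\sum_{\ell=1}^r\langle\pmb{\varphi}_\ell,\pmb{\psi}_k\rangle^2$. Then $(n-1)\sum_\ell\langle\widehat{\mathbfcal{C}}\pmb{\varphi}_\ell,\pmb{\varphi}_\ell\rangle=\sum_k\lambda_k c_k$. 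Bessel's inequality, applied in both directions, gives $0\le c_k\le 1$ and $\sum_k c_k\le r$. A linear program with non-ascending $\lambda_k>0$ under these constraints is maximized by $c_1=\dots=c_r=1$; the rearrangement argument is $\sum_k\lambda_k c_k-\sum_{k\le r}\lambda_k\le\lambda_r(\sum_k c_k-r)\le0$.

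\textbf{Step 3 (identify the minimizer).} Finally, plugging $\pmb{\varphi}_\ell=\pmb{\psi}_\ell$ into Step 1 gives $\pmb{w}_\ell=\mathbfcal{X}^*\pmb{\psi}_\ell=\sqrt{\lambda_\ell}\pmb{u}_\ell$ by Proposition \ref{prop-svd}. This shows that the truncated SVD attains the minimum.
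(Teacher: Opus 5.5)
Your proof is correct and follows the same route as the paper. The paper also fixes the orthonormal $\pmb{\varphi}_\ell$, uses an orthogonal-projection argument to get $\pmb{w}_\ell=\mathbfcal{X}^*(\pmb{\varphi}_\ell)$, reduces the objective to $\Vert\mathbfcal{X}\Vert_\mathbb{F}^2-\sum_{\ell=1}^r\langle\mathbfcal{X}\circ\mathbfcal{X}^*(\pmb{\varphi}_\ell),\pmb{\varphi}_\ell\rangle_\mathbb{H}$, and then takes the leading $r$ eigenfunctions. The one difference is at that last step: the paper calls the joint maximization ``well-established'' and points to Theorem~\ref{col1}, whose parts (ii)--(iii) only give a sequential characterization, whereas your Bessel and linear-programming argument actually proves the joint (Ky Fan) bound and so closes that gap.
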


Both \cite{chiou2014multivariate} and \cite{happ2018multivariate} introduced their MFPCA techniques by solving the following optimization problem:
\begin{equation}
	\label{mfpca1fsvd}
	\max_{\pmb{\varphi}:\Vert \pmb{\varphi} \Vert_{\mathbb{H}} = 1 }{\langle 
			\widehat{\mathbfcal{C}}\pmb{\varphi} , \pmb{\varphi} \rangle_{\mathbb{H}}},
\end{equation}
where $\widehat{\mathbfcal{C}}$ denotes the sample cross-covariance operator. Building upon Theorem \ref{Low-rank approximation}, we propose the following corollary to explore the MFPCA problem through the framework of data operators and the functional SVD approach:
\begin{corollary}\label{col2}
	The solution to \eqref{mfpca1fsvd} and the optimization problem
	\begin{equation} \label{svd eq}
		\min_{\pmb{\varphi}:\Vert\pmb{\varphi}\Vert_{\mathbb{H}}=1,\ \pmb{u}\in\mathbb{R}^n}{\Vert \mathbfcal{X} -  \pmb{u}\otimes \pmb{\varphi} \Vert_\mathbb{F}^2} 
	\end{equation}
	is the same, with the solution given by $\pmb{\varphi} = \pmb{\psi_1}$ and $\pmb{u} = \mathbfcal{X}^*(\pmb{\varphi}).$
\end{corollary}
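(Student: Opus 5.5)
The plan is to reduce the rank-one problem \eqref{svd eq} to the variational problem \eqref{mfpca1fsvd} by profiling out $\pmb{u}$ for each fixed $\pmb{\varphi}$. Then I would invoke Theorem \ref{col1}(ii), which identifies $\pmb{\psi}_1$ as the maximizer of $\langle \widehat{\mathbfcal{C}}\pmb{\varphi},\pmb{\varphi}\rangle_{\mathbb{H}}$ over unit-norm $\pmb{\varphi}$. Alternatively, Theorem \ref{Low-rank approximation} with $r=1$ already certifies that $\sqrt{\lambda_1}\,\pmb{u}_1\otimes\pmb{\psi}_1$ attains the minimum in \eqref{svd eq}. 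Either way, what remains is to show that the minimizers of the two problems coincide and to identify the optimal $\pmb{u}$.

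First I would expand the Frobenius norm. Fix $\pmb{\varphi}$ with $\Vert\pmb{\varphi}\Vert_{\mathbb{H}}=1$. The operator $\pmb{u}\otimes\pmb{\varphi}$ sends $\pmb{a}\mapsto \langle \pmb{u},\pmb{a}\rangle\pmb{\varphi}$, so its $i$-th column is $u_i\pmb{\varphi}$. Using the definition of $\langle\cdot,\cdot\rangle_\mathbb{F}$,
\begin{equation*}
\Vert \mathbfcal{X}-\pmb{u}\otimes\pmb{\varphi}\Vert_\mathbb{F}^2=\sum_{i=1}^n\Vert \pmb{x}_i-u_i\pmb{\varphi}\Vert_{\mathbb{H}}^2=\Vert\mathbfcal{X}\Vert_\mathbb{F}^2-2\sum_{i=1}^n u_i\langle\pmb{x}_i,\pmb{\varphi}\rangle_{\mathbb{H}}+\sum_{i=1}^n u_i^2 .
\end{equation*}
This is a strictly convex quadratic in $\pmb{u}$. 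It is minimized coordinatewise at $u_i=\langle \pmb{x}_i,\pmb{\varphi}\rangle_{\mathbb{H}}$, that is, at $\pmb{u}=\mathbfcal{X}^*\pmb{\varphi}$ by Proposition \ref{prop:trajO}.

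Substituting this $\pmb{u}$ back gives the profiled objective $\Vert\mathbfcal{X}\Vert_\mathbb{F}^2-\Vert\mathbfcal{X}^*\pmb{\varphi}\Vert^2$. Next I would rewrite $\Vert\mathbfcal{X}^*\pmb{\varphi}\Vert^2=\langle \mathbfcal{X}\mathbfcal{X}^*\pmb{\varphi},\pmb{\varphi}\rangle_{\mathbb{H}}$. By Proposition \ref{prop 2.2} this equals $(n-1)\langle\widehat{\mathbfcal{C}}\pmb{\varphi},\pmb{\varphi}\rangle_{\mathbb{H}}$. Since $\Vert\mathbfcal{X}\Vert_\mathbb{F}^2$ does not depend on $\pmb{\varphi}$, minimizing \eqref{svd eq} jointly over $(\pmb{\varphi},\pmb{u})$ is equivalent to maximizing $\langle\widehat{\mathbfcal{C}}\pmb{\varphi},\pmb{\varphi}\rangle_{\mathbb{H}}$ over unit $\pmb{\varphi}$, which is exactly \eqref{mfpca1fsvd}. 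Theorem \ref{col1}(ii) then gives the maximizer $\pmb{\varphi}=\pmb{\psi}_1$ with optimal value $\lambda_1/(n-1)$. The corresponding minimum of \eqref{svd eq} is $\sum_{\ell\ge2}\lambda_\ell$, by Theorem \ref{col1}(iv), consistent with Theorem \ref{Low-rank approximation}. Finally, $\pmb{u}=\mathbfcal{X}^*\pmb{\psi}_1=\sqrt{\lambda_1}\pmb{u}_1$ by Proposition \ref{prop-svd}.

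The main obstacle is only a subtlety, not a computational difficulty. The solution is unique only up to sign, and only up to rotation within the eigenspace if $\lambda_1=\lambda_2$. Therefore "the solution is the same" should be read as equality of the solution sets of the two problems. The profiling argument gives this directly, because the two objectives are related by the strictly monotone map $t\mapsto\Vert\mathbfcal{X}\Vert_\mathbb{F}^2-(n-1)t$ on the common feasible set of $\pmb{\varphi}$. I would state this explicitly and then note $\pmb{\psi}_1$ as the representative solution.
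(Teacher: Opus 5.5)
Your proposal is correct and is essentially the paper's argument: the paper obtains this corollary as the $r=1$ case of Theorem~\ref{Low-rank approximation}, whose proof likewise profiles out the score vector as $\mathbfcal{X}^*(\pmb{\varphi})$ (Lemma~\ref{e10}), reduces the objective to $\Vert\mathbfcal{X}\Vert_\mathbb{F}^2-\langle\mathbfcal{X}\circ\mathbfcal{X}^*\pmb{\varphi},\pmb{\varphi}\rangle_{\mathbb{H}}$, and then appeals to Theorem~\ref{col1}. Your direct coordinatewise minimization, and your remark that the claim should be read as equality of solution sets (up to sign, or up to rotation within the top eigenspace when $\lambda_1=\lambda_2$), are harmless refinements of that same route.
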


\section{Regularized MFPCA via Functional SVD} \label{Regularized MFPCA via Functional SVD}
To enhance the performance of MFPCA, regularization techniques have been introduced, yielding more reliable and interpretable results. Traditional approaches to ReMFPCA often concentrate on penalizing the functional components $\pmb{\varphi}$ in \eqref{mfpca1fsvd}. In contrast, our method targets the formulation in \eqref{svd eq}, offering an alternative strategy for penalizing both $\pmb{u}$ and $\pmb{\varphi}$. The regularized functional PCs are estimated by solving the following optimization problem:

\begin{equation}\label{sparse and smooth fpc general}
	\min_{\pmb{u},\pmb{\varphi}:\Vert\pmb{\varphi}\Vert_{\pmb{\alpha}}=1} \Vert \bm{\mathcal{X}} - {\pmb{u}\otimes\pmb{\varphi}}\Vert_\mathbb{F}^2 + \mathcal{P}_{\bm{\theta}}(\pmb{u}, \pmb{\varphi}),
\end{equation}

where $\pmb{u} = (u_1, \ldots, u_n)^\top$, $\pmb{\varphi} = (\varphi_1, \ldots, \varphi_p)$, $\Vert \pmb{\varphi} \Vert_{\pmb \alpha}^2=\sum_{j=1}^p\left( \Vert \varphi_j \Vert_{H_j}^2+ \alpha_j\Vert\mathcal{D}^2\varphi_j\Vert_{H_j}^2\right)$, and $\mathcal{P}_{\bm{\theta}}(\pmb{u}, \pmb{\varphi})$ represents the penalties imposed on both $\pmb{u}$ and $\pmb{\varphi}$. Specifically, we may define:
\begin{equation*}
	\mathcal{P}_{\bm{\theta}}(\pmb{u}, \pmb{\varphi}) = {\pmb{u}}^\top\pmb{u}\sum_{j=1}^p \alpha_j\Vert { \mathcal{D}^2 \varphi_j} \Vert_{H_j}^2 + \sum_{i=1}^{n} p_{\gamma}(\lvert {{u_i}} \rvert),
\end{equation*}
which leads us to the following optimization problem:
\begin{equation}
	\label{optimazation_problem}
	\min_{\pmb{u},\pmb{\varphi}:\Vert\pmb{\varphi}\Vert_{\pmb{\alpha}}=1} \Vert \bm{\mathcal{X}} - {\pmb{u}\otimes\pmb{\varphi}}\Vert_\mathbb{F}^2+
	{\pmb{u}}^\top\pmb{u}\sum_{j=1}^p \alpha_j\Vert { \mathcal{D}^2 \varphi_j} \Vert_{H_j}^2 + \sum_{i=1}^{n} p_{\gamma}(\lvert {{u_i}} \rvert).
\end{equation}
Here, the second term represents the \emph{roughness penalty} imposed on $\pmb{\varphi}$, where the operator $\mathcal{D}^2$ denotes the second derivative, and $\pmb{\alpha} = (\alpha_1,\ldots,\alpha_p)^\top$ is the vector of smoothing tuning parameters. The third term represents the \emph{sparsity penalty function} imposed on $\pmb{u}$, where  $\gamma \geq 0$ is sparse tuning parameter. 

This penalty term differs from the sparse PCA framework of \cite{spca_shen}, which regularizes PC loading vectors to induce variable-level sparsity in finite-dimensional PCA. By contrast, our framework regularizes PC scores to induce subject-level sparsity, so that each functional loading is associated with a relevant subset of subjects and more clearly represents a shared latent pattern. The lemma below is inspired by Lemma 2 of \cite{spca_shen} and adapts their thresholding-based argument to our functional SVD/ReMFPCA framework, where sparsity is imposed on subject-specific scores rather than loading vectors.
\begin{lemma} \label{sparse penalty lemma}
For a fixed $\pmb{\varphi}$, denote $w_i = \langle \pmb{x_{i}},\pmb{\varphi}\rangle_{\mathbb{H}}$, the $\hat{\pmb{u}}$ that minimize \eqref{optimazation_problem} and satisfied $\Vert \pmb{\varphi} \Vert_{\pmb{\alpha}}$ = 1 is as follow:
\begin{enumerate}
    \item Soft thresholding penalty $p_\gamma(|u|) = 2\gamma |u|$ with solution:
    \[\hat{u}_i = h_\gamma^{\text{soft}}(w_i) = \text{sign}(w_i)(|w_i| - \gamma)_+;\] 
    \item Hard thresholding penalty $p_\gamma(|u|) = \gamma^2 I(|u| \neq 0)$ with solution:
    \[\hat{u}_i = h_\gamma^{\text{hard}}(w_i) = I(|w_i| > \gamma) (w_i);\]
    \item SCAD penalty
    \[p_\gamma(|u|) = 
    \begin{cases}
    2\gamma |u| & \text{if } 0 \leq |u| \leq \gamma, \\
    \dfrac{-u^2 + 2a\gamma |u| - \gamma^2}{(a - 1)} & \text{if } \gamma < |u| < a\gamma, \\
    (a + 1)\gamma^2 & \text{if } |u| \geq a\gamma,
    \end{cases}\] with solution:
    \[\hat{u}_i = h_\gamma^{\text{SCAD}}(w_i) = 
    \begin{cases}
    \sign(w_i)(|w_i| - \gamma)_+ & \text{for } |w_i| \leq 2\gamma; \\
    \dfrac{(a - 1)(w_i) - \text{sign}(w_i)a\gamma}{a - 2} & \text{for } 2\gamma < |w_i| \leq a\gamma; \\
    w_i & \text{for } |w_i| > a\gamma.
    \end{cases}\]
   Here, we fix $a = 3.7$ as recommended in \cite{fan2001variable}.
\end{enumerate}
\end{lemma}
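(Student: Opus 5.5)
Fix $\pmb{\varphi}$ with $\Vert\pmb{\varphi}\Vert_{\pmb\alpha}=1$. The plan is to expand the objective in \eqref{optimazation_problem} as a function of $\pmb{u}$ alone and show that it separates into $n$ independent scalar problems, each identical to the one treated in Lemma 2 of \cite{spca_shen}. Since $\bm{\mathcal{X}}=[\pmb{x}_i]_{i=1}^n$ and $\pmb{u}\otimes\pmb{\varphi}=[u_i\pmb{\varphi}]_{i=1}^n$, the definition of $\Vert\cdot\Vert_\mathbb{F}$ gives
\begin{equation*}
\Vert \bm{\mathcal{X}}-\pmb{u}\otimes\pmb{\varphi}\Vert_\mathbb{F}^2=\sum_{i=1}^n\Vert\pmb{x}_i\Vert_\mathbb{H}^2-2\sum_{i=1}^n u_i w_i+\pmb{u}^\top\pmb{u}\,\Vert\pmb{\varphi}\Vert_\mathbb{H}^2 .
\end{equation*}
Adding the roughness term $\pmb{u}^\top\pmb{u}\sum_j\alpha_j\Vert\mathcal{D}^2\varphi_j\Vert_{H_j}^2$, the quadratic coefficient becomes $\pmb{u}^\top\pmb{u}\,\Vert\pmb{\varphi}\Vert_{\pmb\alpha}^2=\pmb{u}^\top\pmb{u}$ by the constraint. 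This is the key reason the roughness penalty was built into $\mathcal{P}_{\bm\theta}$ in this form. Dropping the constant $\sum_i\Vert\pmb{x}_i\Vert_\mathbb{H}^2$, the objective equals
\begin{equation*}
\sum_{i=1}^n\Big\{u_i^2-2u_iw_i+p_\gamma(|u_i|)\Big\}=\sum_{i=1}^n\Big\{(u_i-w_i)^2+p_\gamma(|u_i|)\Big\}-\sum_{i=1}^n w_i^2 ,
\end{equation*}
so each $\hat u_i$ minimizes $g(u)=(u-w_i)^2+p_\gamma(|u|)$ separately.

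Next I would solve this scalar problem for each penalty. First, the minimizer has the sign of $w_i$ (or is zero), since replacing $u$ by $\sign(w_i)|u|$ does not increase $g$. Then it suffices to minimize over $v=|u|\ge0$ with $|w_i|$ in place of $w_i$.

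\textbf{Soft thresholding.} Minimize $(v-|w|)^2+2\gamma v$ for $v\ge0$. This is a convex quadratic with unconstrained minimizer $|w|-\gamma$, which is clipped at $0$, giving $h^{\text{soft}}_\gamma$.

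\textbf{Hard thresholding.} Compare $u=0$, with value $w^2$, against the best nonzero $u=w$, with value $\gamma^2$. This gives $\hat u=w$ exactly when $|w|>\gamma$; ties at $|w|=\gamma$ are resolved by the stated convention.

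\textbf{SCAD.} This is the main obstacle, since the penalty is piecewise and nonconvex. I would minimize the piecewise objective on each of the three regions and compare.
\begin{itemize}
\item On $[0,\gamma]$ the objective is the soft-threshold problem, with candidate $(|w|-\gamma)_+$, valid when $|w|\le2\gamma$.
\item On $(\gamma,a\gamma)$ the objective is $(v-|w|)^2+\{-v^2+2a\gamma v-\gamma^2\}/(a-1)$. Its $v^2$ coefficient is $(a-2)/(a-1)>0$ because $a=3.7>2$, so it is convex. Setting the derivative to zero gives $v=\{(a-1)|w|-a\gamma\}/(a-2)$, which lies in the region exactly when $2\gamma<|w|\le a\gamma$.
\item On $[a\gamma,\infty)$ the penalty is constant, so $v=|w|$, valid when $|w|\ge a\gamma$.
\end{itemize}
Finally, I would check continuity at the boundaries $v=\gamma$ and $v=a\gamma$. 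The SCAD penalty is $C^1$ there, so the full objective is $C^1$ on $v\ge0$ and, for $a>2$, has a single stationary point on its domain, which is then the global minimizer. This yields $h^{\text{SCAD}}_\gamma$ as in \cite{fan2001variable}.
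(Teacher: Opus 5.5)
Your proposal is correct and follows the same route as the paper. You expand $\Vert \bm{\mathcal{X}}-\pmb{u}\otimes\pmb{\varphi}\Vert_\mathbb{F}^2$, absorb the roughness term via $\Vert\pmb{\varphi}\Vert_{\pmb\alpha}=1$ so the quadratic coefficient becomes $\pmb{u}^\top\pmb{u}$, and reduce to the separate scalar problems $\min_{u_i}\{u_i^2-2u_iw_i+p_\gamma(|u_i|)\}$. The paper stops at this reduction and calls the rest straightforward, so your case-by-case thresholding computations fill in what it omits. For SCAD, your closing step is more precisely stated as follows: since $a>2$, $g'$ is continuous and strictly increasing on $[0,\infty)$, so the minimizer is $0$ when $|w_i|\le\gamma$ and the unique root of $g'$ otherwise.
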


For the case of $\gamma$ = 0, which implies no penalty on $\pmb{u}$, functional PCs are estimated from  
\begin{equation}\label{goal_new}
	\min_{\pmb{u},\pmb{\varphi}:\Vert\pmb{\varphi}\Vert_{\pmb{\alpha}}=1} \Vert \bm{\mathcal{X}} - {\pmb{u}\otimes\pmb{\varphi}}\Vert_\mathbb{F}^2 + {\pmb{u}}^\top\pmb{u}\sum_{j=1}^p \alpha_j\Vert { \mathcal{D}^2 \varphi_j} \Vert_{H_j}^2.
\end{equation}
In the following corollary, leveraging the concept of the half-smoothing operator $\mathbfcal{S}_{\pmb{\alpha}}$ as defined in \cite{ReMFPCA}, we reformulate the optimization criterion in \eqref{goal_new} into a quadratic form, which simplifies the optimization process.

\begin{corollary} \label{Corollary SVD ED}
    Let $\mathbfcal{S}_{\pmb{\alpha}}$ denote the half-smoothing operator, as defined in Section~4.1  of \cite{ReMFPCA}. Define $\tilde{\pmb{\varphi}} = \mathbfcal{S}_{\pmb{\alpha}}^{-1}\pmb{\varphi}$ and $\tilde{\mathbfcal{X}} = \mathbfcal{S}_{\pmb{\alpha}}\mathbfcal{X}$, where $\tilde{\pmb{\varphi}} := (\tilde{\varphi}_1, \ldots, \tilde{\varphi}_p) \in \mathbb{H}$ and $\tilde{\mathbfcal{X}} := [\tilde{\pmb{x}}_{i}]_{i=1}^{n} \in \mathbb{F}^{p \times n}$, with $\tilde{\pmb{x}}_{i} := ({\tilde{x}}_{i,1},\ldots,{\tilde{x}}_{i,p}) \in \mathbb{H}$. The criterion in (\ref{goal_new}) is equivalent to:
	\begin{equation*}\label{half_smooth}
		\Vert \mathbfcal{X} \Vert_\mathbb{F}^2 - \Vert \tilde{\mathbfcal{X}} \Vert_\mathbb{F}^2 + \Vert \tilde{\mathbfcal{X}} - \pmb{u} \otimes \tilde{\pmb{\varphi}} \Vert_\mathbb{F}^2.
	\end{equation*}
\end{corollary}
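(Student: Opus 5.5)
The plan is to expand both sides and compare them term by term. I will use two properties of the half-smoothing operator that I take from Section~4.1 of \cite{ReMFPCA}. First, $\mathbfcal{S}_{\pmb{\alpha}}$ is a bounded, self-adjoint, injective operator on $\mathbb{H}$; componentwise it is essentially $(I+\alpha_j\mathcal{D}^4)^{-1/2}$ under the appropriate boundary conditions. Second, its range is the Sobolev-type domain on which $\Vert\cdot\Vert_{\pmb{\alpha}}$ is finite, and
\[
\Vert \pmb{\varphi}\Vert_{\pmb{\alpha}}^2=\sum_{j=1}^p\left(\Vert\varphi_j\Vert_{H_j}^2+\alpha_j\Vert\mathcal{D}^2\varphi_j\Vert_{H_j}^2\right)=\Vert \mathbfcal{S}_{\pmb{\alpha}}^{-1}\pmb{\varphi}\Vert_{\mathbb{H}}^2=\Vert\tilde{\pmb{\varphi}}\Vert_{\mathbb{H}}^2 .
\]
In other words, $\mathbfcal{S}_{\pmb{\alpha}}^{-1}$ is an isometry from $(\text{dom},\Vert\cdot\Vert_{\pmb{\alpha}})$ onto $\mathbb{H}$.

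The first step expands the criterion in \eqref{goal_new} using the Frobenius inner product. Because $\langle \mathbfcal{X},\pmb{u}\otimes\pmb{\varphi}\rangle_{\mathbb{F}}=\sum_i u_i\langle \pmb{x}_i,\pmb{\varphi}\rangle_{\mathbb{H}}=\pmb{u}^\top\mathbfcal{X}^*\pmb{\varphi}$ (Proposition~\ref{prop:trajO}) and $\Vert \pmb{u}\otimes\pmb{\varphi}\Vert_{\mathbb{F}}^2=\pmb{u}^\top\pmb{u}\,\Vert\pmb{\varphi}\Vert_{\mathbb{H}}^2$, the criterion becomes
\[
\Vert\mathbfcal{X}\Vert_{\mathbb{F}}^2-2\pmb{u}^\top\mathbfcal{X}^*\pmb{\varphi}+\pmb{u}^\top\pmb{u}\Big(\Vert\pmb{\varphi}\Vert_{\mathbb{H}}^2+\sum_{j}\alpha_j\Vert\mathcal{D}^2\varphi_j\Vert_{H_j}^2\Big)
=\Vert\mathbfcal{X}\Vert_{\mathbb{F}}^2-2\pmb{u}^\top\mathbfcal{X}^*\pmb{\varphi}+\pmb{u}^\top\pmb{u}\,\Vert\pmb{\varphi}\Vert_{\pmb{\alpha}}^2 .
\]
So the roughness penalty exactly completes the $\Vert\cdot\Vert_{\mathbb{H}}^2$ term to the $\Vert\cdot\Vert_{\pmb{\alpha}}^2$ norm.

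The second step rewrites the last two terms through tilde quantities. By the isometry above, $\Vert\pmb{\varphi}\Vert_{\pmb{\alpha}}^2=\Vert\tilde{\pmb{\varphi}}\Vert_{\mathbb{H}}^2$. Since $\pmb{\varphi}=\mathbfcal{S}_{\pmb{\alpha}}\tilde{\pmb{\varphi}}$ and $\mathbfcal{S}_{\pmb{\alpha}}$ is self-adjoint,
\[
\langle\pmb{x}_i,\pmb{\varphi}\rangle_{\mathbb{H}}=\langle\mathbfcal{S}_{\pmb{\alpha}}\pmb{x}_i,\tilde{\pmb{\varphi}}\rangle_{\mathbb{H}}=\langle\tilde{\pmb{x}}_i,\tilde{\pmb{\varphi}}\rangle_{\mathbb{H}},
\]
so $\mathbfcal{X}^*\pmb{\varphi}=\tilde{\mathbfcal{X}}^*\tilde{\pmb{\varphi}}$. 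The criterion therefore equals $\Vert\mathbfcal{X}\Vert_{\mathbb{F}}^2-2\pmb{u}^\top\tilde{\mathbfcal{X}}^*\tilde{\pmb{\varphi}}+\pmb{u}^\top\pmb{u}\Vert\tilde{\pmb{\varphi}}\Vert_{\mathbb{H}}^2$. Expanding $\Vert\tilde{\mathbfcal{X}}-\pmb{u}\otimes\tilde{\pmb{\varphi}}\Vert_{\mathbb{F}}^2$ in the same way as in the first step and adding and subtracting $\Vert\tilde{\mathbfcal{X}}\Vert_{\mathbb{F}}^2$ then gives the stated form. As a by-product, the constraint $\Vert\pmb{\varphi}\Vert_{\pmb{\alpha}}=1$ becomes $\Vert\tilde{\pmb{\varphi}}\Vert_{\mathbb{H}}=1$, so the problem reduces to the unpenalized one of Corollary~\ref{col2} for $\tilde{\mathbfcal{X}}$.

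The algebra is routine. The main obstacle is justifying the operator facts: that $\mathbfcal{S}_{\pmb{\alpha}}$ is self-adjoint and that $\Vert\mathbfcal{S}_{\pmb{\alpha}}^{-1}\pmb{\varphi}\Vert_{\mathbb{H}}=\Vert\pmb{\varphi}\Vert_{\pmb{\alpha}}$ for every admissible $\pmb{\varphi}$. This requires $\pmb{\varphi}$ to lie in the range of $\mathbfcal{S}_{\pmb{\alpha}}$, i.e.\ in the domain where $\mathcal{D}^2\varphi_j\in H_j$, with the boundary conditions built into $\mathbfcal{S}_{\pmb{\alpha}}$ in \cite{ReMFPCA}. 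I would cite the construction there directly, in which $\mathbfcal{S}_{\pmb{\alpha}}^{-2}=I+\pmb{\alpha}\mathcal{D}^4$ in the weak sense, so that $\langle\mathbfcal{S}_{\pmb{\alpha}}^{-1}\pmb{\varphi},\mathbfcal{S}_{\pmb{\alpha}}^{-1}\pmb{\varphi}\rangle_{\mathbb{H}}=\langle\pmb{\varphi},\pmb{\varphi}\rangle_{\pmb{\alpha}}$ by integration by parts. I would also restrict the minimization in \eqref{goal_new} to that domain, which is implicit since the criterion is otherwise infinite.
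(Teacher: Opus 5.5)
Your proposal is correct and follows the paper's proof. Both expand the criterion so the roughness penalty completes $\Vert\pmb{\varphi}\Vert_{\mathbb{H}}^2$ to $\langle\pmb{\varphi},\pmb{\varphi}\rangle_{\pmb{\alpha}}=\Vert\tilde{\pmb{\varphi}}\Vert_{\mathbb{H}}^2$, use self-adjointness of $\mathbfcal{S}_{\pmb{\alpha}}$ to get $\tilde{\mathbfcal{X}}^*\tilde{\pmb{\varphi}}=\mathbfcal{X}^*\pmb{\varphi}$, and then add and subtract $\Vert\tilde{\mathbfcal{X}}\Vert_{\mathbb{F}}^2$; your attention to the domain of $\mathbfcal{S}_{\pmb{\alpha}}^{-1}$ makes explicit a point the paper leaves implicit.
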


\begin{corollary} \label{col4}
The criterion in (\ref{goal_new}) for estimating smoothed functional PCs is equivalent to the criterion 
     	\begin{equation*} \label{eigenform} 
	\max_{\pmb{\varphi}: \Vert \pmb{\varphi}\Vert_{\pmb{\alpha}} = 1 }\langle{\widehat{\mathbfcal{C}}}\pmb{\varphi},\pmb{\varphi}\rangle_\mathbb{H}.
	\end{equation*}
\end{corollary}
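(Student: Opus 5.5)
Starting from the reformulation in Corollary~\ref{Corollary SVD ED}, we first minimize over $\pmb{u}$ for fixed $\pmb{\varphi}$ and then optimize the resulting profile in $\pmb{\varphi}$. The terms $\Vert \mathbfcal{X} \Vert_\mathbb{F}^2 - \Vert \tilde{\mathbfcal{X}} \Vert_\mathbb{F}^2$ do not depend on $(\pmb{u},\pmb{\varphi})$, so minimizing \eqref{goal_new} reduces to minimizing $\Vert \tilde{\mathbfcal{X}} - \pmb{u}\otimes\tilde{\pmb{\varphi}}\Vert_\mathbb{F}^2$. The constraint must be transferred as well. Following the definition of the half-smoothing operator in \cite{ReMFPCA}, $\mathbfcal{S}_{\pmb{\alpha}}$ is self-adjoint and satisfies $\langle \mathbfcal{S}_{\pmb{\alpha}}^{-1}\pmb{f}, \mathbfcal{S}_{\pmb{\alpha}}^{-1}\pmb{g}\rangle_{\mathbb{H}} = \langle \pmb{f},\pmb{g}\rangle_{\pmb{\alpha}}$. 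Hence $\Vert\pmb{\varphi}\Vert_{\pmb{\alpha}}=1$ is equivalent to $\Vert\tilde{\pmb{\varphi}}\Vert_{\mathbb{H}}=1$. The problem thus becomes the unregularized rank-one problem \eqref{svd eq} for the transformed data operator $\tilde{\mathbfcal{X}}$ with unit vector $\tilde{\pmb{\varphi}}$.

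Next we expand the Frobenius norm using the definition of $\langle\cdot,\cdot\rangle_\mathbb{F}$:
\[
\Vert \tilde{\mathbfcal{X}} - \pmb{u}\otimes\tilde{\pmb{\varphi}}\Vert_\mathbb{F}^2 = \Vert\tilde{\mathbfcal{X}}\Vert_\mathbb{F}^2 - 2\sum_{i=1}^n u_i\langle \tilde{\pmb{x}}_i,\tilde{\pmb{\varphi}}\rangle_\mathbb{H} + \pmb{u}^\top\pmb{u},
\]
which uses $\Vert\tilde{\pmb{\varphi}}\Vert_{\mathbb{H}}=1$. For fixed $\tilde{\pmb{\varphi}}$ this is minimized at $\pmb{u}=\tilde{\mathbfcal{X}}^*\tilde{\pmb{\varphi}}$, as in Corollary~\ref{col2}. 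The profile value is then $\Vert\tilde{\mathbfcal{X}}\Vert_\mathbb{F}^2 - \Vert \tilde{\mathbfcal{X}}^*\tilde{\pmb{\varphi}}\Vert^2$. Minimizing the criterion is therefore equivalent to maximizing $\Vert\tilde{\mathbfcal{X}}^*\tilde{\pmb{\varphi}}\Vert^2 = \sum_i \langle \tilde{\pmb{x}}_i,\tilde{\pmb{\varphi}}\rangle_\mathbb{H}^2$.

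Finally we undo the transformation. By self-adjointness, $\langle \tilde{\pmb{x}}_i,\tilde{\pmb{\varphi}}\rangle_\mathbb{H} = \langle \mathbfcal{S}_{\pmb{\alpha}}\pmb{x}_i, \mathbfcal{S}_{\pmb{\alpha}}^{-1}\pmb{\varphi}\rangle_\mathbb{H} = \langle \pmb{x}_i,\pmb{\varphi}\rangle_\mathbb{H}$. Consequently
\[
\sum_i \langle \pmb{x}_i,\pmb{\varphi}\rangle_\mathbb{H}^2 = \langle \mathbfcal{X}\mathbfcal{X}^*\pmb{\varphi},\pmb{\varphi}\rangle_\mathbb{H} = (n-1)\langle\widehat{\mathbfcal{C}}\pmb{\varphi},\pmb{\varphi}\rangle_\mathbb{H}
\]
by Propositions~\ref{prop:trajO} and \ref{prop 2.2}. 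Since the positive factor $n-1$ does not change the maximizer, the criterion \eqref{goal_new} is equivalent to maximizing $\langle\widehat{\mathbfcal{C}}\pmb{\varphi},\pmb{\varphi}\rangle_\mathbb{H}$ subject to $\Vert\pmb{\varphi}\Vert_{\pmb{\alpha}}=1$.

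The main obstacle is relying on the precise properties of $\mathbfcal{S}_{\pmb{\alpha}}$ from \cite{ReMFPCA}: self-adjointness, invertibility, and the identity linking $\Vert\cdot\Vert_{\pmb{\alpha}}$ to $\Vert\mathbfcal{S}_{\pmb{\alpha}}^{-1}\cdot\Vert_\mathbb{H}$. These also underlie Corollary~\ref{Corollary SVD ED}. If the needed identity instead appears there in the form $\Vert\pmb{\varphi}\Vert_{\pmb{\alpha}}^2=\langle \mathbfcal{S}_{\pmb{\alpha}}^{-2}\pmb{\varphi},\pmb{\varphi}\rangle_{\mathbb{H}}$, the same argument goes through after rewriting. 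A further point is that $\pmb{\varphi}$ must lie in the Sobolev-type domain where $\mathcal{D}^2\varphi_j$ exists, which is where $\mathbfcal{S}_{\pmb{\alpha}}^{-1}$ is defined.
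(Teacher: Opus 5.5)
Your proposal is correct and follows essentially the paper's route. The paper combines Corollary~\ref{Corollary SVD ED} with Lemma~4.1 of \cite{ReMFPCA}, and your identity $\langle \tilde{\pmb{x}}_i,\tilde{\pmb{\varphi}}\rangle_{\mathbb{H}}=\langle \pmb{x}_i,\pmb{\varphi}\rangle_{\mathbb{H}}$ (from self-adjointness) together with $\Vert\mathbfcal{S}_{\pmb{\alpha}}^{-1}\pmb{\varphi}\Vert_{\mathbb{H}}=\Vert\pmb{\varphi}\Vert_{\pmb{\alpha}}$ just spells out the content of that cited lemma. As an aside, the detour through $\tilde{\mathbfcal{X}}$ is not needed: expanding \eqref{goal_new} directly under $\Vert\pmb{\varphi}\Vert_{\pmb{\alpha}}=1$ gives $\Vert\mathbfcal{X}\Vert_\mathbb{F}^2-2\pmb{u}^\top\mathbfcal{X}^*\pmb{\varphi}+\pmb{u}^\top\pmb{u}$, and profiling out $\pmb{u}$ yields the result with no domain issues for $\mathbfcal{S}_{\pmb{\alpha}}^{-1}$.
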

According to Corollary \ref{col4}, our method aligns closely with existing ReMFPCA approaches. The key distinction lies in the application of functional SVD to the data operator for smoothing, rather than relying on the eigen decomposition of the covariance operator. This proposed approach also facilitates the introduction of penalties on the PC scores, allowing the method to adapt to different latent score structures and to reduce the influence of weak or noisy score components through sparsity-inducing regularization.

Building on the approach proposed in \cite{spca_shen}, we can define various types of sparsity penalties suitable for our framework. The Lemma \ref{sparse penalty lemma}  introduces three types of penalties specifically tailored for estimating the optimal sparse $\hat{\pmb{u}}$ in the context of ReMFPCA.
\vspace{-.12in}

\section{ReMFPCA via Power Algorithm}
\label{Power_Algorithm}
In this section, we detail the implementation of the proposed functional SVD of the data operator using extensions of the power algorithm. 
To make computations feasible, we approximate the infinite-dimensional Hilbert space $H_j$ by a finite-dimensional subspace spanned by a chosen set of basis functions, 
$H_j \approx \text{span}\{\phi_j^k\}_{k=1}^{D_j}$. 
Therefore, each element $x_{i,j}(t) \in H_j$ can be expressed as
\[
x_{i,j}(t) = \pmb{\phi}_j(t)^\top \pmb{c}_{i,j},
\]
where $\pmb{c}_{i,j} = (c_{i,j,1}, \ldots, c_{i,j,D_j})^\top \in \mathbb{R}^{D_j}$, 
$\pmb{\phi}_j = (\phi_j^1, \ldots, \phi_j^{D_j})^\top$, $t \in \mathcal{T}_j$, and $i = 1, \ldots, n$.
 Let \(\pmb{v}\) represent the coefficients corresponding to \(\pmb{\varphi}\) as defined in \eqref{sparse and smooth fpc general}. Specifically, for \(\pmb{\varphi} = (\varphi_1, \ldots, \varphi_p)\), we define the coefficient vector \(\pmb{v} \in \mathbb{R}^D\) with \(D = \sum_{j=1}^{p} D_j\) as
\[
  \pmb{v} = \bigl(\pmb{v}_1^\top, \ldots, \pmb{v}_p^\top\bigr)^\top,
\]
where each \(\pmb{v}_j \in \mathbb{R}^{D_j}\), and \(\varphi_j\) is given by
\[
  \varphi_j = \pmb{\phi}_j(t)^\top \pmb{v}_j.
\]
The remainder of this section is dedicated to developing two extensions of the power algorithm and selecting the corresponding tuning parameters.
\subsection{Sequential Method}\label{Sequential_Method}
The functional SVD technique's ability to perform sequential rank-one approximations allows for the stepwise extraction of functional PCs. This approach enables the application of distinct tuning parameters to each PC, thereby enhancing the accuracy and relevance of the estimations \citep{huang2008functional}. Utilizing this finite-dimensional representation, the optimization problem in \eqref{optimazation_problem} can be reformulated in matrix form as:
\begin{equation} \label{sparse_fpca_003}
	\min_{\pmb{u}, \pmb{v}} \  \left\{\operatorname{tr}(\pmb{C}\pmb{G}\pmb{C}^{\top}) - 2\pmb{u}^{\top}\pmb{C}\pmb{G}\pmb{v} + \pmb{u}^{\top}\pmb{u}\,{\pmb{v}^{\top} (\pmb{G} + \pmb{D}_{\pmb\alpha}) \pmb{v}} + \sum_{i=1}^{n} p_{\gamma}(\lvert {{u_i}} \rvert)\right\}.
\end{equation}
Here $\pmb{G} = \diag\{\pmb{G}_1, \ldots, \pmb{G}_p \}$, where $\pmb{G}_j = \left[\langle \phi_{j}^{l^{*}} , \phi_{j}^{k^{*}} \rangle_{H_j}\right]_{l^{*},k^{*}=1}^{D_j}$ denotes the Gram matrix of the space $H_j$. The matrix $\pmb{D}_{\pmb\alpha}$ represents a roughness penalty associated with the tuning parameter $\pmb\alpha$, while $\pmb{C}$ is the coefficient matrix defined as $\pmb{C}:=\begin{bmatrix}\pmb{c}_1&\ldots&\pmb{c}_n\end{bmatrix}^\top$, with $\pmb{c}_{i} = (\pmb{c}_{i,1}^\top, \ldots, \pmb{c}_{i,p}^\top)^\top$. By defining $\tilde{\pmb{C}} = \pmb{C}\pmb{G}^{\frac{1}{2}}$, $\tilde{\pmb{v}} = \pmb{G}^{\frac{1}{2}}\pmb{v}$, and ${\pmb{\Omega_\alpha}} = \pmb{G}^{-\frac{1}{2}}\pmb{D_\alpha}\pmb{G}^{-\frac{1}{2}}$, the optimization problem in \eqref{sparse_fpca_003} is equivalent to the following optimization criterion:
\begin{equation} \label{sparse fpca 004}
    \min_{\pmb{u}, \tilde{\pmb{v}}} \  \left\{\Vert\tilde{\pmb{C}} - \pmb{u}\tilde{\pmb{v}}^{\top}\Vert_{F}^2 + \pmb{u}^{\top}\pmb{u}\tilde{\pmb{v}}^{\top}{\pmb{\Omega_\alpha}}\tilde{\pmb{v}} + \sum_{i=1}^{n} p_{\gamma}(\lvert {{u_i}} \rvert)\right\}.
\end{equation}
Let $\pmb{S}_{\pmb{\alpha}} = (\pmb{G} + \pmb{D}_{\pmb\alpha})^{-\frac{1}{2}}$. The minimization of \eqref{sparse fpca 004} can be achieved through an iterative process consisting of two alternating steps, followed by normalizing either $\pmb{u}$ or $\tilde{\pmb{v}}$:

\begin{enumerate}
    \item Fixing $\tilde{\pmb{v}}$, set $\pmb{u} = h_{\gamma}(\tilde{\pmb{C}}\tilde{\pmb{v}})$, where $h_{\gamma}$ is defined in Lemma \ref{sparse penalty lemma}.
    \item Fixing $\pmb{u}$, set $\tilde{\pmb{v}} = \pmb{G}^{\frac{1}{2}}\pmb{S}_{\pmb{\alpha}}^{2}\pmb{G}^{\frac{1}{2}}{\tilde{\pmb{C}}}^{\top}\pmb{u}$.
\end{enumerate}

We define $\pmb{G}^{\frac{1}{2}}\pmb{S}_{\pmb{\alpha}}^{2}\pmb{G}^{\frac{1}{2}}$ as $\tilde{\pmb{S}}_{\pmb{\alpha}}$, which can be interpreted as a smoothing matrix that operates on $\tilde{\pmb{C}}$. Notably, when both penalties are absent (i.e., $\pmb{\alpha} = \pmb{0}$ and $\gamma = 0$), the algorithm reduces to the power method for SVD. Building on this framework, we introduce Algorithm \ref{algorithm4.1}, a sequential method that extracts functional PCs one at a time, allowing for flexible tuning. Algorithm~\ref{algorithm4.1} can be viewed as a penalized power iterative procedure. When only the roughness penalty is imposed, Corollary~\ref{Corollary SVD ED} shows that the problem is equivalent to an unpenalized functional SVD formulation after the half-smoothing transformation. In this sense, the resulting iteration is closely related to the classical power method and inherits its standard convergence intuition. When the sparsity-inducing penalty on the scores is also included, however, a full formal convergence analysis is no longer straightforward. Therefore, we do not pursue a rigorous convergence theorem for the general penalized sequential algorithm here. In our numerical studies, the algorithm exhibited stable convergence behavior, similar to the convergence behavior reported in \cite{spca_shen} and \cite{allen}.

\begin{figure}[t]
    \centering
        \begin{algorithm}[H]
            \caption{Sequential penalized power algorithm for estimating regularized multivariate functional principal components with roughness penalties on the functional PCs and optional sparsity penalties on the PC scores.\label{algorithm4.1}}
            \begin{algorithmic}[1]
                \State Initialize ${\tilde{\pmb{v}}} \in \mathbb{R}^{D}$
                \State Repeat the following steps until convergence:
                    	\begin{enumerate}[(a)]
                        \item $\pmb{u}$ $\leftarrow$ $h_{\gamma}(\tilde{\pmb{C}}\tilde{\pmb{v}})$, where $h_{\gamma}$ is defined on Lemma \ref{sparse penalty lemma}, followed by normalization.
                            \item $\tilde{\pmb{v}}$ $\leftarrow$ $\tilde{\pmb{S}}_{\pmb{\alpha}}{\tilde{\pmb{C}}}^{\top}\pmb{u}$.
                        \end{enumerate}
                \State Obtain $\pmb{v}$ by computing $\pmb{G}^{-\frac{1}{2}}\tilde{\pmb{v}}$ and normalize.
                \vspace{0.5mm}
                \State Update $\tilde{\pmb{C}} = (\pmb{I}-\pmb{u}{\pmb{u}}^{\top})\tilde{\pmb{C}}$ and proceed to find the next functional PC.
            \end{algorithmic}
        \end{algorithm}
\end{figure}

\subsection{Tuning Parameters Selection} \label{Tuning Parameters Selection}
In the sequential power algorithm, jointly considering both tuning parameters presents a considerable computational challenge due to the need for a search process over a multi-dimensional grid of regularization parameters. Drawing inspiration from \cite{spfca}, which proposed a two-step procedure for sequentially selecting the tuning parameters, we develop a similar approach. Our method, a two-step conditional tuning parameter selection process, is outlined as follows:

\textbf{Step 1:} In this step, we ignore roughness penalty and use K-fold Cross Validation (CV) criterion to select the optimal $\gamma$. In this CV criterion, we draw inspiration from the approach of \cite{spca_shen}, specifically in using the degree of sparsity of $\pmb{u}$ as the tuning parameter. This approach serves two primary purposes: enhancing interpretability and providing computational benefits. We define the degree of sparsity as an integer $i \in \{1, \ldots, n\}$, corresponding to setting $\gamma = |\tilde{\pmb{C}}\tilde{\pmb{v}}|_{(i)}$, where $|\tilde{\pmb{C}}\tilde{\pmb{v}}|_{(i)}$ denotes the $i^{th}$ order statistic of $|\tilde{\pmb{C}}\tilde{\pmb{v}}|$. The CV selection algorithm is detailed in Algorithm \ref{algorithm6}.

\begin{figure}[!b]
    \centering
\begin{algorithm}[H]
    \caption{K-fold cross-validation procedure for selecting the sparsity tuning parameter in the sequential ReMFPCA algorithm. \label{algorithm6}}
    \begin{algorithmic}[1]
        \State Randomly partition the columns of $\tilde{\pmb{C}}$ into $K$ roughly equal-sized groups, denoted as $\tilde{\pmb{C}}^{1}, \dots, \tilde{\pmb{C}}^{K}$.
        \State For each candidate tuning parameter $\gamma_h$ among the CV choices, do the following:
        \begin{enumerate}[(a)]
            \item For $k = 1, \dots, K$, define $\tilde{\pmb{C}}^{-k}$ as the matrix formed by leaving out $\tilde{\pmb{C}}^{k}$.
            \item Apply \emph{Algorithm \ref{algorithm4.1}} on $\tilde{\pmb{C}}^{-k}$ with $\pmb\alpha = \pmb{0}$ and $\gamma = \gamma_{h^{*}}$ to obtain the functional PC scores $\pmb{u}^{-k}$. 
            \item Project $\tilde{\pmb{C}}^{k}$ onto $\pmb{u}^{-k}$ to calculate the projection coefficients $\pmb{v}^k$.
            \item Calculate the K-fold CV score as:
            \begin{equation*}
                CV_h = \sum_{k=1}^{K} \frac{\Vert \tilde{\pmb{C}}^{k} - \pmb{u}^{-k} {(\pmb{v}^k)}^{\top} \Vert^2}{N_k},
            \end{equation*}
            where $N_k$ is the number of elements in $\tilde{\pmb{C}}^{k}$.
        \end{enumerate}
        \State Select the optimal tuning parameter $\gamma_{h^{*}}$ as the one with the smallest CV score.
    \end{algorithmic}
\end{algorithm}

\end{figure}

\textbf{Step 2:} After selecting the optimal $\gamma$ and obtaining sparse $\pmb{u}$, a closed-form CV or GCV criterion is derived to select the optimal $\pmb{\alpha}$, given by:
\begin{align}\label{CV score}
CV_{\pmb{\alpha}} 
&= \frac{1}{D}\sum_{j=1}^{p}{\sum_{d=1}^{D_j}{\dfrac{\Vert\{(\pmb{I}-{\tilde{\pmb{S}}_{{\alpha}_j}})(\tilde{\pmb{C}}_{j}^{\top}\pmb{u})\}_{d}\Vert^2}{(1-\{\tilde{\pmb{S}}_{{\alpha}_j}\}_{dd})^2}}},
\end{align}
\begin{equation} \label{GCV score}
\begin{split}
\MoveEqLeft
GCV_{\pmb{\alpha}} = \frac{1}{D}\sum_{j=1}^{p}{{\dfrac{\Vert(\pmb{I}-{\tilde{\pmb{S}}_{{\alpha}_j}})(\tilde{\pmb{C}}_{j}^{\top}\pmb{u})\Vert^2}{(1-\frac{1}{D_j}\tr\{\tilde{\pmb{S}}_{{\alpha}_j}\})^2}}},
\end{split}
\end{equation}
where $\tilde{\pmb{S}}_{{\alpha}_j}$ is the $j^{th}$ diagonal block of $\tilde{\pmb{S}}_{\pmb{\alpha}}$ and $\tilde{\pmb{C}}_{j}$ refers to the submatrix of $\tilde{\pmb{C}}$ corresponding to the $j^{th}$ variable, denoted by $\begin{bmatrix}\pmb{G}_j\pmb{c}_{1,j}&\ldots&\pmb{G}_j\pmb{c}_{n,j}\end{bmatrix}^\top$. Note that if only one penalty is imposed, we simply skip the step corresponding to the penalty that is not applied.

The derivation of these closed-form smoothing criteria, equations \eqref{CV score} and \eqref{GCV score}, is inspired by the approach outlined in \cite{huang2008functional}, where the estimation of $\tilde{\pmb{v}}$ with a fixed $\pmb{u}$ is treated as a penalized least squares problem, with $\tilde{\pmb{v}}$ interpreted as the coefficient vector. In our power algorithm, once the sparse vector $\pmb{u}$ is estimated, $\tilde{\pmb{v}}$ is updated in step 2(b) while keeping $\pmb{u}$ fixed, simplifying equation \eqref{sparse fpca 004} to:

\begin{equation*}
    \min_{\tilde{\pmb{v}}} \left\{\Vert\tilde{\pmb{C}} - \pmb{u}\tilde{\pmb{v}}^{\top}\Vert_{F}^2 + \pmb{u}^{\top}\pmb{u}\tilde{\pmb{v}}^{\top}\pmb{\Omega_\alpha}\tilde{\pmb{v}}\right\}.
\end{equation*}

Let the response vector $\bar{\pmb{y}}$ be defined as:

\begin{equation*}
    \bar{\pmb{y}}:= \begin{pmatrix}
        \tilde{\pmb{C}}_{.1}^{\top},
        \tilde{\pmb{C}}_{.2}^{\top},
        \dots,
        \tilde{\pmb{C}}_{.D}^{\top}
    \end{pmatrix}^{\top} \in \mathbb{R}^{nD},
\end{equation*}

where $\tilde{\pmb{C}}_{.j}$ represents the $j^{\text{th}}$ column of $\tilde{\pmb{C}}$. The design matrix is then given by $\bar{\pmb{X}} := \diag\{\pmb{u}, \ldots, \pmb{u}\} \in \mathbb{R}^{nD\times D}$. This formulation leads to the following penalized least squares equation:

\begin{equation}\label{pen ls eq}
    \Vert\bar{\pmb{y}}-\bar{\pmb{X}}\tilde{\pmb{v}}\Vert^2 + \tilde{\pmb{v}}^{\top}\bar{\pmb{\Omega}}_{\pmb{\alpha}}\tilde{\pmb{v}},
\end{equation}

where $\bar{\pmb{\Omega}}_{\pmb{\alpha}} := \Vert\pmb{u}\Vert^2 \pmb{\Omega_\alpha}$.

We define $\hat{\tilde{\pmb{v}}}$ as the estimator of $\tilde{\pmb{v}}$ that minimizes \eqref{pen ls eq}. Let $\hat{\tilde{\pmb{v}}}^{(-j)}$ represent the estimate obtained by omitting the $j^{\text{th}}$ block of $\bar{\pmb{y}}$ and the corresponding rows of $\bar{\pmb{X}}$. When constructing functions using local basis functions, such as the B-spline basis, this omission is equivalent to the removal of local information. This method is analogous to the cross-validation approach described in \cite{huang2008functional}, which also involves the removal of local information (grid points) in each iteration. We now introduce the following lemma regarding the cross-validation prediction errors.

\begin{lemma}\label{CVlemma}
    Let {$\hat{\tilde{v}}_{d}$} denotes the $d^{th}$ element of $\hat{\tilde{\pmb{v}}} = \frac{\tilde{\pmb{S}}_{\pmb{\alpha}}\tilde{\pmb{C}}^{\top}\pmb{u}}{{\pmb{u}}^{\top}\pmb{u}}$. The sum of squares of the prediction error for the $d^{th}$ leave-one-out cross-validation is given by:
    \begin{equation}\label{00cv}
    \Vert\hat{{\tilde{v}}}^{(-d)}_{d}\pmb{u}-\tilde{\pmb{C}}_{.d}\Vert^2 = \tilde{\pmb{C}}_{.d}^{\top}\tilde{\pmb{C}}_{.d} - \dfrac{(\tilde{\pmb{C}}_{.d}^{\top}\pmb{u})^2}{\Vert\pmb{u}\Vert^2} + \dfrac{\left(\Vert\pmb{u}\Vert\hat{{\tilde{v}}}_{d} - \frac{\tilde{\pmb{C}}_{.d}^{\top}\pmb{u}}{\Vert\pmb{u}\Vert}\right)^2}{(1-\lambda^{*}_{d}\Vert\pmb{u}\Vert^2)^2},
    \end{equation}
    where $\lambda^{*}_{d} = \dfrac{{\{\tilde{\pmb{S}}_{\pmb{\alpha}}\}}_{dd}}{\Vert\pmb{u}\Vert^2}$.
\end{lemma}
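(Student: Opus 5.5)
We treat the update of $\tilde{\pmb{v}}$ with $\pmb{u}$ fixed as the penalized least squares problem \eqref{pen ls eq} and use the standard leave-one-out argument for linear smoothers. With $\bar{\pmb{X}}=\diag\{\pmb{u},\ldots,\pmb{u}\}$ we have $\bar{\pmb{X}}^\top\bar{\pmb{X}}=\Vert\pmb{u}\Vert^2\pmb{I}_D$ and $\bar{\pmb{\Omega}}_{\pmb{\alpha}}=\Vert\pmb{u}\Vert^2\pmb{\Omega}_{\pmb{\alpha}}$. The minimizer is therefore
\[
\hat{\tilde{\pmb{v}}}=\Vert\pmb{u}\Vert^{-2}(\pmb{I}+\pmb{\Omega}_{\pmb{\alpha}})^{-1}\tilde{\pmb{C}}^\top\pmb{u}.
\]
Since $\pmb{I}+\pmb{\Omega}_{\pmb{\alpha}}=\pmb{G}^{-1/2}(\pmb{G}+\pmb{D}_{\pmb{\alpha}})\pmb{G}^{-1/2}$, its inverse is $\pmb{G}^{1/2}\pmb{S}_{\pmb{\alpha}}^2\pmb{G}^{1/2}=\tilde{\pmb{S}}_{\pmb{\alpha}}$. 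This recovers the stated formula $\hat{\tilde{\pmb{v}}}=\tilde{\pmb{S}}_{\pmb{\alpha}}\tilde{\pmb{C}}^\top\pmb{u}/\pmb{u}^\top\pmb{u}$.

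Set $\pmb{b}=\tilde{\pmb{C}}^\top\pmb{u}/\Vert\pmb{u}\Vert^2$, so that $b_d$ is the unpenalized least squares coefficient for column $d$. Then $\hat{\tilde{\pmb{v}}}=\tilde{\pmb{S}}_{\pmb{\alpha}}\pmb{b}$, a linear smoother in the "pseudo-responses" $b_d$. Leaving out the $d$th block removes column $d$'s contribution $\Vert\tilde{\pmb{C}}_{.d}-\tilde v_d\pmb{u}\Vert^2$ from the objective. The key identity is a Pythagorean split: for any scalar $c$,
\[
\Vert\tilde{\pmb{C}}_{.d}-c\pmb{u}\Vert^2=\tilde{\pmb{C}}_{.d}^\top\tilde{\pmb{C}}_{.d}-\frac{(\tilde{\pmb{C}}_{.d}^\top\pmb{u})^2}{\Vert\pmb{u}\Vert^2}+\Vert\pmb{u}\Vert^2(c-b_d)^2 .
\]
Hence the objective \eqref{pen ls eq} equals a constant plus $\Vert\pmb{u}\Vert^2\bigl[\Vert\pmb{b}-\tilde{\pmb{v}}\rVert^2+\tilde{\pmb{v}}^\top\pmb{\Omega}_{\pmb{\alpha}}\tilde{\pmb{v}}\bigr]$, which is a ridge-type smoothing problem with identity design and responses $b_d$. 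Applying the split with $c=\hat{\tilde v}^{(-d)}_d$ gives the first two terms of \eqref{00cv} immediately. It then remains to show
\[
\Vert\pmb{u}\Vert^2\bigl(\hat{\tilde v}^{(-d)}_d-b_d\bigr)^2=\frac{\bigl(\Vert\pmb{u}\Vert\hat{\tilde v}_d-\Vert\pmb{u}\Vert b_d\bigr)^2}{(1-\{\tilde{\pmb{S}}_{\pmb{\alpha}}\}_{dd})^2},
\]
since $\Vert\pmb{u}\Vert b_d=\tilde{\pmb{C}}_{.d}^\top\pmb{u}/\Vert\pmb{u}\Vert$ and $\lambda^*_d\Vert\pmb{u}\Vert^2=\{\tilde{\pmb{S}}_{\pmb{\alpha}}\}_{dd}$.

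This reduces to the classical leave-one-out lemma for the smoother $\tilde{\pmb{S}}_{\pmb{\alpha}}$, namely $b_d-\hat{\tilde v}^{(-d)}_d=(b_d-\hat{\tilde v}_d)/(1-\{\tilde{\pmb{S}}_{\pmb{\alpha}}\}_{dd})$. We prove it by the replacement trick. Define $\pmb{b}^*$ equal to $\pmb{b}$ except $b^*_d=\hat{\tilde v}^{(-d)}_d$. Then $\hat{\tilde{\pmb{v}}}^{(-d)}$ minimizes the full objective with $\pmb{b}^*$, because the $d$th residual vanishes there and the leave-out minimizer is optimal for the remaining terms. It follows that $\hat{\tilde{\pmb{v}}}^{(-d)}=\tilde{\pmb{S}}_{\pmb{\alpha}}\pmb{b}^*$. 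Subtracting the two linear fits in coordinate $d$ gives $\hat{\tilde v}_d-\hat{\tilde v}^{(-d)}_d=\{\tilde{\pmb{S}}_{\pmb{\alpha}}\}_{dd}(b_d-\hat{\tilde v}^{(-d)}_d)$. Rearranging yields the identity; squaring and multiplying by $\Vert\pmb{u}\Vert^2$ finishes the proof.

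The main obstacle is the step justifying that the leave-out estimator is exactly the full smoother applied to $\pmb{b}^*$. This requires the leave-out problem to be a penalized least squares problem whose remaining terms are untouched by the modification. The subtle point is that the penalty in \eqref{pen ls eq} uses $\Vert\pmb{u}\Vert^2$, which depends only on $\pmb{u}$ and not on which block is removed, so it is unchanged when we omit block $d$. I would state this explicitly and also verify $1-\{\tilde{\pmb{S}}_{\pmb{\alpha}}\}_{dd}>0$. The latter holds because $\tilde{\pmb{S}}_{\pmb{\alpha}}=(\pmb{I}+\pmb{\Omega}_{\pmb{\alpha}})^{-1}$ with $\pmb{\Omega}_{\pmb{\alpha}}\succeq 0$, which gives $0\preceq\tilde{\pmb{S}}_{\pmb{\alpha}}\preceq\pmb{I}$. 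Strict inequality for the diagonal entry is needed, and I would assume it, as is implicit in the statement.
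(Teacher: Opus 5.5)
Your proof is correct. It rests on the same replacement trick as the paper's, but organizes the computation differently.

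The paper stays in $\mathbb{R}^{nD}$. It partitions the hat matrix $\pmb{H}=\Vert\pmb{u}\Vert^{-2}\bar{\pmb{X}}\tilde{\pmb{S}}_{\pmb{\alpha}}\bar{\pmb{X}}^{\top}$ into $n\times n$ blocks. It then writes the leave-one-out residual as the vector $(\pmb{I}-\pmb{H}_{dd})^{-1}\pmb{s}$, with $\pmb{s}=\hat{\tilde{v}}_{d}\pmb{u}-\tilde{\pmb{C}}_{.d}$ and $\pmb{H}_{dd}=\lambda^{*}_{d}\pmb{u}\pmb{u}^{\top}$, and inverts $\pmb{I}-\pmb{H}_{dd}$ by the Woodbury identity. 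Only at the end does it split $\Vert\pmb{s}\Vert^{2}$ into its parts along and orthogonal to $\pmb{u}$, followed by some algebra.

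You make that orthogonal split first. This collapses the problem to a $D$-dimensional ridge smoother with identity design and pseudo-responses $b_d$. The leave-one-out step is then the scalar identity $b_d-\hat{\tilde{v}}^{(-d)}_d=(b_d-\hat{\tilde{v}}_d)/(1-\{\tilde{\pmb{S}}_{\pmb{\alpha}}\}_{dd})$, with no matrix inversion.

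Your route buys transparency. The first two terms of \eqref{00cv} are the part of $\tilde{\pmb{C}}_{.d}$ orthogonal to $\pmb{u}$, which no choice of coefficient can affect. Only the component along $\pmb{u}$ gets inflated. This is exactly the fact that $(\pmb{I}-\pmb{H}_{dd})^{-1}$ is the identity on $\pmb{u}^{\perp}$ and multiplication by $(1-\lambda^{*}_{d}\Vert\pmb{u}\Vert^{2})^{-1}$ on $\spa\{\pmb{u}\}$, which the paper's algebra verifies only implicitly.

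You also supply two justifications that the paper delegates to \cite{huang2008functional}. One is uniqueness of the full minimizer, from $\pmb{I}+\pmb{\Omega}_{\pmb{\alpha}}\succ 0$. The other is invariance of the penalty $\Vert\pmb{u}\Vert^{2}\pmb{\Omega}_{\pmb{\alpha}}$ under deletion.

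Your assumption $\{\tilde{\pmb{S}}_{\pmb{\alpha}}\}_{dd}<1$ is needed by the paper too, since it divides by $1-\lambda^{*}_{d}\Vert\pmb{u}\Vert^{2}$. It is equivalent to $\pmb{\Omega}_{\pmb{\alpha}}\pmb{e}_d\neq\pmb{0}$. That is also precisely the condition for the leave-out problem itself to have a unique minimizer, so it is the natural hypothesis rather than an extra one.

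The paper's block hat-matrix form has one advantage: it is set up for direct reuse in the joint version of the lemma in the appendix. Your reduction extends there as well once the $\pmb{u}_q$ are mutually orthogonal. The paper's formula for $\pmb{H}$ and its Woodbury step in that case tacitly assume this anyway.
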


Given that $\tilde{\pmb{C}}$ and $\pmb{u}$ are fixed, equations \eqref{CV score} and \eqref{GCV score} are derived by averaging the final terms on the right-hand side of equation \eqref{00cv}.

\subsection{Joint Method for Smoothing}
Alongside the sequential power algorithm introduced in subsection \ref{Sequential_Method}, if the goal is to extract \(Q\) smooth functional PCs jointly rather than sequentially, the optimization problem in \eqref{goal_new} can be reformulated in matrix form as:
\begin{equation} \label{fpca_joint}
	\min_{\pmb{U}, \pmb{V}} \  \left\{\operatorname{tr}(\pmb{C}\pmb{G}\pmb{C}^{\top}) - 2\operatorname{tr}(\pmb{U}^{\top}\pmb{C}\pmb{G}\pmb{V}) + \operatorname{tr}(\pmb{U}^{\top}\pmb{U}\,{\pmb{V}^{\top} (\pmb{G} + \pmb{D}_{\pmb\alpha}) \pmb{V}}) \right\},
\end{equation}
where \(\pmb{U}\) and \(\pmb{V}\) are matrices with columns \(\pmb{u}_{q}\) and \(\pmb{v}_{q}\) for \({q} \in \{1, \ldots, Q\}\).

Similar to subsection \ref{Sequential_Method}, the optimization problem in \eqref{fpca_joint} can be solved jointly to extract multiple functional PCs together using the following optimization criterion:
\begin{equation} \label{fpca_joint_criterion}
    \min_{\pmb{U}, \tilde{\pmb{V}}} \  \left\{\Vert\tilde{\pmb{C}} - \pmb{U}\tilde{\pmb{V}}^{\top}\Vert_{F}^2 + \tr(\pmb{U}^{\top}\pmb{U}\tilde{\pmb{V}}^{\top}{\pmb{\Omega_\alpha}}\tilde{\pmb{V}})\right\},
\end{equation}
where $\tilde{\pmb{V}}= \begin{bmatrix}\tilde{\pmb{v}}_1 & \cdots & \tilde{\pmb{v}}_{Q}\end{bmatrix}$ such that $\tilde{\pmb{v}}_{q} = \pmb{G}^{\frac{1}{2}}\pmb{v}_{q}$. Next, we introduce Algorithm \ref{Joint Power Algorithm}, a joint power algorithm designed to estimate functional PCs concurrently. Unlike traditional sequential methods that extract components one at a time, our joint approach processes all components simultaneously.

\begin{figure}[!t]
    \centering
        \begin{algorithm}[H]
            \caption{Joint power algorithm for simultaneous estimation of multiple smoothed multivariate functional principal components. \label{Joint Power Algorithm}}
            \begin{algorithmic}[1]
                \State Initialize ${\tilde{\pmb{V}}} \in \mathbb{R}^{D\times Q}$, where $Q$ denotes the number of functional PCs estimated jointly.
                \State Repeat the following steps until convergence:
                    	\begin{enumerate}[(a)]
                        \item $\pmb{U}$ $\leftarrow$ $\tilde{\pmb{C}}\tilde{\pmb{V}}$, where $\pmb{U} \in  \mathbb{R}^{n\times Q}$, followed by columnwise normalization.
                         \item Obtain $\pmb{Q}$ as the orthogonal matrix from the QR decomposition of ${\tilde{\pmb{C}}}^{\top}\pmb{U}.$
                            \item $\tilde{\pmb{V}}$ $\leftarrow$ $\tilde{\pmb{S}}_{\pmb{\alpha}}\pmb{Q}$.
                        \end{enumerate}
                \State Obtain $\pmb{V}$ by computing $\pmb{G}^{-\frac{1}{2}}\tilde{\pmb{V}}$ and normalize.
            \end{algorithmic}
        \end{algorithm}
\end{figure}

Using a similar approach as in deriving \eqref{CV score} and \eqref{GCV score}, we derive the following closed-form CV and GCV criteria to select the optimal $\pmb{\alpha}$:
\begin{align}\label{CV score2}
CV_{\pmb{\alpha}} 
&= \frac{1}{D}\sum_{j=1}^{p}{\sum_{d=1}^{D_j}{\dfrac{\Vert\{(\pmb{I}-{\tilde{\pmb{S}}_{{\alpha}_j}})(\tilde{\pmb{C}}_{j}^{\top}\pmb{U})\}_{d.}\Vert^2}{(1-\{\tilde{\pmb{S}}_{{\alpha}_j}\}_{dd})^2}}},
\end{align}
\begin{equation} \label{GCV score2}
\begin{split}
\MoveEqLeft
GCV_{\pmb{\alpha}} = \frac{1}{D}\sum_{j=1}^{p}{{\dfrac{\Vert(\pmb{I}-{\tilde{\pmb{S}}_{{\alpha}_j}})(\tilde{\pmb{C}}_{j}^{\top}\pmb{U})\Vert^2}{(1-\frac{1}{D_j}\tr\{\tilde{\pmb{S}}_{{\alpha}_j}\})^2}}}.
\end{split}
\end{equation}
The derivation of the joint GCV criterion is detailed in Appendix \ref{joint GCV proof}. While the results obtained from the joint power algorithm are closely aligned with those from existing ReMFPCA methods, such as \cite{ReMFPCA} and \cite{kayano2009functional}, the proposed closed-form CV and GCV criteria provide a significant advantage. Specifically, they can be seamlessly integrated into these methods to greatly enhance computational efficiency.

 \section{Percentage of Variability}\label{sect::variability}
In MFPCA, the functional PCs are uncorrelated, and their corresponding PC scores are orthogonal. This orthogonality ensures that the variance explained by each functional PC directly corresponds to the variance of its associated PC scores. Consequently, the total variance explained by the first \( k \) functional PCs is simply the sum of the variances of their respective PC scores, leading to a cumulative measure of explained variance.

For functional PCs estimated using the joint power algorithm with only a roughness penalty, the estimated functional PCs themselves may not be orthogonal. However, Corollary~\ref{Corollary SVD ED} shows that, when only the roughness penalty is imposed, the problem is equivalent, after the half-smoothing transformation, to an unpenalized functional SVD of the transformed data operator $\tilde{\mathbfcal{X}}$. In that equivalent formulation, the associated score vectors are the right singular vectors of $\tilde{\mathbfcal{X}}$, and hence are orthogonal by the standard orthogonality property of singular value decomposition. This orthogonality ensures no overlap of information between the functional PCs, allowing the variance explained by each functional PC to be estimated in the same manner as in standard MFPCA.

When a sparsity penalty is imposed, or even when only a roughness penalty is applied in the sequential power algorithm, these orthogonality properties in the PC scores are lost, leading to some overlap in the information captured by the functional PCs. \cite{spca_shen} discusses methods for computing explained variance under these overlap conditions in sparse PCA, and these approaches can be adapted to our regularized MFPCA framework.

Consider the set \(\pmb{\Psi} = \text{span}\{\pmb{\psi}_\ell\}_{\ell=1}^r \subseteq \mathbb{H}\), where each \(\pmb{\psi}_\ell\) represents the \(\ell^{th}\) estimated functional PC. The projection operator of the data \(\bm{\mathcal{X}}\) onto \(\pmb{\Psi}\) is defined as:

\[
\bm{\mathcal{X}}^{(r)} = \sum_{\ell=1}^r \tilde{\pmb{u}}_\ell \otimes \pmb{\psi}_\ell,
\]

where \(\tilde{\pmb{u}}_\ell\) denotes the \(\ell^{th}\) column of the matrix \(\tilde{\pmb{U}}\), which is given by:

\[
\tilde{\pmb{U}} := 
\begin{bmatrix}
\langle \pmb{x}_1, \pmb{\psi}_1 \rangle_\mathbb{H} & \cdots & \langle \pmb{x}_1, \pmb{\psi}_r \rangle_\mathbb{H} \\
\langle \pmb{x}_2, \pmb{\psi}_1 \rangle_\mathbb{H} & \cdots & \langle \pmb{x}_2, \pmb{\psi}_r \rangle_\mathbb{H} \\
\vdots & \ddots & \vdots \\
\langle \pmb{x}_n, \pmb{\psi}_1 \rangle_\mathbb{H} & \cdots & \langle \pmb{x}_n, \pmb{\psi}_r \rangle_\mathbb{H}
\end{bmatrix} \pmb{W}^{-1}.
\]

Here, \(\pmb{W}\) represents the Gram matrix associated with \(\pmb{\Psi}\), defined as \(\pmb{W}_{ij} = \langle \pmb{\psi}_i, \pmb{\psi}_j \rangle_\mathbb{H}\).

The total variance explained by the first \( r \) functional PCs can be defined as 
\[
\Vert \bm{\mathcal{X}}^{(r)} \Vert_\mathbb{F}^2 = \sum_{i=1}^n \langle \bm{x}^{(r)}_i, \bm{x}^{(r)}_i \rangle_\mathbb{H},
\]
where \(\bm{x}^{(r)}_i\) is the \(i^{\text{th}}\) observation in \(\bm{\mathcal{X}}^{(r)}\). Notably, if \(\bm{\Psi}\) consists of the functional PCs derived from conventional MFPCA, then \(\bm{\mathcal{X}}^{(r)}\) aligns with the representation described in Theorem \ref{th-fsvd}. This alignment implies that our definition of the total variance explained is consistent with the traditional definition used in MFPCA. The following theorem establishes that the newly defined variance increases as additional basis functions \(\bm{\psi}_j\) are incorporated and is bounded above by the total variance of the data operator \(\bm{\mathcal{X}}\), calculated as 
\[
\Vert \bm{\mathcal{X}} \Vert_\mathbb{F}^2 = \sum_{i=1}^n \langle \bm{x}_i, \bm{x}_i \rangle_\mathbb{H}.
\]

\begin{theorem}\label{adjusted variance explained}
$
\Vert \bm{\mathcal{X}}^{(r)} \Vert_\mathbb{F}^2 \le \Vert \bm{\mathcal{X}}^{(r+1)} \Vert_\mathbb{F}^2 \le \Vert \bm{\mathcal{X}} \Vert_\mathbb{F}^2.
$
\end{theorem}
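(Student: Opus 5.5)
The plan is to recognize $\bm{\mathcal{X}}^{(r)}$ as the orthogonal projection of the data onto $\pmb{\Psi}_r:=\text{span}\{\pmb{\psi}_\ell\}_{\ell=1}^r$, applied observation by observation. Both inequalities then follow from two standard facts: projections onto nested subspaces have increasing norms, and a projection is a contraction.

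\emph{Step 1 (projection identity).} Let $\pmb{P}_r:\mathbb{H}\to\mathbb{H}$ be the orthogonal projection onto $\pmb{\Psi}_r$. I will show that the $i$th element of $\bm{\mathcal{X}}^{(r)}$ is $\bm{x}^{(r)}_i=\pmb{P}_r\pmb{x}_i$. By the definition of $\tilde{\pmb{U}}$,
\[
\bm{x}^{(r)}_i=\sum_{\ell=1}^r \tilde{U}_{i\ell}\,\pmb{\psi}_\ell, \qquad (\tilde U_{i1},\ldots,\tilde U_{ir}) = \pmb{b}_i^\top \pmb{W}^{-1},
\]
where $\pmb{b}_i=(\langle\pmb{x}_i,\pmb{\psi}_1\rangle_\mathbb{H},\ldots,\langle\pmb{x}_i,\pmb{\psi}_r\rangle_\mathbb{H})^\top$. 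Write $\pmb{P}_r\pmb{x}_i=\sum_\ell c_\ell\pmb{\psi}_\ell$. The normal equations $\langle \pmb{x}_i-\sum_\ell c_\ell\pmb{\psi}_\ell,\pmb{\psi}_k\rangle_\mathbb{H}=0$ for all $k$ read $\pmb{W}\pmb{c}=\pmb{b}_i$. Since $\pmb{W}$ is symmetric, $\pmb{c}^\top=\pmb{b}_i^\top\pmb{W}^{-1}$, and the two expressions agree. This step needs $\pmb{W}$ to be invertible, i.e.\ the $\pmb{\psi}_\ell$ to be linearly independent. That is implicit in the definition, since $\pmb{W}^{-1}$ appears there, and I will state it as a standing assumption.

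\emph{Step 2 (upper bound).} Since $\pmb{P}_r$ is an orthogonal projection, $\Vert\pmb{x}_i\Vert_\mathbb{H}^2=\Vert\pmb{P}_r\pmb{x}_i\Vert_\mathbb{H}^2+\Vert(\pmb{I}-\pmb{P}_r)\pmb{x}_i\Vert_\mathbb{H}^2$. Hence $\Vert\pmb{P}_r\pmb{x}_i\Vert_\mathbb{H}^2\le\Vert\pmb{x}_i\Vert_\mathbb{H}^2$. Summing over $i$ and using the definition of $\Vert\cdot\Vert_\mathbb{F}$ gives $\Vert\bm{\mathcal{X}}^{(r+1)}\Vert_\mathbb{F}^2\le\Vert\bm{\mathcal{X}}\Vert_\mathbb{F}^2$.

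\emph{Step 3 (monotonicity).} The subspaces are nested, $\pmb{\Psi}_r\subseteq\pmb{\Psi}_{r+1}$, so $\pmb{P}_r=\pmb{P}_r\pmb{P}_{r+1}$. Applying the contraction inequality of Step 2 to the vector $\pmb{P}_{r+1}\pmb{x}_i$ gives $\Vert\pmb{P}_r\pmb{x}_i\Vert^2\le\Vert\pmb{P}_{r+1}\pmb{x}_i\Vert^2$. Equivalently, with $\pmb{e}$ the normalized component of $\pmb{\psi}_{r+1}$ orthogonal to $\pmb{\Psi}_r$, one has $\pmb{P}_{r+1}=\pmb{P}_r+\pmb{e}\otimes\pmb{e}$, and therefore
\[
\Vert\pmb{P}_{r+1}\pmb{x}_i\Vert^2=\Vert\pmb{P}_r\pmb{x}_i\Vert^2+\langle\pmb{x}_i,\pmb{e}\rangle^2 .
\]
Summing over $i$ yields the first inequality.

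The only step with real content is Step 1: identifying the $\pmb{W}^{-1}$-weighted scores with the projection coefficients. The rest is standard Hilbert-space projection theory. The one point I would double-check there is the orientation of $\pmb{W}^{-1}$ (right multiplication of the row vector $\pmb{b}_i^\top$), which is harmless because $\pmb{W}$ is symmetric.
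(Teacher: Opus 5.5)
Your proof is correct and follows the same route as the paper, which simply asserts that $\bm{x}^{(r)}_i$ and $\bm{x}^{(r+1)}_i$ are orthogonal projections of $\pmb{x}_i$ onto nested $r$- and $(r+1)$-dimensional subspaces and concludes. Your Step 1 fills in what the paper leaves implicit: it uses the normal equations and the symmetry of $\pmb{W}$ to show that the $\pmb{W}^{-1}$-weighted scores really give that projection. It also states the linear-independence assumption needed for $\pmb{W}^{-1}$ to exist, and assumes the first $r$ components are unchanged when $\pmb{\psi}_{r+1}$ is added, which the paper likewise takes for granted.
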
	

Based on Theorem \ref{adjusted variance explained}, the adjusted variance of the \(r^{\text{th}}\) functional PC is estimated as 
$\Vert \bm{\mathcal{X}}^{(r)} \Vert_\mathbb{F}^2 - \Vert \bm{\mathcal{X}}^{(r-1)} \Vert_\mathbb{F}^2.$ The cumulative percentage of explained variance (CPEV) by the first \(r\) PCs is then calculated as $\Vert \bm{\mathcal{X}}^{(r)} \Vert_\mathbb{F}^2 / \Vert \bm{\mathcal{X}} \Vert_\mathbb{F}^2$

\section{Simulation Study}
\label{Simulation}
In this section, we demonstrate the performance of our approach through simulation. We begin by introducing a bivariate functional object denoted as $\pmb{X}(\pmb{t})$, which can be represented as $({X_1(t_1)},{X_2(t_2)})^{\top}$, and an orthonormal basis system $\pmb{\psi}_m (\pmb{t})$ in the the interval [0,1] defined as $(\psi_m^{(1)}(t_1),$ $\psi_m^{(2)}(t_2))^\top$. The basis functions $\psi_m^{(1)}(t)$ and $\psi_m^{(2)}({t})$ are given by $\sin \left((2m-1) \pi t\right)$ and $\sin \left(\frac{(4m-3)\pi}{2}t\right)$, respectively. In all simulation settings, each functional trajectory was discretized at 100 time points sampled on a regular grid over $[0,1]$. In the following two subsections, we consider two cases: a non-sparse case that addresses only roughness in the functional PCs and a sparse case that addresses both roughness in the functional PCs and sparsity in the PC scores.

\subsection{Non-sparse Case}
In this setup, we impose only a roughness penalty and adopt the following functional data generating model:
\begin{equation} \label{generatesvd}
	\pmb{X}_i(\pmb{t}) = \sum_{m=1}^{M} \rho_{i,m} \,\tilde{\pmb{\psi}}_m (\pmb{t}), \qquad i=1,\ldots,n,
\end{equation}
where $\rho_{i,m}$ are i.i.d. normal random variables with mean $0$ and variance $\lambda_m$, and the perturbed eigenfunctions are given by
\[
\tilde{\pmb{\psi}}_m (\pmb{t}) = \pmb{\psi}_m (\pmb{t}) + \pmb{\epsilon}_{m}(\pmb{t}).
\]

The measurement noise $\pmb{\epsilon}_{m}(\pmb{t})$ follows an i.i.d. normal distribution with mean $\pmb{0}$ and covariance matrix
\[
\pmb\Sigma_m =
\begin{bmatrix}
\sigma_m^2 & \rho\sigma_m^2  \\
\rho\sigma_m^2 & \sigma_m^2
\end{bmatrix},
\]
where $\sigma_m^2$ denotes the variance of the measurement noise for the first and second variables, and $\rho$ represents the correlation between them. By allowing each component $m$ to have its own noise level $\sigma_m^2$, we create heterogeneous conditions across the principal components. This setting highlights the flexibility of our sequential power algorithm, which assigns distinct tuning parameters to different PCs in response to their noise levels. For completeness, we also consider a more general case with uniform noise levels across all components, and show that our method remains competitive even in this simplified scenario.

In the generative model \eqref{generatesvd}, we considered \( M = 4 \) and explored two eigenvalue decay patterns: linear (\(\lambda_m^{\text{lin}} = \frac{2M+1-2m}{2M-1}\)) and exponential (\(\lambda_m^{\text{exp}} = \exp(-m/2)\)). The purpose of considering both linear and exponential decay patterns is to evaluate the performance of the proposed method under different rates of eigenvalue decay. The correlation coefficient was fixed at \(\rho = 0.4\). For the scenario involving varying degrees of roughness among the PCs, the variances \(\sigma_{m}^2\) were set at 0.1, 0.05, 0.025, and 0.01, representing PCs with roughness ranging from high to low. In contrast, for the uniform roughness scenario, a constant variance of \(\sigma_{m}^2 = 0.05\) was applied for all \(m\). We conducted 500 simulations for sample sizes \(N = 200, 500, 1000, 2000\) observations, based on the aforementioned setups.

We compare our sequential and joint smooth MFPCA with two competitive MFPCA approaches that incorporate smoothing effects. The first approach is MFPCA applied after data smoothing, and the second is the method proposed by \cite{happ2018multivariate}, which achieves smoothing by employing the algorithm from \cite{PACE} \cite[see details in the simulation section of][]{happ2018multivariate}. Additionally, we include a comparison with non-regularized MFPCA, serving as a baseline.

\begin{figure}[!t] 
\centering
  \includegraphics[width=1\linewidth]{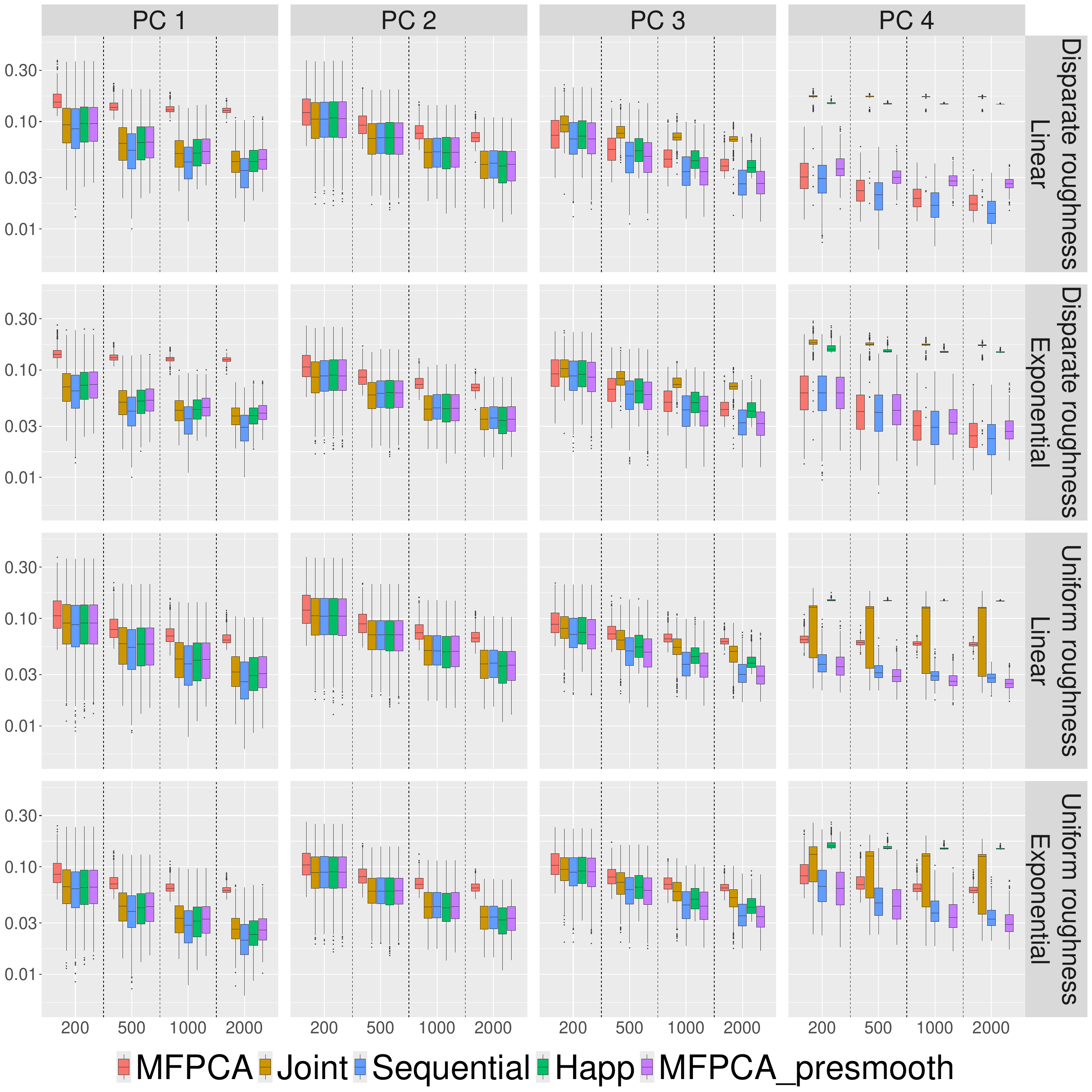}
  \caption{Estimation error of the functional principal components in the non-sparse simulation setting. The error is measured by 
$Err(\hat{\pmb\psi}_m)=\Vert \hat{\pmb\psi}_m-\pmb\psi_m\Vert_{\mathbb{H}}$ 
for the first four functional PCs across the competing MFPCA and ReMFPCA methods.}
  \label{pc error}
\end{figure}

The evaluations were conducted using two metrics: $Err(\hat{\pmb\psi}_m) = \Vert\hat{\pmb\psi}_m-\pmb\psi\Vert_{\mathbb{H}}$ and the mean relative absolute error (MRAE), defined as $\frac{1}{n}\sum_{i=1}^{n} \frac{\Vert\hat{\pmb{x}}_i-\pmb{x}_i\Vert_{\mathbb{H}}}{\Vert\pmb{x}_i\Vert_{\mathbb{H}}}$, where $\hat{\pmb{x}}_i = \sum_{m=1}^M \hat{\rho}_{i,m}\hat{\pmb\psi}_m$ represents the reconstructed function, with $\hat{\rho}_{i,m}$ as the estimated component score. The performance of each method, illustrated in Figures \ref{pc error} and \ref{mare 2} for the first four functional PCs, demonstrates the effectiveness of our GCV criterion in selecting optimal smoothing parameters (see additional performance evaluation using $Err(\hat{\lambda}_m) = \frac{|\hat{\lambda}_m-\lambda_m|}{|\lambda_m|}$ in Figure \ref{eigenvalue error}). Furthermore, our sequential approach surpasses other methods when addressing PCs with varying levels of roughness and performs competitively when the roughness of the PCs is consistent. Although the joint power method produces competitive results for the first two PCs across scenarios of both varying and uniform roughness, its reliance on a single tuning parameter limits its ability to effectively smooth the last two PCs. We also note that the pre-smoothing-based MFPCA method also performs competitively, particularly for the later components, such as PC3 and PC4. This suggests that pre-smoothing can be effective when roughness in the observed functional data is the main source of estimation difficulty.

\begin{figure}[!t] 
\centering
  \includegraphics[width=1\linewidth]{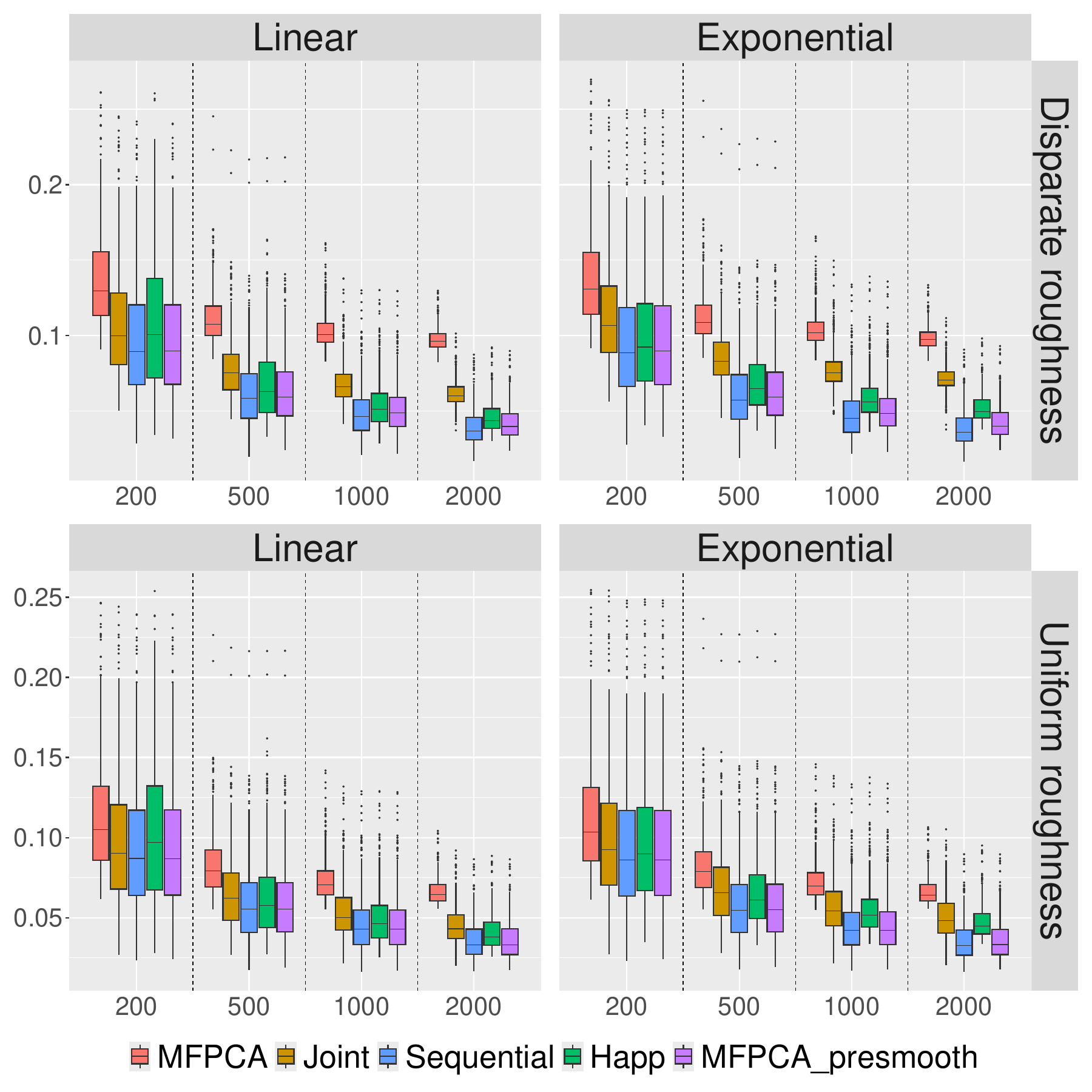}
  \caption{Mean relative absolute reconstruction error (MRAE) in the non-sparse simulation setting across the competing MFPCA and ReMFPCA methods.}
  \label{mare 2}
\end{figure}

\subsection{Sparse Case}
In this simulation setup, we adopt the following functional data generating model:

\[
{\pmb{X}_i(\pmb{t})} =
\begin{cases} 
    \sum_{m=1}^{3} {\rho}^{(1)}_{i,m} \pmb\psi_m (\pmb{t}) + \pmb{\epsilon}_{1}(\pmb{t}) & \text{if } i \in S_1 \\
    \sum_{m=1}^{3} {\rho}^{(2)}_{i,m} \pmb\psi_m (\pmb{t}) + \pmb{\epsilon}_{2}(\pmb{t})& \text{if } i \in S_2 \\
    \sum_{m=1}^{3} {\rho}^{(3)}_{i,m} \pmb\psi_m (\pmb{t}) + \pmb{\epsilon}_{3}(\pmb{t}) & \text{if } i \in S_3, \\
\end{cases}
\]

where $\rho^{(k)}_{i,m}$ are i.i.d. normal with a mean of 0 and variances $\lambda_m^{(k)}$. The sets $S_k$ $(k = 1, 2, 3)$ represent index sets with lengths $N_k$, where $N_k$ denotes the number of observations in each set.

To demonstrate the superior performance of our approach, which effectively penalizes coefficients that should be zero, thereby enhancing the extraction of more informative features, we assume the following structure for \(\lambda_m^{(k)}\):

\[
\begin{bmatrix}
\lambda_1^{(1)} & \lambda_2^{(1)} & \lambda_3^{(1)} \\
\lambda_1^{(2)} & \lambda_2^{(2)} & \lambda_3^{(2)} \\
\lambda_1^{(3)} & \lambda_2^{(3)} & \lambda_3^{(3)}
\end{bmatrix}
=
\begin{bmatrix}
0 & 0.5 & 0.9 \\
0.9 & 0 & 0.5 \\
0.5 & 0.9 & 0
\end{bmatrix}.
\]

In this setup, when \(\lambda_m^{(k)} = 0\), the corresponding \(\rho^{(k)}_{i,m} = 0\), leading to the exclusion of the pattern \(\pmb\psi_m (\pmb{t})\) from the observation \(\pmb{X}_i(\pmb{t})\) for \(i \in S_k\). By adopting this structure for \(\lambda_m^{(k)}\), we generate observations with varying degrees of sparsity in the PC scores corresponding to each \(\pmb\psi_m (\pmb{t})\). The noise \(\pmb{\epsilon}_{k}(\pmb{t})\) follows an i.i.d. normal distribution with a mean of \(\pmb{0}\) and a covariance matrix \(\pmb\Sigma_k\):

$$\begin{bmatrix}
\sigma_k^2 & \rho\sigma_k^2  \\
\rho\sigma_k^2 & \sigma_k^2
\end{bmatrix},$$ 
where $\sigma_k^2$ represents the variances of the measurement noise for the first and second variables, and $\rho$ represents the correlation between these two variables. In this simulation setup we set $\rho=0.4$, $\sigma_1^2=2.5$, $\sigma_2^2=1.5$, and $\sigma_3^2=0.5$.

We consider three sample sizes: $N = 500$, $1000$, and $2000$. For each $N$, data is generated under three different scenarios, each defined by a specific ratio of $N_1 : N_2 : N_3$. In Scenario I, the ratio is $5\% : 60\% : 35\%$; in Scenario II, it is $15\% : 50\% : 35\%$; and in Scenario III, the ratio is $25\% : 40\% : 35\%$. Each scenario is evaluated using 500 simulations. In all three scenarios, $\pmb\psi_1 (\pmb{t})$ represents the leading PC, with $N_1$ out of $N$ PC scores set to zero; $\pmb\psi_2 (\pmb{t})$ corresponds to the second PC, with $N_2$ out of $N$ PC scores set to zero; and $\pmb\psi_3 (\pmb{t})$ represents the third PC, with $N_3$ out of $N$ PC scores set to zero. In MFPCA, the variance captured by each functional PC decreases sequentially, indicating that higher sparsity in the leading PC scores has a more pronounced negative effect on the estimation results. In our simulations, Scenarios I, II, and III correspond to low, medium, and high sparsity in the leading PC scores, respectively.

\begin{figure}[!t] 
  \centering
  \begin{subfigure}{0.5\textwidth}
    \centering
    \includegraphics[page = 1, width=\textwidth]{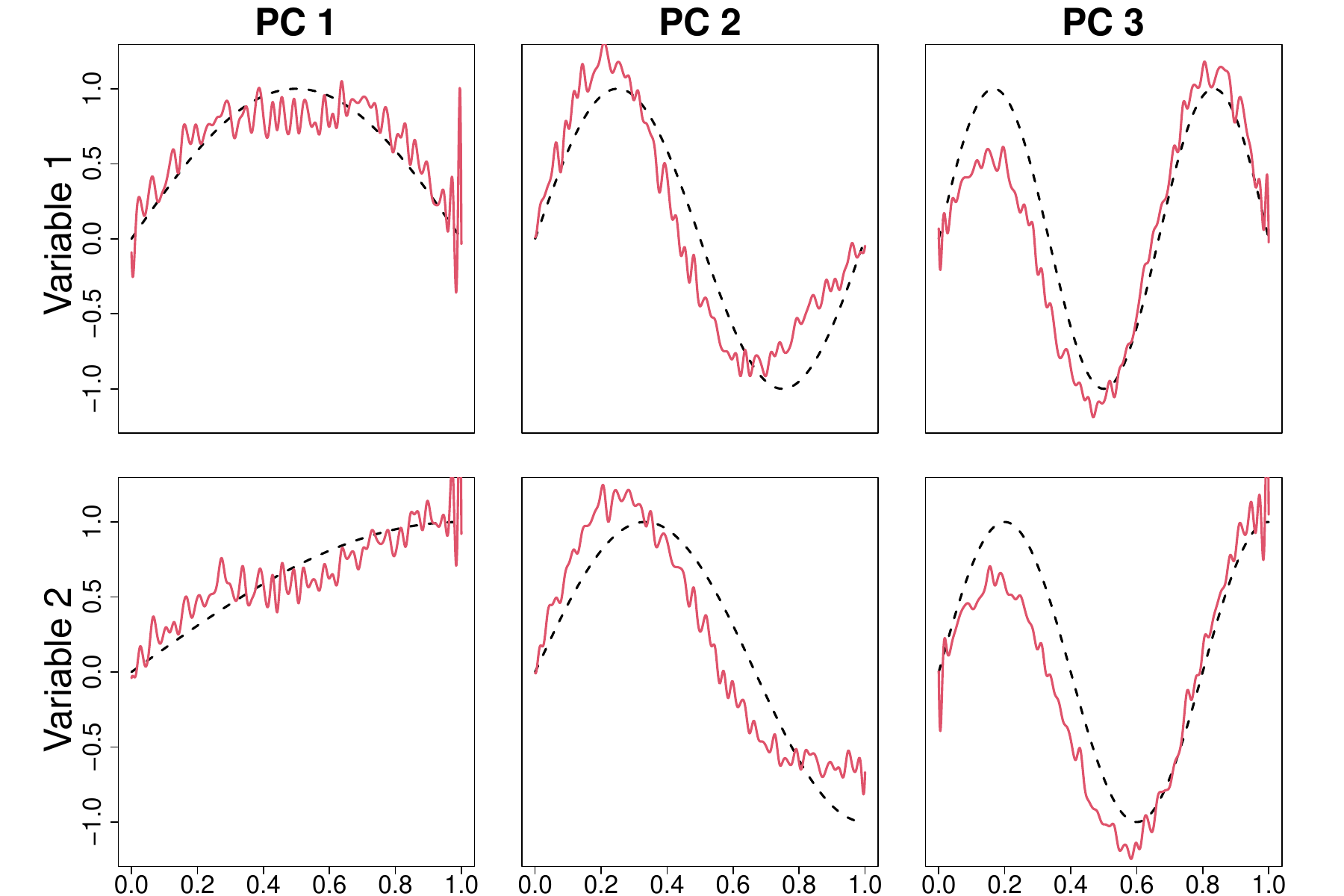}
    \caption{Non-regularized  MFPCA}
    \label{MFPCA}
  \end{subfigure}%
  \begin{subfigure}{0.5\textwidth}
    \centering
    \includegraphics[page = 2,width=\textwidth]{pc_example.pdf}
    \caption{Smooth MFPCA}
    \label{Smoothed MFPCA}
  \end{subfigure}
    \begin{subfigure}{0.5\textwidth}
    \centering
    \includegraphics[page = 3,width=\textwidth]{pc_example.pdf}
    \caption{Sparse MFPCA}
    \label{Sparse MFPCA}
  \end{subfigure}%
  \begin{subfigure}{0.5\textwidth}
    \centering
    \includegraphics[page = 4,width=\textwidth]{pc_example.pdf}
    \caption{Smooth and sparse MFPCA}
    \label{Smooth and Sparse MFPCA}
  \end{subfigure}
  \caption{Estimated functional principal components from a randomly selected simulation replicate under the high-sparsity setting (Scenario III). 
The estimated functional PCs are shown in red, while the true functional PCs are shown as black dashed curves. 
Panels compare non-regularized MFPCA, smooth MFPCA, sparse MFPCA, and the proposed smooth-and-sparse ReMFPCA approach.}
  \label{sparse FPC example}
\end{figure}

We evaluate the performance of our smooth-and-sparse ReMFPCA approach against three alternative methods: conventional MFPCA, sequential smooth MFPCA, and sparse MFPCA. Although the data-generating mechanism in this sparse simulation setting has a mixture-like structure through the subject groups $S_1,S_2,S_3$, the objective of this simulation is different from model-based clustering of functional trajectories. Our focus is on regularized dimension reduction, specifically on recovering the underlying functional PCs and identifying subject-level sparsity in the associated PC scores. Figure \ref{sparse FPC example} illustrates a randomly selected simulation with high sparsity (Scenario III) where \( N = 2000 \). This figure demonstrates that the smooth-and-sparse ReMFPCA approach successfully captures the optimal functional PCs. The key to this success lies in the method's ability to effectively penalize irrelevant \(\rho_{i,m}^{k}\), reducing them to zero (see the results of the estimated sparse PC scores \(\hat\rho_{i,m}^{k}\) in Figure \ref{pc score}).

The accuracy of the estimated $\hat{\pmb\psi}_m$ across all simulation setups was evaluated using the error measure: $Err(\hat{\pmb\psi}_m) = \Vert\hat{\pmb\psi}_m-\pmb\psi_m\Vert_{\mathbb{H}}$. The performance of each method is illustrated in Figure \ref{sparse function error}. Additionally, the performance of an alternative measure, $Err(\hat\rho_{i,m}^{k}) = \frac{1}{N}\sum_{i=1}^{N}|\hat\rho_{i,m}^{k}-\rho_{i,m}^{k}|$, is presented in Figure \ref{sparse score error}.

As depicted in Figure \ref{sparse function error}, the differences between the approaches are less pronounced under low sparsity conditions. However, in high sparsity scenarios, where the sparsity has a more substantial impact on the estimated PC scores, our method shows significant improvement over non-sparse penalty methods. Furthermore, our algorithm effectively assigns varying sparsity parameters to each functional PC, leading to more accurate estimation of the functional PCs. 

\begin{figure}[!t]
    \centering
    \includegraphics[width=1\textwidth]{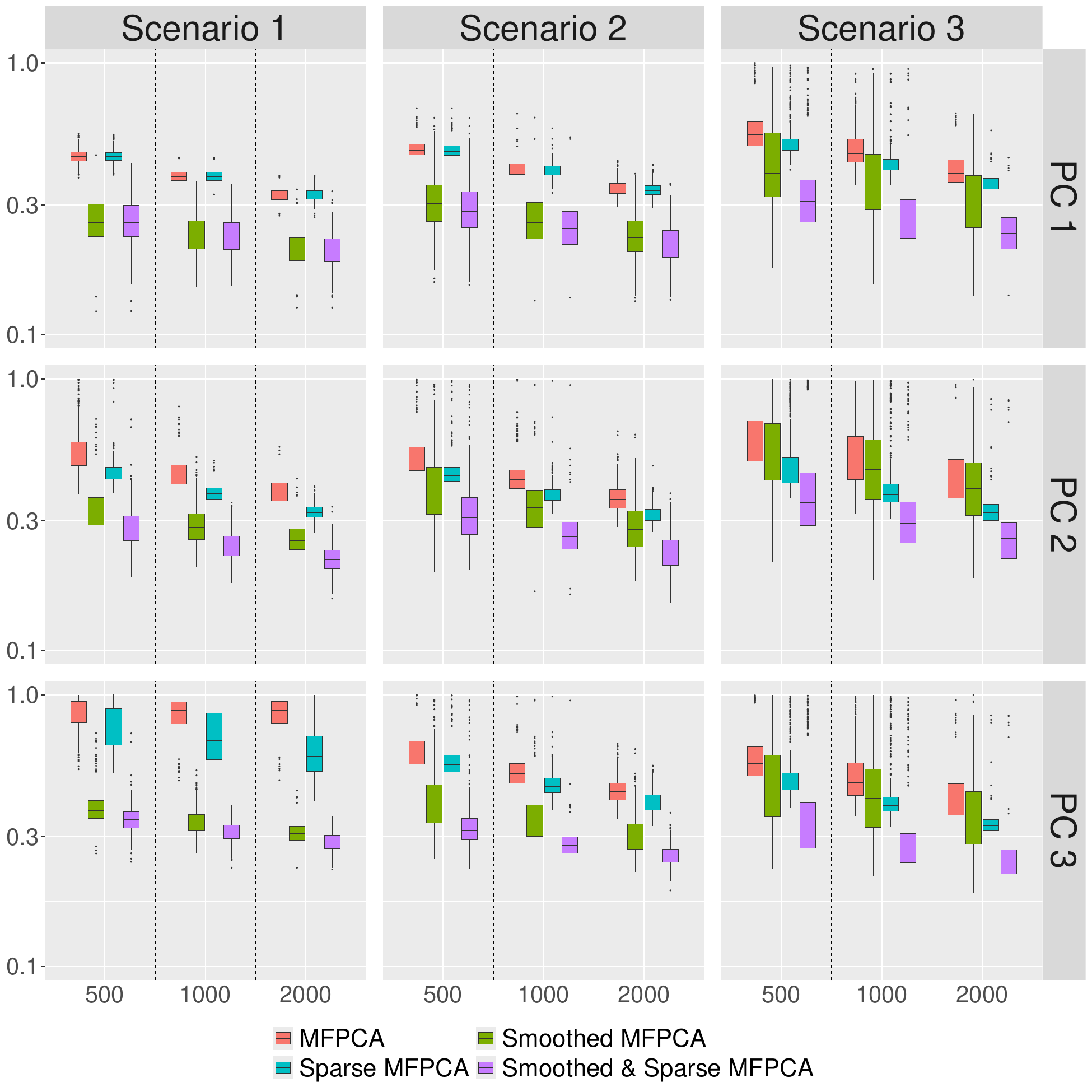}
    \caption{Estimation error of the functional principal components in the sparse simulation setting. 
The error is measured by 
$Err(\hat{\pmb\psi}_m)=\Vert \hat{\pmb\psi}_m-\pmb\psi_m\Vert_{\mathbb{H}}$, 
with comparisons across different sparsity levels, sample sizes, and competing methods.}
    \label{sparse function error}
\end{figure}

\section{Real Data Analysis}
\label{Real data analysis}
In this section, we analyze a dataset comprising measurements of user acceleration and pitch attitude from 24 individuals performing four distinct activities: jogging, walking, sitting, and standing \cite{Malekzadeh}. Each activity was conducted for approximately 2 to 3 minutes. The irregularly observed discrete data were transformed into functional objects using 200 B-spline basis functions. Additionally, to ensure that both variables contribute equally to the variation in the analysis, we followed the rescaling method proposed by \cite{happ2018multivariate}, applying weights $w_j$:
$$w_j = \left(\int_{\mathcal{T}_i} \widehat{Var}(X_j(t))dt\right)^{-1},\qquad j = 1,2.$$
Using these weights, the integrated variance equals 1 for the rescaled variable $\tilde{X}_j(t) = w_j^{1/2}X_j(t).$

To obtain informative functional PCs, we employ the proposed sequential ReMFPCA approach, which integrates both roughness and sparsity penalties. By extracting each functional PC sequentially, we are able to tailor the tuning parameters specifically for each component. We investigate a series of candidate values for each $\alpha_j$ in the smoothing parameter vector $\pmb{\alpha} = (\alpha_1, \alpha_2)$, where $\alpha_j$ is set as $2^i$, with $i$ representing 10 equally spaced values within the range $[-35, 5]$. Additionally, we consider a spectrum of candidate values for the degrees of sparsity, generating these as integers from 1 to 96, where 96 corresponds to the total number of observations. This thorough approach allows us to explore all potential levels of sparsity and smoothness, ensuring the identification of the optimal parameters for our analysis.

The optimal smoothing parameters were identified as follows: $\pmb{\alpha}_{\text{seq}}^{1} = (6.75 \times 10^{-2}, 0.31 \times 10^{-2})$, and $\pmb{\alpha}_{\text{seq}}^{2} = \pmb{\alpha}_{\text{seq}}^{3} = (6.75 \times 10^{-2}, 1.469)$. The optimal level of sparsity for the first functional PC was determined to be 25, resulting in some observations having a first PC score of 0. For the remaining functional PCs, the optimal sparsity level is 1, indicating that the sparsity penalty has a minimal effect on further refining these PCs. The impact of adding and subtracting multiples of each sequentially regularized functional PC on the mean curve is illustrated in Figure \ref{FPC plot:plot5}.

For comparison, we evaluate our smooth-and-sparse MFPCA approach against several alternatives, including the non-regularized MFPCA, as well as joint and sequential smoothing MFPCA methods that utilize only a roughness penalty. The sequential smoothing MFPCA yields the same smoothing parameters as our smooth-and-sparse MFPCA, while the joint smoothing MFPCA produces the parameters $\pmb\alpha_{joint}^{smooth} = (6.75 \times 10^{-2}, 6.75 \times 10^{-2})$. These are illustrated in Figure \ref{FPC plot:plot1}.

\begin{figure}[!t]
    \centering
    \includegraphics[width=1\textwidth]{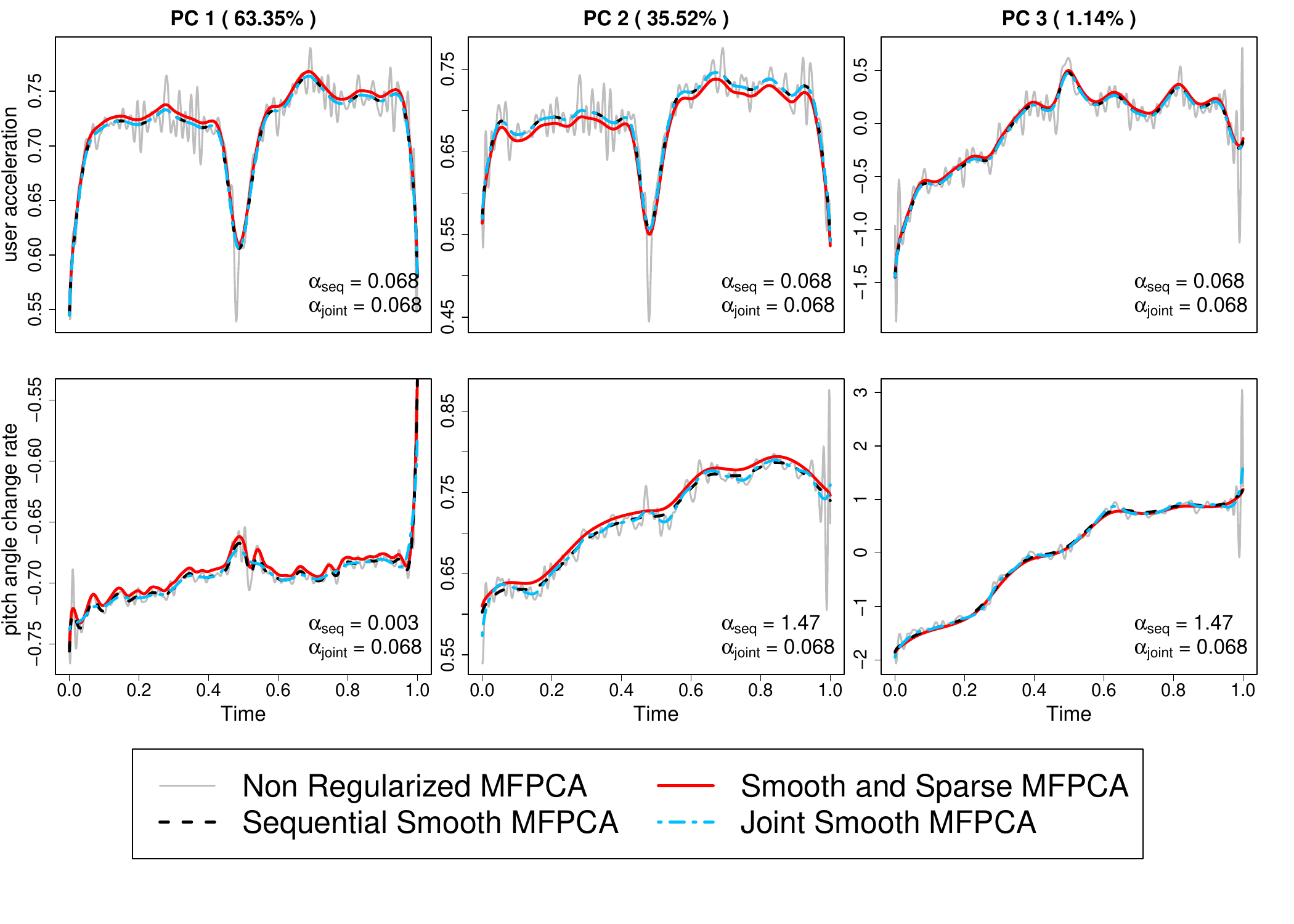}
    \caption{First three estimated functional principal components for the MotionSense data. 
The curves compare non-regularized MFPCA in red, sequential ReMFPCA in black, and joint ReMFPCA in blue.}
    \label{FPC plot:plot1}
  \end{figure}

\begin{figure}[!b]
    \centering
    \includegraphics[width=1\textwidth]{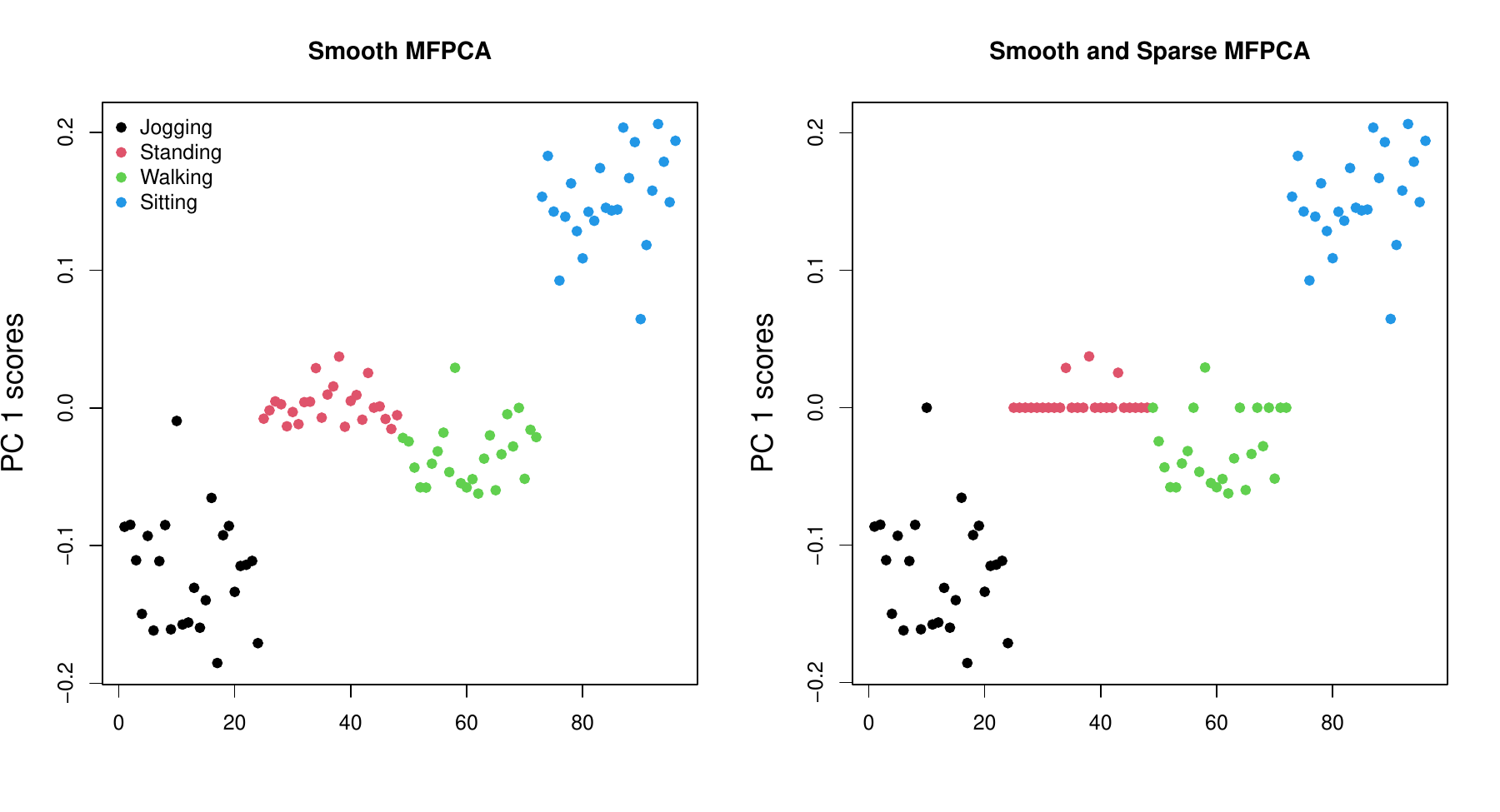}
    \caption{Scatter plot of the first estimated PC scores from the sequential smooth-and-sparse ReMFPCA analysis of the MotionSense data. 
The sparsity penalty shrinks a subset of activity-specific scores toward zero, improving the interpretability of the first component.}
    \label{sparse MFPCA score plot}
\end{figure}

Figure \ref{sparse MFPCA score plot} reveals that the PC1 scores for the standing activity are effectively penalized to near zero. This indicates that the sparsity penalty efficiently filters out most of the information from the standing activity when estimating the first PC, reinforcing the notion that excluding this information does not compromise the analysis. This observation is consistent with the results in Figures \ref{pc1 cluster svd} and \ref{pc2 cluster svd}, which suggest that the standing activity is primarily captured by the second PC and does not significantly influence the first.

\begin{figure}[!t] 
  \centering
  \begin{subfigure}{0.33\textwidth}
    \centering
    \includegraphics[width=\textwidth]{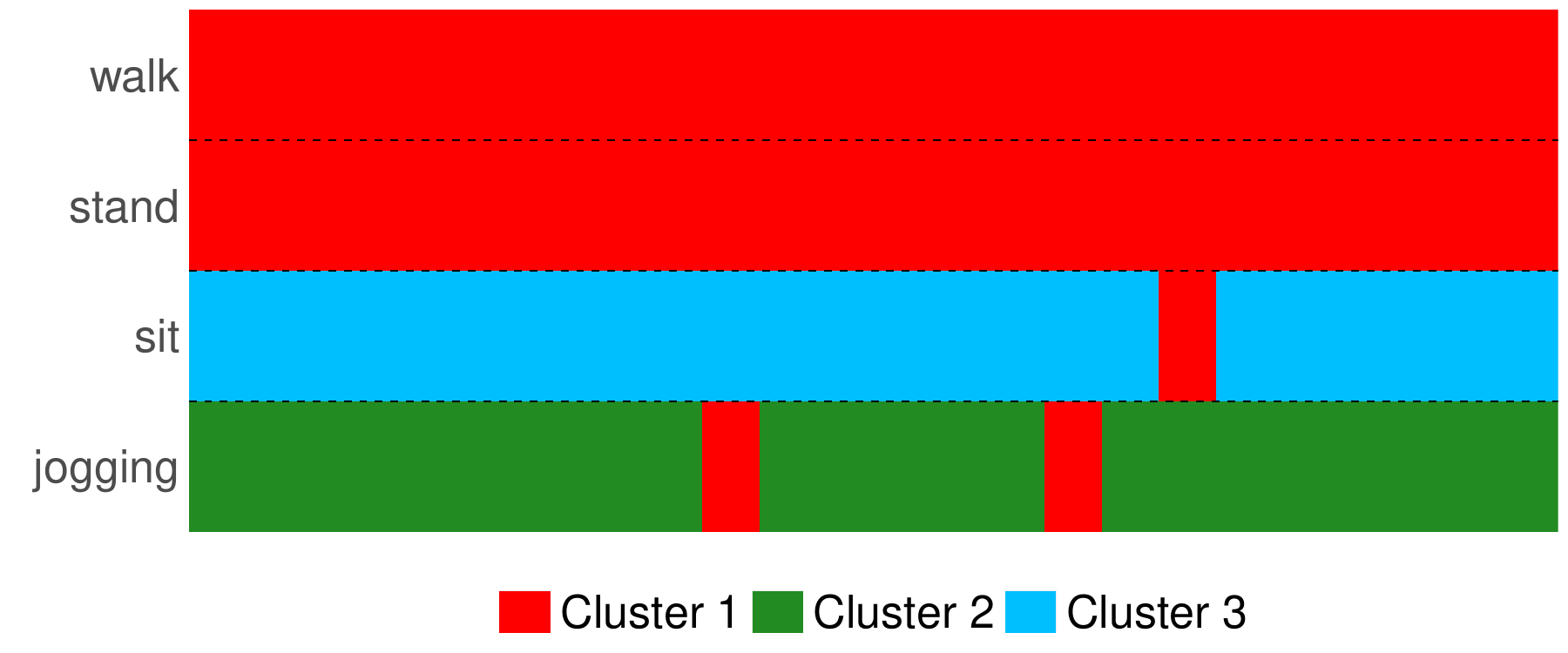}
    \caption{Clustering using PC1}
    \label{pc1 cluster svd}
  \end{subfigure}%
  \begin{subfigure}{0.33\textwidth}
    \centering
    \includegraphics[width=\textwidth]{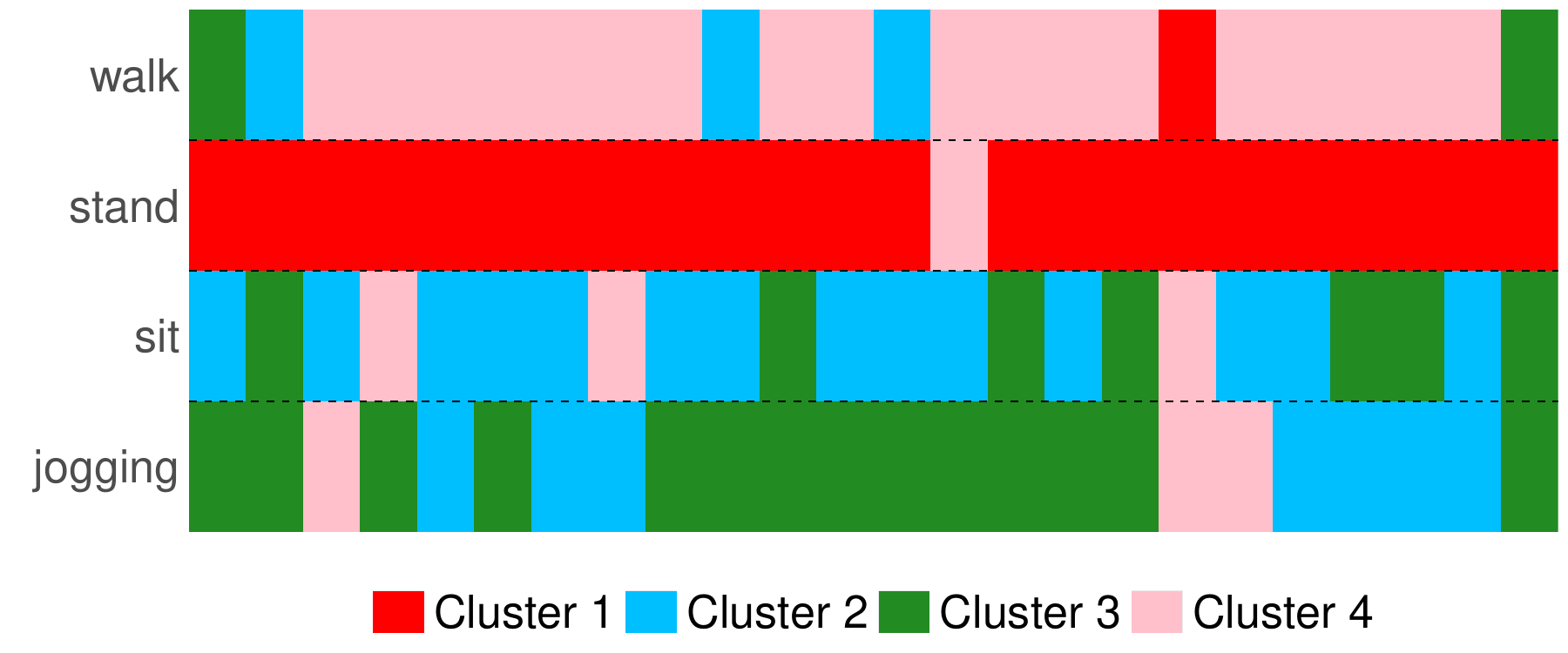}
    \caption{Clustering using PC2}
    \label{pc2 cluster svd}
  \end{subfigure}
    \begin{subfigure}{0.33\textwidth}
    \centering
    \includegraphics[width=\textwidth]{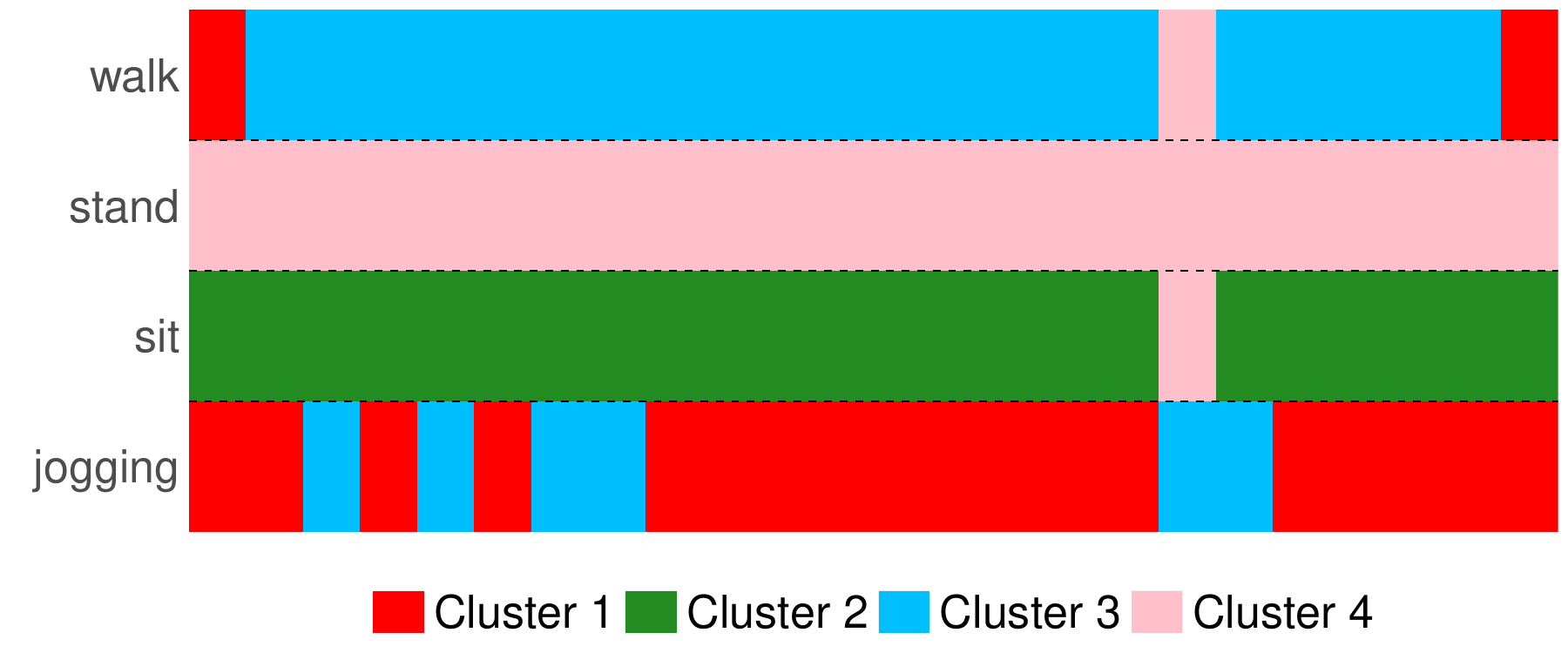}
    \caption{Clustering using PC1 and PC2}
    \label{pc3 cluster svd}
  \end{subfigure}
  \caption{Clustering results based on the estimated functional PC scores from the proposed ReMFPCA analysis of the MotionSense data. 
The three panels show clustering using the first PC, the second PC, and the first two PCs jointly, respectively. 
The combined PC1--PC2 representation provides improved separation of the activity-specific functional patterns.}
   \label{fsvd cluster plot}
\end{figure}

Figure \ref{fsvd cluster plot} illustrates the clustering outcomes derived from the scores of the first, second, and a combination of both functional PCs. For these clustering tasks, the silhouette criterion identifies the optimal number of clusters as $k = 3, 4, 4$ for the first, second, and combined PCs, respectively (details on the silhouette criterion can be found in Figure \ref{Silhouette plot}). The clustering based on the first PC scores effectively captures the patterns in sitting and jogging activities, while the second PC scores successfully reflect the pattern of standing.

The combination of the first two functional PCs clearly demonstrates their ability to differentiate the distinct patterns associated with each activity. This clustering performance is quantified using the Normalized Mutual Information (NMI) and the Adjusted Rand Index (ARI), yielding scores of NMI = 0.79 and ARI = 0.78, respectively. The interpretability of these results is further supported by Figure \ref{cluster}, which compares the clustering outcomes with the true activity labels.

\section{Discussion} \label{Discussion}
This paper presents a generalized functional SVD framework to investigate ReMFPCA for $p$-dimensional functional data across multiple domains, where the case of $p=1$ reduces to the ReFPCA problem. While our approach to smoothing functional PCs is inspired by the methods introduced in \cite{huang2008functional}, it extends significantly beyond a univariate-to-multivariate adaptation, offering several key advantages. First, our implementation can accommodate certain irregularly or unequally spaced functional data within the adopted basis-expansion framework. However, because the basis representation depends on the relationship between the measurement points and the knot locations, this capability may be limited when the observations are very sparse. Thus, unlike approaches specifically developed for sparse functional designs, such as \cite{PACE}, the proposed method should not be interpreted as directly addressing very sparse functional data. Additionally, our approach is grounded in a Hilbert space framework, a foundational aspect in functional data analysis that was not explicitly established in \cite{huang2008functional}. This provides a sample-level operator-based formulation for the proposed functional SVD technique, rather than a full population-level infinite-dimensional theory. Furthermore, our Generalized Cross-Validation (GCV) criterion is rigorously validated for selecting smoothing parameters not only for individual PCs but also for determining common smoothing parameters across multiple PCs, offering greater versatility and applicability to methods such as those developed by \cite{kayano2009functional} and \cite{ReMFPCA}.

Our approach also differs fundamentally from \cite{huang2009}, which applied simultaneous roughness penalties to univariate two-way functional data. Instead, we target multivariate functional data across distinct domains in $\mathbb{R}$, incorporating a \textit{dual penalization} strategy that applies roughness penalties to the functional PCs and sparsity penalties to the associated PC scores. This dual penalization addresses two important challenges: roughness in the estimated functions and the presence of irrelevant subject-specific variations. The latter is particularly important, since without sparsity constraints, PC scores remain nonzero even for subjects unrelated to a given functional pattern, thereby introducing noise and reducing interpretability. This issue parallels challenges seen in sparse PCA \citep{spca_zou,spca_shen}, where nonzero loadings complicate interpretation. By shrinking irrelevant PC scores toward zero, our method extracts components that more clearly represent shared functional patterns, improving interpretability without compromising variance explanation.

Compared with \cite{spfca}, which imposed both roughness and sparsity penalties on functional PCs through an indirect iterative power algorithm, the present framework generalizes these ideas to multivariate settings and provides improved computational efficiency through regression-based cross-validation. Within this functional SVD framework, we propose two power algorithms for estimating functional PCs, sequentially and jointly, while maintaining computational efficiency in tuning parameter selection. The sequential power algorithm allows for the extraction of functional PCs one at a time, each with its own tuning parameters, resulting in more tailored PCs. The joint power algorithm is more computationally efficient, as it estimates multiple functional PCs simultaneously using common tuning parameters.

An interesting avenue for future research is the extension of this framework to partially observed functional data. In many practical applications, functions may be observed only on a subset of their domain, often synchronously across subjects. Exploring such an extension would further broaden the applicability of the method to a wider range of real-world problems.

\bigskip
\begin{center}
{\large\bf SUPPLEMENTAL MATERIALS}
\end{center}

\begin{description}

\item[Appendix:] The proofs of all the theorems and lemmas are provided in the Appendix. (.pdf)

\item[R-package for ReMFPCA routine:] We implemented our smooth-and-sparse technique by extending the R-package \pkg{ReMFPCA}  with the necessary functionalities. (available in CRAN)

\item[MotionSense Data example:] The `code.R'  contains all codes and datasets used as the real data example in this article.

\end{description}

\bigskip
\begin{center}
{\large\bf Data Availability}
\end{center}

\noindent
The experimental data supporting the findings of this study are publicly available. 
The raw MotionSense dataset can be accessed from the MotionSense repository: 
\url{https://github.com/mmalekzadeh/motion-sense}. 
A cleaned version of this dataset is also provided within the R package \texttt{ReMFPCA}, which is available on CRAN.

\bibliographystyle{apalike}
\bibliography{Mybib}

\clearpage
\pagenumbering{arabic}
\renewcommand*{\thepage}{A\arabic{page}}
\appendix
\renewcommand{\thesection}{\Alph{section}}
\setcounter{equation}{0}
\renewcommand\theequation{\Alph{section}.\arabic{equation}}
\setcounter{figure}{0}
\renewcommand\thefigure{\Alph{section}.\arabic{figure}}
\include{appendix}

\end{document}

%% file: appendix.tex
{\center \section*{APPENDIX}}
\section{Proofs}\label{proofs}
\begin{proof}[Proof of Proposition \ref{prop 2.2}]
By the definition of the data operator, we have $\mathbfcal{X}\circ\mathbfcal{X}^*: \mathbb{H} \rightarrow \mathbb{H}$. For any ${\pmb x} \in \mathbb{H}$, the operator acts as follows:
\begin{align*}
    \mathbfcal{X}\circ\mathbfcal{X}^*({\pmb x}) &= \mathbfcal{X}\left( 
    \begin{pmatrix} \langle {\pmb x}_1, {\pmb x} \rangle_{\mathbb{H}}, \langle {\pmb x}_2, {\pmb x} \rangle_{\mathbb{H}}, \ldots, \langle {\pmb x}_n, {\pmb x} \rangle_{\mathbb{H}} \end{pmatrix}^\top \right) \\
    &= \sum_{i=1}^n \langle {\pmb x}_i, {\pmb x} \rangle_{\mathbb{H}} {\pmb x}_i \quad = \quad \sum_{i=1}^n {\pmb x}_i \otimes {\pmb x}_i ({\pmb x}).
\end{align*}
\end{proof}

 \begin{proof}[Proof of Theorem \ref{th-fsvd}]
See proofs of Theorem 1 in \citep{Haghbin2019fssa}.
	\end{proof}  

\begin{lemma}
\label{lem1}
For $\pmb{u} \in \mathbb{R}^n$ and $\pmb{\psi} \in \mathbb{H}$, we have $\pmb{u} \otimes \pmb{\psi} \in \mathbb{F}^{p \times n}$, and the following holds:
\begin{equation}
\label{e00}
\Vert \pmb{u} \otimes \pmb{\psi} \Vert_\mathbb{F} = \Vert \pmb{u} \Vert_{\mathbb{R}^n} \Vert \pmb{\psi} \Vert_{\mathbb{H}}.
\end{equation}
\end{lemma}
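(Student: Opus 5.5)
The plan is to identify the operator $\pmb{u}\otimes\pmb{\psi}$ explicitly as an element of $\mathbb{F}^{p\times n}$, i.e.\ find its representing elements $[\pmb{z}_i]_{i=1}^n$. Then I will compute its Frobenius norm directly from the definition of $\langle\cdot,\cdot\rangle_\mathbb{F}$.

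\textbf{Step 1: membership and representation.} By the definition of the tensor product, for $\pmb{a}\in\mathbb{R}^n$,
\begin{equation*}
(\pmb{u}\otimes\pmb{\psi})\pmb{a}=\langle \pmb{u},\pmb{a}\rangle_{\mathbb{R}^n}\,\pmb{\psi}=\sum_{i=1}^n a_i\,(u_i\pmb{\psi}).
\end{equation*}
This map is linear from $\mathbb{R}^n$ to $\mathbb{H}$. Comparing with the defining representation $\bm{\mathcal{Z}}\pmb{a}=\sum_i a_i\pmb{z}_i$, we see that $\pmb{u}\otimes\pmb{\psi}=[u_i\pmb{\psi}]_{i=1}^n\in\mathbb{F}^{p\times n}$. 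The representation is unique, because $\pmb{z}_i=\bm{\mathcal{Z}}\pmb{e}_i$ with $\pmb{e}_i$ the standard basis vectors. Uniqueness matters so that the Frobenius inner product is well defined on operators rather than on representations.

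\textbf{Step 2: norm computation.} Using the definition of the Frobenius inner product and bilinearity of $\langle\cdot,\cdot\rangle_\mathbb{H}$,
\begin{equation*}
\Vert \pmb{u}\otimes\pmb{\psi}\Vert_\mathbb{F}^2=\sum_{i=1}^n\langle u_i\pmb{\psi},u_i\pmb{\psi}\rangle_\mathbb{H}=\Big(\sum_{i=1}^n u_i^2\Big)\Vert\pmb{\psi}\Vert_\mathbb{H}^2=\Vert\pmb{u}\Vert_{\mathbb{R}^n}^2\Vert\pmb{\psi}\Vert_\mathbb{H}^2.
\end{equation*}
Both norms are nonnegative, so taking square roots gives \eqref{e00}.

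There is no real obstacle here. The only point needing care is Step 1: correctly reading off the column representation of $\pmb{u}\otimes\pmb{\psi}$ from the paper's convention $(x\otimes y)h=\langle x,h\rangle y$, with $x=\pmb{u}\in\mathbb{R}^n$ and $y=\pmb{\psi}\in\mathbb{H}$.
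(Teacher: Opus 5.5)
Your proof is correct and follows the paper's argument: you identify $\pmb{u}\otimes\pmb{\psi}$ with $[u_i\pmb{\psi}]_{i=1}^n$ from $(\pmb{u}\otimes\pmb{\psi})\pmb{a}=\sum_i a_i u_i\pmb{\psi}$, then expand the Frobenius norm directly. Your remark that $\pmb{z}_i=\bm{\mathcal{Z}}\pmb{e}_i$ makes the representation unique, so that $\langle\cdot,\cdot\rangle_\mathbb{F}$ is well defined on operators, is a point the paper leaves implicit.
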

\begin{proof}[Proof]
Let $\pmb{u} = (u_1, \ldots, u_n)^{\top}$. Then, for any vector $\pmb{a} = (a_1, \ldots, a_n)^{\top} \in \mathbb{R}^n$, we have
\begin{equation*}
    \pmb{u} \otimes \pmb{\psi} (\pmb{a}) = \langle \pmb{u}, \pmb{a} \rangle_{\mathbb{R}^n} \pmb{\psi} = \sum_{i=1}^n a_i u_i \pmb{\psi}.
\end{equation*}
Hence, $\pmb{u} \otimes \pmb{\psi} \in \mathbb{F}^{n \times p}$ is specified by $[u_i \pmb{\psi}]_{i=1}^{n}$. This gives
\begin{equation*}
    \Vert \pmb{u} \otimes \pmb{\psi} \Vert_\mathbb{F} = \sqrt{\sum_{i=1}^n \Vert u_i \pmb{\psi} \Vert^2_{\mathbb{H}}} = \sqrt{\Vert \pmb{\psi} \Vert^2_{\mathbb{H}} \sum_{i=1}^n u_i^2} = \Vert \pmb{u} \Vert_{\mathbb{R}^n} \Vert \pmb{\psi} \Vert_{\mathbb{H}}.
\end{equation*}
\end{proof}

\begin{proof}[Proof of Theorem \ref{col1}]

\begin{enumerate}
    \item [i)] 
    \begin{align*}
        \mathbfcal{X} \circ \mathbfcal{X}^* (\pmb{\psi}_h) &= \mathbfcal{X} \left( \sum_{\ell=1}^L \sqrt{\lambda_\ell} \langle \pmb{\psi}_\ell, \pmb{\psi}_h \rangle_{\mathbb{H}} \pmb{u}_i \right) 
        = \mathbfcal{X} (\sqrt{\lambda_h} \pmb{u}_h) 
        = \sqrt{\lambda_h} \mathbfcal{X}(\pmb{u}_h) \\
        &= \sqrt{\lambda_h} \sum_{\ell=1}^L \sqrt{\lambda_\ell} \langle \pmb{u}_\ell, \pmb{u}_h \rangle_{\mathbb{R}^n} \pmb{\psi}_\ell
        = \lambda_h \pmb{\psi}_h.
    \end{align*}
    Therefore, 
    \begin{equation*}
        {\widehat{\mathbfcal{C}}} (\pmb{\psi}_\ell) = \frac{1}{n-1} \mathbfcal{X} \circ \mathbfcal{X}^* (\pmb{\psi}_\ell) = \frac{\lambda_\ell}{n-1} \pmb{\psi}_\ell.
    \end{equation*}

    \item [ii, iii)] See (3.1) and (3.2) of \cite{ReMFPCA}.

    \item [iv)] Using Lemma \ref{lem1}, we have:
    \begin{equation} \label{e9}
        \Vert \mathbfcal{X} \Vert_\mathbb{F}^2 = \Vert \sum_{\ell=1}^L \sqrt{\lambda_\ell} \pmb{u}_\ell \otimes \pmb{\psi}_\ell \Vert_\mathbb{F}^2 
        = \sum_{\ell=1}^L \lambda_\ell \Vert \pmb{u}_\ell \otimes \pmb{\psi}_\ell \Vert_\mathbb{F}^2 
        = \sum_{\ell=1}^L \lambda_\ell.
    \end{equation}
\end{enumerate}
\end{proof}
 
    The following lemma is essential to finalize the proof of Theorem \ref{Low-rank approximation}.
\begin{lemma}\label{e10}
Let the orthonormal elements $\pmb{\varphi}_\ell,\ \ell=1,\ldots,r,$ be fixed. Then, the optimal $\pmb{w}_i$ that minimizes 
\[
\Vert \mathbfcal{X} - \sum_{\ell=1}^r \ \pmb{w}_\ell \otimes \pmb{\varphi}_\ell \Vert_\mathbb{F}^2,
\]
has the form $\pmb{w}_\ell = \mathbfcal{X}^*(\pmb{\varphi}_\ell)$.
\end{lemma}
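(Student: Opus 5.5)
Fix the orthonormal elements $\pmb{\varphi}_1,\ldots,\pmb{\varphi}_r$ and expand the objective in the Frobenius inner product $\langle\cdot,\cdot\rangle_\mathbb{F}$. The aim is to show that it separates into a constant plus $r$ decoupled quadratics, one in each $\pmb{w}_\ell$. Writing $\mathbfcal{Y}=\sum_{\ell=1}^r \pmb{w}_\ell\otimes\pmb{\varphi}_\ell$, we have
\[
\Vert \mathbfcal{X}-\mathbfcal{Y}\Vert_\mathbb{F}^2=\Vert\mathbfcal{X}\Vert_\mathbb{F}^2-2\langle \mathbfcal{X},\mathbfcal{Y}\rangle_\mathbb{F}+\Vert\mathbfcal{Y}\Vert_\mathbb{F}^2,
\]
and the cross term and the last term are computed separately.

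\textbf{Cross term.} By the proof of Lemma \ref{lem1}, $\pmb{w}_\ell\otimes\pmb{\varphi}_\ell$ is specified by $[w_{\ell,i}\pmb{\varphi}_\ell]_{i=1}^n$. Hence
\[
\langle \mathbfcal{X},\pmb{w}_\ell\otimes\pmb{\varphi}_\ell\rangle_\mathbb{F}=\sum_{i=1}^n w_{\ell,i}\langle \pmb{x}_i,\pmb{\varphi}_\ell\rangle_\mathbb{H}=\langle \pmb{w}_\ell,\mathbfcal{X}^*\pmb{\varphi}_\ell\rangle_{\mathbb{R}^n},
\]
where the last equality uses the form of $\mathbfcal{X}^*$ from Proposition \ref{prop:trajO}.

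\textbf{Norm term.} The same specification gives
\[
\langle \pmb{w}_\ell\otimes\pmb{\varphi}_\ell,\pmb{w}_k\otimes\pmb{\varphi}_k\rangle_\mathbb{F}=\sum_i w_{\ell,i}w_{k,i}\langle\pmb{\varphi}_\ell,\pmb{\varphi}_k\rangle_\mathbb{H}=\delta_{\ell k}\Vert\pmb{w}_\ell\Vert^2 .
\]
This is the step where orthonormality of the $\pmb{\varphi}_\ell$ is used. It kills all cross terms, so $\Vert\mathbfcal{Y}\Vert_\mathbb{F}^2=\sum_\ell\Vert\pmb{w}_\ell\Vert^2$, which is consistent with Lemma \ref{lem1} when $r=1$.

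\textbf{Completing the square.} Combining the two computations,
\[
\Vert \mathbfcal{X}-\mathbfcal{Y}\Vert_\mathbb{F}^2=\Vert\mathbfcal{X}\Vert_\mathbb{F}^2-\sum_{\ell=1}^r\Vert\mathbfcal{X}^*\pmb{\varphi}_\ell\Vert^2+\sum_{\ell=1}^r\Vert\pmb{w}_\ell-\mathbfcal{X}^*\pmb{\varphi}_\ell\Vert^2_{\mathbb{R}^n}.
\]
The first two terms do not depend on the $\pmb{w}_\ell$. The last sum is nonnegative and vanishes exactly when $\pmb{w}_\ell=\mathbfcal{X}^*(\pmb{\varphi}_\ell)$ for every $\ell$, so this choice is the unique minimizer.

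The only point needing care is the cross-term identity. It amounts to verifying the adjoint relation $\langle\mathbfcal{X},\pmb{w}\otimes\pmb{\varphi}\rangle_\mathbb{F}=\langle\pmb{w},\mathbfcal{X}^*\pmb{\varphi}\rangle$ directly from the definitions of the Frobenius inner product and the tensor product. This is routine, so there is no real obstacle.
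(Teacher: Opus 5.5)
Your proof is correct and takes essentially the same route as the paper. Your cross-term and norm-term computations are exactly the paper's two identities: the $\mathbb{F}$-inner product of $\mathbfcal{X}$ with $\sum_\ell \pmb{w}_\ell\otimes\pmb{\varphi}_\ell$, and the orthonormality-based evaluation $\langle \pmb{a}\otimes\pmb{\varphi}_\ell,\pmb{b}\otimes\pmb{\varphi}_t\rangle_\mathbb{F}=\langle\pmb{a},\pmb{b}\rangle\,\delta_{\ell t}$. The only difference is the final step: the paper shows that the residual $\mathbfcal{X}-\sum_\ell \mathbfcal{X}^*(\pmb{\varphi}_\ell)\otimes\pmb{\varphi}_\ell$ is orthogonal to every $\sum_t\pmb{w}_t\otimes\pmb{\varphi}_t$ and then invokes the projection characterization, whereas you complete the square explicitly. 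Completing the square gives uniqueness and the minimal value $\Vert\mathbfcal{X}\Vert_\mathbb{F}^2-\sum_{\ell}\Vert\mathbfcal{X}^*\pmb{\varphi}_\ell\Vert^2$ directly, which is the expression the paper re-derives at the start of its proof of Theorem~\ref{Low-rank approximation}.
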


\begin{proof}

Define $\mathbb{M}_r$ as the space of all elements expressible in the form
$\sum_{\ell=1}^r \pmb{w}_\ell \otimes \pmb{\varphi}_\ell$
where $\pmb{w}_\ell \in \mathbb{R}^n$ and $\pmb{\varphi}_\ell \in \mathbb{H}$, with the functions $\pmb{\varphi}_\ell$ forming an orthonormal set. Now let $\pmb{w}_\ell=(w_1^{(\ell)},\ldots,w_n^{(\ell)})^\top$, then $\pmb{w}_\ell \otimes \pmb{\varphi}_\ell$ belongs to $\mathbb{F}^{p\times n}$ and is specified by $[w_i^{(\ell)}\pmb{\varphi}_\ell]_{i=1}^n$. Therefore,
\begin{align}\label{e7}
\langle \mathbfcal{X}, \sum_{\ell=1}^r \pmb{w}_\ell \otimes \pmb{\varphi}_\ell \rangle_\mathbb{F}
&= \sum_{\ell=1}^r \langle \mathbfcal{X}, \pmb{w}_\ell \otimes \pmb{\varphi}_\ell \rangle_\mathbb{F} \ = \ \sum_{\ell=1}^r \sum_{i=1}^n w_i^{(\ell)} \langle \pmb{x}_i, \pmb{\varphi}_\ell \rangle_{\mathbb{H}} \notag \\
&= \sum_{\ell=1}^r \langle \mathbfcal{X}^*(\pmb{\varphi}_\ell), \pmb{w}_\ell \rangle_{\mathbb{R}^n}.
\end{align}
Also,
\begin{align}\label{e8}
\langle \sum_{\ell=1}^r \mathbfcal{X}^*(\pmb{\varphi}_\ell) \otimes \pmb{\varphi}_\ell, \sum_{t=1}^r \pmb{w}_t \otimes \pmb{\varphi}_t \rangle_\mathbb{F}
&= \sum_{\ell=1}^r \sum_{t=1}^r \langle \mathbfcal{X}^*(\pmb{\varphi}_\ell) \otimes \pmb{\varphi}_\ell, \pmb{w}_t \otimes \pmb{\varphi}_t \rangle_\mathbb{F} \notag \\
&= \sum_{\ell=1}^r \sum_{t=1}^r \langle \mathbfcal{X}^*(\pmb{\varphi}_\ell), \pmb{w}_t \rangle_{\mathbb{R}^n} \langle \pmb{\varphi}_\ell, \pmb{\varphi}_t \rangle_{\mathbb{H}} \notag \\
&= \sum_{\ell=1}^r \langle \mathbfcal{X}^*(\pmb{\varphi}_\ell), \pmb{w}_\ell \rangle_{\mathbb{R}^n}.
\end{align}
Using \eqref{e7} and \eqref{e8}, we obtain
\begin{align*}
    \biggl\langle \mathbfcal{X} - \sum_{\ell=1}^r \mathbfcal{X}^*(\pmb{\varphi}_\ell) \otimes \pmb{\varphi}_\ell, \sum_{t=1}^r \pmb{w}_t \otimes \pmb{\varphi}_t \biggr\rangle_\mathbb{F} &= 
    \Bigl\langle \mathbfcal{X}, \sum_{t=1}^r \pmb{w}_t \otimes \pmb{\varphi}_t \Bigr\rangle_\mathbb{F} \\
    &- \biggl\langle \sum_{\ell=1}^r \mathbfcal{X}^*(\pmb{\varphi}_\ell) \otimes \pmb{\varphi}_\ell, \sum_{t=1}^r \pmb{w}_t \otimes \pmb{\varphi}_t \biggr\rangle_\mathbb{F} = 0.
\end{align*}
Thus, the operator $\sum_{\ell=1}^r \mathbfcal{X}^*(\pmb{\varphi}_\ell) \otimes \pmb{\varphi}_\ell$ is the orthogonal projection of the operator $\mathbfcal{X}$ onto the subspace of $\mathbb{M}_r$. This completes the proof.
\end{proof}
			
			\begin{proof}[Proof of the Theorem \ref{Low-rank approximation}]
				According to Lemma \ref{e10}, for given orthonormal vectors $\pmb{\varphi}_\ell$, the optimal vectors $\pmb{w}_\ell$ are expressed as $\pmb{w}_\ell = \mathbfcal{X}^*(\pmb{\varphi}_\ell)$. Hence, the problem is reduced to determining the optimal $\pmb{\varphi}_\ell$. Using \eqref{e7} and \eqref{e8}, we obtain:
				\begin{align*}
					{\Vert \mathbfcal{X} - \sum_{\ell=1}^r \ \mathbfcal{X}^*(\pmb{\varphi}_\ell) \otimes \pmb{\varphi}_\ell \Vert_\mathbb{F}^2}&= 
					{\Vert \mathbfcal{X} \Vert_\mathbb{F}^2} -2 \langle\mathbfcal{X}  , \sum_{\ell=1}^r \ \mathbfcal{X}^*(\pmb{\varphi}_\ell)\otimes \pmb{\varphi}_\ell\rangle_\mathbb{F}
					+ {\Vert  \sum_{\ell=1}^r \ \mathbfcal{X}^*(\pmb{\varphi}_\ell) \otimes \pmb{\varphi}_\ell \Vert_\mathbb{F}^2}\\
					&= {\Vert \mathbfcal{X} \Vert_\mathbb{F}^2} -\sum_{\ell=1}^r \langle \mathbfcal{X}^*(\pmb{\varphi}_\ell) , \mathbfcal{X}^*(\pmb{\varphi}_\ell)\rangle_{\mathbb{R}^n}\\
					&= {\Vert \mathbfcal{X} \Vert_\mathbb{F}^2} -\sum_{\ell=1}^r \langle \mathbfcal{X}\circ\mathbfcal{X}^*(\pmb{\varphi}_\ell) , \pmb{\varphi}_\ell\rangle_{\mathbb{H}}.
				\end{align*}
				The goal is to find orthonormal vectors $\pmb{\varphi}_\ell$ that minimize the right-hand side of this expression. This problem is well-established, as noted in Theorem \eqref{col1}. The vectors $\pmb{\varphi}_\ell$ can be chosen as the leading $r$ eigenfunctions of the operator $\mathbfcal{X} \circ \mathbfcal{X}^*$. Thus, we have $\pmb{\varphi}_\ell = \pmb{\psi}_\ell$. The proof is complete.
			\end{proof}	

			\begin{proof}[Proof of the Corollary \ref{Corollary SVD ED}] 
			In \cite{ReMFPCA}, the second-derivative operator $\mathcal{R}$ is introduced, which leads to the definition of the fourth-derivative operator $\mathcal{Q}:=\mathcal{R}^*\mathcal{R}$. Based on these definitions, the operator $S_{\alpha_j}$ is defined as:
		\begin{equation*}
			\mathcal{S}_{\alpha_j} = \left( \mathcal{I}+\alpha_j \mathcal{Q}\right)^{-1/2},\quad \alpha_j>0,\ j=1,\ldots, p.
		\end{equation*}
		Using these definitions, \cite{ReMFPCA} further defines the operator $\mathbfcal{S}_{\pmb{\alpha}}: \mathbb{H} \rightarrow \mathbb{H}$, given by:
	\begin{align*}
		\mathbfcal{S}_{\pmb{\alpha}}(\pmb f)(\pmb t):=
		\begin{bmatrix}
			\mathcal{S}_{\alpha_1}(f_1)(t_1)\\	
			\vdots\\
			\mathcal{S}_{\alpha_p}(f_p)(t_p)
		\end{bmatrix},
	\end{align*}
where $\pmb{t} = (t_1, \ldots, t_p) \in \mathbfcal{T}$ and $\pmb{f} = (f_1, \ldots, f_p) \in \mathbb{H}$.

It can be readily shown that $\tilde{\mathbfcal{X}}^* = \mathbfcal{X}^* \mathbfcal{S}_{\pmb{\alpha}}$. Thus, we have:
					\begin{equation*}
						\tilde{\mathbfcal{X}}^*\tilde{\pmb{\varphi}}=
						\mathbfcal{X}^*\mathbfcal{S}_{\pmb{\alpha}}\mathbfcal{S}_{\pmb{\alpha}}^{-1}\pmb{\varphi}=
						\mathbfcal{X}^*\pmb{\varphi}.
					\end{equation*}
					Now, considering \eqref{goal_new}, we have:
					\begin{align*}
						&=
						{\Vert \mathbfcal{X} \Vert_\mathbb{F}^2} - 2{\pmb{u}}^\top(\mathbfcal{X}^*\pmb{\varphi})+{\pmb{u}}^\top\pmb{u} \langle\pmb{\varphi},\pmb{\varphi}\rangle_{\pmb\alpha} \\
						&=
						{\Vert \mathbfcal{X} \Vert_\mathbb{F}^2}\pm {\Vert \tilde{\mathbfcal{X}} \Vert_\mathbb{F}^2} - 2{\pmb{u}}^\top(\tilde{\mathbfcal{X}}^*\tilde{\pmb{\varphi}})+{\pmb{u}}^\top\pmb{u} \langle\tilde{\pmb{\varphi}},\tilde{\pmb{\varphi}}\rangle_{\mathbb{H}}\notag\\
						&=
						{\Vert \mathbfcal{X} \Vert_\mathbb{F}^2}-{\Vert \tilde{\mathbfcal{X}} \Vert_\mathbb{F}^2}+
						\Vert \tilde{\mathbfcal{X}} - \pmb{u}\otimes\tilde{\pmb{\varphi}}\Vert_\mathbb{F}^2.
					\end{align*}
			\end{proof}
   
				\begin{proof}[Proof of the Corollary \ref{col4}]
					By applying Lemma 4.1 from \cite{ReMFPCA} and Corollary \ref{Corollary SVD ED}, this corollary follows directly.
				\end{proof}


	\begin{proof}[Proof of Lemma \ref{sparse penalty lemma}]
The equation \eqref{optimazation_problem} can be reformulated as:
\begin{align*}
\MoveEqLeft
 \sum_{i = 1}^{n} \langle \pmb{x}_i - u_i\pmb{\varphi} , \pmb{x}_i - u_i\pmb{\varphi} \rangle{_\mathbb{H}}  + {\pmb{u}}^\top\pmb{u}\sum_{j=1}^p \alpha_j\Vert { \mathcal{D}^2 \varphi_{j}} \Vert_{H_j}^2 + \sum_{i=1}^{n} p_{\gamma}(\lvert {{u_i}} \rvert)\\
 &= \sum_{i = 1}^{n} \langle \pmb{x}_i , \pmb{x}_i \rangle{_\mathbb{H}} - 2\langle \pmb{x}_i , u_i\pmb{\varphi}  \rangle{_\mathbb{H}} + \langle u_i\pmb{\varphi} , u_i\pmb{\varphi}  \rangle{_\mathbb{H}}  + {\pmb{u}}^\top\pmb{u}\sum_{j=1}^p \alpha_j\Vert { \mathcal{D}^2 \varphi_{j}} \Vert_{H_j}^2 + \sum_{i=1}^{n} p_{\gamma}(\lvert {{u_i}} \rvert) \\
 &= \sum_{i = 1}^{n} \langle \pmb{x}_i , \pmb{x}_i \rangle{_\mathbb{H}} - 2\langle \pmb{x}_i , u_i\pmb{\varphi}  \rangle{_\mathbb{H}} + {\pmb{u}}^\top\pmb{u} \langle \pmb{\varphi} , \pmb{\varphi}  \rangle{_{\pmb{\alpha}}} + \sum_{i=1}^{n} p_{\gamma}(\lvert {{u_i}} \rvert) \\
 &= \sum_{i = 1}^{n} \langle \pmb{x}_i , \pmb{x}_i \rangle{_\mathbb{H}} - 2\langle \pmb{x}_i , u_i\pmb{\varphi}  \rangle{_\mathbb{H}} + u^2_i + p_{\gamma}(\lvert {{u_i}} \rvert).
 \end{align*}
Disregarding the constant term $\sum_{i = 1}^{n} \langle \pmb{x}_i , \pmb{x}_i \rangle{_\mathbb{H}}$, the optimal $\hat{u}_i$ are those that minimize $u^2_i  - 2\langle \pmb{x}_i , u_i\pmb{\varphi}  \rangle{_\mathbb{H}} + p_{\gamma}(\lvert {{u_i}} \rvert)$, depending on the specific form of $p_{\gamma}(\cdot)$. The remainder of the proof is straightforward and thus omitted.
\end{proof}

\begin{proof}[Proof of Lemma \ref{CVlemma}]
	Recall that the criterion \eqref{pen ls eq} is associated with the penalized covariance matrix:
				\begin{equation*}
					\bar{\pmb{X}}^{\top}\bar{\pmb{X}}+\bar{\pmb{\Omega}}_{\pmb{\alpha}} = (\pmb{u}^{\top}\pmb{u})(\pmb{I}+\pmb{\Omega_\alpha}) = (\pmb{u}^{\top}\pmb{u})\tilde{\pmb{S}}^{-1}_{\pmb{\alpha}},
				\end{equation*}
                and the corresponding hat matrix:
				\begin{equation*}
					\pmb{H} = \bar{\pmb{X}}(\bar{\pmb{X}}^{\top}\bar{\pmb{X}}+\bar{\pmb{\Omega}}_{\pmb{\alpha}})^{-1}\bar{\pmb{X}}^{\top}
					={\frac{1}{\Vert\pmb{u}\Vert^2}\bar{\pmb{X}}\tilde{\pmb{S}}_{\pmb{\alpha}}}\bar{\pmb{X}}^{\top}.
				\end{equation*}

	Similar to Lemma 1 in \cite{huang2008functional}, $\hat{\tilde{\pmb{v}}}^{(-{d})}$ also solves  \eqref{pen ls eq} when the ${d}^{th}$ block of $\bar{\pmb{y}}$ is replaced by $\pmb{u}\hat{\tilde{{v}}}^{(-{d})}_{{d}}$. By partitioning the hat matrix $\pmb{H}$ into $D \times D$ equal-sized blocks, the ${d}^{th}$ block of $\hat{\bar{\pmb{y}}}$ can be expressed as:
    \begin{equation*}
        \hat{\tilde{{v}}}^{(-{d})}_{{d}}\pmb{u}= \sum_{\tilde d \neq {d}} \pmb{H}_{{d\tilde{d}}} \tilde{\pmb{C}}_{.{d}} + \pmb{H}_{{dd}}\pmb{u}\hat{\tilde{{v}}}^{(-{d})}_{{d}}.
    \end{equation*}
    Subtracting $\tilde{\pmb{C}}_{.{d}}$ from both sides and noting that $\sum_{\tilde d} \pmb{H}_{{d\tilde{d}}} \tilde{\pmb{C}}_{.{d}}$ = $\pmb{u}\hat{\tilde{{v}}}_{{d}}$, we obtain:
    \allowdisplaybreaks
    \begin{align*}  
   \hat{\tilde{{v}}}^{(-{d})}_{{d}}\pmb{u}-  \tilde{\pmb{C}}_{.{d}} 
   &= 
   \sum_{\tilde d} \pmb{H}_{{d\tilde{d}}} \tilde{\pmb{C}}_{.{d}} - \tilde{\pmb{C}}_{.{d}} + \pmb{H}_{{dd}}\hat{\tilde{{v}}}^{(-{d})}_{{d}}\pmb{u}- \pmb{H}_{{dd}} \tilde{\pmb{C}}_{.{d}}\\
    &=
    \sum_{\tilde d} \pmb{H}_{{d\tilde{d}}} \tilde{\pmb{C}}_{.{d}} - \tilde{\pmb{C}}_{.{d}} + \pmb{H}_{{dd}}(\hat{\tilde{{v}}}^{(-{d})}_{{d}}\pmb{u}-  \tilde{\pmb{C}}_{.{d}})\\
    &=
    \pmb{u}\hat{\tilde{{v}}}_{{d}} - \tilde{\pmb{C}}_{.{d}} + \pmb{H}_{{dd}}(\hat{\tilde{{v}}}^{(-{d})}_{{d}}\pmb{u}-  \tilde{\pmb{C}}_{.{d}}).
    \end{align*}
    Therefore, the cross-validation residual is given by:
\begin{equation*}
  \begin{split}
		     \hat{\tilde{{v}}}^{(-{d})}_{{d}}\pmb{u}-  \tilde{\pmb{C}}_{.{d}} = (\pmb{I}-\pmb{H}_{{dd}})^{-1}(\pmb{u}\hat{\tilde{{v}}}_{{d}} - \tilde{\pmb{C}}_{.{d}}), \\
    \end{split}
\end{equation*}
where $\pmb{H}_{{dd}}$ = ${{\lambda^{*}_d}\pmb{u}\pmb{u}^{\top}}.$
Using the Woodbury identity, we obtain:
\begin{align}\label{w1_single}
	(\pmb{I}-\pmb{H}_{{dd}})^{-1} = \pmb{I} +{\frac{{\lambda^{*}_d}}{1-{\lambda^{*}_d}\Vert\pmb{u}\Vert^2}}\pmb{u}\pmb{u}^{\top}.   
\end{align}
Denoting
	\begin{align*}
		\pmb{s} ={\pmb{u}\hat{\tilde{{v}}}_{{d}}}-\tilde{\pmb{C}}_{.{d}}, 
	\end{align*}
its squared norm is given by:
\begin{align}\label{w2_single}
\Vert\pmb{s}\Vert^2 
&= \tilde{\pmb{C}}_{.{d}}^{\top}\tilde{\pmb{C}}_{.{d}}-2{\tilde{\pmb{C}}_{.{d}}^{\top}\pmb{u}\hat{\tilde{{v}}}_{{d}}}+ {\hat{\tilde{{v}}}_{{d}}\pmb{u}^{\top}\pmb{u}\hat{\tilde{{v}}}_{{d}}}\notag\\
&=
\tilde{\pmb{C}}_{.{d}}^{\top}\tilde{\pmb{C}}_{.{d}}{-\frac{(\tilde{\pmb{C}}_{.{d}}^{\top}\pmb{u})^2}{\Vert\pmb{u}\Vert^2}+\left(\Vert\pmb{u}\Vert\hat{\tilde{{v}}}_{{d}}-\frac{\pmb{u}^{\top}\tilde{\pmb{C}}_{.{d}}}{\Vert\pmb{u}\Vert}\right)^2}.
\end{align}
Additionally, since
\begin{align}\label{w3_single}
\frac{(\pmb{u}^{\top}\pmb{s})^2}{\Vert\pmb{u}\Vert^2}
&=						\Vert\pmb{u}\Vert^2\left(\hat{\tilde{{v}}}_{{d}}-\frac{\pmb{u}^{\top}\tilde{\pmb{C}}_{.{d}}}{\Vert\pmb{u}\Vert^2}\right)^2\notag\\
&= 
\left(\Vert\pmb{u}\Vert\hat{\tilde{{v}}}_{{d}}-\frac{\pmb{u}^{\top}\tilde{\pmb{C}}_{.{d}}}{\Vert\pmb{u}\Vert}\right)^2,
\end{align}
we can rewrite equation (\ref{w2_single}) as:
\begin{align}\label{w4_single}
\Vert\pmb{s}\Vert^2 
&=
\tilde{\pmb{C}}_{.{d}}^{\top}\tilde{\pmb{C}}_{.{d}}{-\frac{(\tilde{\pmb{C}}_{.{d}}^{\top}\pmb{u})^2}{\Vert\pmb{u}\Vert^2}+\left(\Vert\pmb{u}\Vert\hat{\tilde{{v}}}_{{d}}-\frac{\pmb{u}^{\top}\tilde{\pmb{C}}_{.{d}}}{\Vert\pmb{u}\Vert}\right)^2}\notag\\
&=
\tilde{\pmb{C}}_{.{d}}^{\top}\tilde{\pmb{C}}_{.{d}}{-\frac{(\tilde{\pmb{C}}_{.{d}}^{\top}\pmb{u})^2}{\Vert\pmb{u}\Vert^2}+\frac{(\pmb{u}^{\top}\pmb{s})^2}{\Vert\pmb{u}\Vert^2}}.
\end{align}

Combining \eqref{w1_single}, \eqref{w3_single} and \eqref{w4_single}, we have:
\begin{align*}
\MoveEqLeft
\Vert\hat{\tilde{{v}}}^{(-{d})}_{{d}}\pmb{u}-  \tilde{\pmb{C}}_{.{d}}\Vert^2\\ 
&= 
\pmb{s}^{\top}\pmb{s}+{\frac{2{\lambda^{*}_d}}{1-{\lambda^{*}_d}\Vert\pmb{u}\Vert^2}}(\pmb{u}^{\top}\pmb{s})^2+{\frac{({\lambda^{*}_d})^2}{(1-{\lambda^{*}_d}\Vert\pmb{u}\Vert^2)^2}}(\pmb{u}^{\top}\pmb{s})^2\Vert\pmb{u}\Vert^2\notag\\
&= 
\pmb{s}^{\top}\pmb{s}+{\frac{(\pmb{u}^{\top}\pmb{s})^2}{\Vert\pmb{u}\Vert^2}\left(\frac{2{\lambda^{*}_d}}{1-{\lambda^{*}_d}\Vert\pmb{u}\Vert^2}\Vert\pmb{u}\Vert^2+\frac{({\lambda^{*}_d})^2}{(1-{\lambda^{*}_d}\Vert\pmb{u}\Vert^2)^2}\Vert\pmb{u}\Vert^4\right)}\notag\\
&=
\pmb{s}^{\top}\pmb{s}+{\frac{(\pmb{u}^{\top}\pmb{s})^2}{\Vert\pmb{u}\Vert^2}\frac{2{\lambda^{*}_d}\Vert\pmb{u}\Vert^2-({\lambda^{*}_d})^2\Vert\pmb{u}\Vert^4}{(1-{\lambda^{*}_d}\Vert\pmb{u}\Vert^2)^2}}\\
&=
\pmb{s}^{\top}\pmb{s}+{\frac{(\pmb{u}^{\top}\pmb{s})^2}{\Vert\pmb{u}\Vert^2}\left({\frac{1}{(1-{\lambda^{*}_d}\Vert\pmb{u}\Vert^2)^2}-1}\right)}\\
&=
\tilde{\pmb{C}}_{.{d}}^{\top}\tilde{\pmb{C}}_{.{d}}-{\dfrac{(\tilde{\pmb{C}}_{.{d}}^{\top}\pmb{u})^2}{\Vert\pmb{u}\Vert^2}}+{\frac{(\pmb{u}^{\top}\pmb{s})^2}{\Vert\pmb{u}\Vert^2}\frac{1}{(1-{\lambda^{*}_d}\Vert\pmb{u}\Vert^2)^2}}\\
&=
\tilde{\pmb{C}}_{.{d}}^{\top}\tilde{\pmb{C}}_{.{d}}-{\dfrac{(\tilde{\pmb{C}}_{.{d}}^{\top}\pmb{u})^2}{\Vert\pmb{u}\Vert^2}}+{\dfrac{\left(\Vert\pmb{u}\Vert\hat{\tilde{{v}}}_{{d}}-\dfrac{\tilde{\pmb{C}}_{.{d}}^{\top}\pmb{u}}{\Vert\pmb{u}\Vert}\right)^2}{(1-{\lambda^{*}_d}\Vert\pmb{u}\Vert^2)^2}}.
\end{align*}
Proof done.
\end{proof}

\begin{proof}[Proof of Theorem \ref{adjusted variance explained}]
Given that $\pmb{x}^{(r)}_i$ and $\pmb{x}^{(r+1)}_i$ represent the projections of $\pmb{x}_i$ onto $r$-dimensional subspace and $(r+1)$-dimensional subspace, respectively, the result follows directly. This observation confirms the validity of the theorem.
\end{proof}

\section{Derivation of CV and GCV Criteria in the Joint Power Algorithm} \label{joint GCV proof}

Denote the design matrix as \begin{equation*}
					\bar{\pmb{X}} := \begin{bmatrix}
						\bar{\pmb{X}}_1, 
						\dots,
						\bar{\pmb{X}}_{Q}
					\end{bmatrix} \in \mathbb{R}^{nD\times {Q} D},
				\end{equation*}

where $\bar{\pmb{X}}_{q} := \diag\{ \pmb{u}_{Q}, \ldots, \pmb{u}_{Q} \} \in \mathbb{R}^{nD\times D}$. With $\pmb{U}$ fixed, let ${\textsubtilde{\pmb{$V$}}} := \left({\tilde{\pmb{v}}_{1}}^\top, \ldots, {\tilde{\pmb{v}}_{{Q}}}^\top\right)^{\top} \in \mathbb{R}^{\ell d}$. Define $\bar{\pmb{\Omega}}_{\pmb{\alpha}} := (\pmb{U}^{\top}\pmb{U}) \otimes \pmb{\Omega_\alpha}$. Consequently, \eqref{fpca_joint_criterion} can be rewritten as: 
				\begin{equation}\label{sparse_fpca_joint_regression}
					\min_{\textsubtilde{\pmb{$V$}}} \  \left\{ \Vert\bar{\pmb{y}}-\bar{\pmb{X}}\textsubtilde{\pmb{$V$}}\Vert^2+\textsubtilde{\pmb{$V$}}^{\top}\bar{\pmb{\Omega}}_{\pmb{\alpha}}\textsubtilde{\pmb{$V$}}\right\},
				\end{equation}
				with the associated penalized covariance matrix given by:
				\begin{equation*}
					\bar{\pmb{X}}^{\top}\bar{\pmb{X}}+\bar{\pmb{\Omega}}_{\pmb{\alpha}} = (\pmb{U}^{\top}\pmb{U})\otimes(\pmb{I}+\pmb{\Omega_\alpha}) = (\pmb{U}^{\top}\pmb{U})\otimes\tilde{\pmb{S}}^{-1}_{\pmb{\alpha}},
				\end{equation*}
                and thus the hat matrix of the ridge regression is:
				\begin{equation*}
					\pmb{H} = \bar{\pmb{X}}(\bar{\pmb{X}}^{\top}\bar{\pmb{X}}+\bar{\pmb{\Omega}}_{\pmb{\alpha}})^{-1}\bar{\pmb{X}}^{\top}
					=\sum_{{q}=1}^{Q}{\frac{1}{\Vert\pmb{u}_{q}\Vert^2}\bar{\pmb{X}}_{q}\tilde{\pmb{S}}_{\pmb{\alpha}}}\bar{\pmb{X}}_{q}^{\top}.
				\end{equation*}

We define $\hat{\textsubtilde{\pmb{$V$}}}$ as the estimation of $\textsubtilde{\pmb{$V$}}$ that minimizes \eqref{sparse_fpca_joint_regression}. Let $\hat{\textsubtilde{\pmb{$V$}}}^{(-{d})}$ represent the estimate obtained when the ${d}^{th}$ block of $\bar{\pmb{y}}$ and the corresponding rows of $\bar{\pmb{X}}$ are omitted. Partition $\hat{\textsubtilde{\pmb{$V$}}}$ into ${Q}$ equal-length blocks, where the $i^{th}$ block is $\hat{\tilde{\pmb{v}}}{_{q}}$, the estimation of $\tilde{\pmb{v}}{_{q}}$. Similarly, the $i^{th}$ block of $\hat{\textsubtilde{\pmb{$V$}}}^{(-{d})}$ is denoted as $\hat{\tilde{\pmb{v}}}{_{q}}^{(-{d})}$. We now introduce the following lemma regarding the cross-validation prediction errors:
    				\begin{lemma}\label{CVlemma_joint}
					Let $\hat{\tilde{v}}_{{qd}}$ be the ${d}^{th}$ element of  $\hat{\tilde{\pmb{v}}}_{q} = \dfrac{\tilde{\pmb{S}}_{\pmb{\alpha}}\tilde{\pmb{C}}^{\top}\pmb{u}_{q}}{{\pmb{u}_{q}}^{\top}\pmb{u}_{q}}$ and $\hat{\tilde{v}}_{{qd}}^{(-{d})}$ be the ${d}^{th}$ element of $\hat{\tilde{\pmb{v}}}{_{q}}^{(-{d})}$. The ${d}^{th}$ leave-one-out cross-validation prediction error sum of squares is given by:
					\begin{equation}\label{00cv-A}		\Vert\sum_{q=1}^{{Q}}\hat{\tilde{v}}_{{qd}}^{(-{d})}\pmb{u}_q-\tilde{\pmb{C}}_{.{d}}\Vert^2 = \tilde{\pmb{C}}_{.{d}}^{\top}\tilde{\pmb{C}}_{.{d}}-\sum_{q=1}^{{Q}}{\dfrac{(\tilde{\pmb{C}}_{.{d}}^{\top}\pmb{u}_q)^2}{\Vert\pmb{u}_q\Vert^2}}+\sum_{q=1}^{{Q}}{\dfrac{\left(\Vert\pmb{u}_q\Vert\hat{\tilde{v}}_{{qd}}-\dfrac{\tilde{\pmb{C}}_{.{d}}^{\top}\pmb{u}_q}{\Vert\pmb{u}_q\Vert}\right)^2}{(1-{\lambda^{*}_d}^{(q)}\Vert\pmb{u}_q\Vert^2)^2}},
					\end{equation}
					where ${\lambda^{*}_d}^{(q)}$ = $\dfrac{{\{\tilde{\pmb{S}}_{\pmb{\alpha}}}\}_{{dd}}}{\Vert\pmb{u}_q\Vert^2}.$
				\end{lemma}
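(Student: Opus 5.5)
The plan is to mirror the proof of Lemma \ref{CVlemma}, working blockwise with the hat matrix of the penalized regression \eqref{sparse_fpca_joint_regression}. Throughout I will take the columns $\pmb{u}_1,\ldots,\pmb{u}_Q$ of $\pmb{U}$ to be mutually orthogonal. This assumption is already implicit in two places:
\begin{enumerate}[(a)]
\item the stated form $\pmb{H}=\sum_{q}\|\pmb{u}_q\|^{-2}\bar{\pmb{X}}_q\tilde{\pmb{S}}_{\pmb{\alpha}}\bar{\pmb{X}}_q^{\top}$, which requires $\pmb{U}^{\top}\pmb{U}$ to be diagonal so that $\bigl((\pmb{U}^{\top}\pmb{U})\otimes\tilde{\pmb{S}}_{\pmb{\alpha}}^{-1}\bigr)^{-1}$ splits across $q$;
\item the discussion after Corollary \ref{Corollary SVD ED}, where the joint roughness-only scores are right singular vectors of $\tilde{\mathbfcal{X}}$.
\end{enumerate}
I will state this explicitly as the standing hypothesis.

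\textbf{Step 1 (hat matrix and fitted block).} Partition $\pmb{H}$ into $D\times D$ blocks of size $n\times n$. Since $\bar{\pmb{X}}_q=\diag\{\pmb{u}_q,\ldots,\pmb{u}_q\}$, the $(d,\tilde d)$ block is $\sum_q \|\pmb{u}_q\|^{-2}\{\tilde{\pmb{S}}_{\pmb{\alpha}}\}_{d\tilde d}\,\pmb{u}_q\pmb{u}_q^{\top}$. In particular $\pmb{H}_{dd}=\sum_q\lambda^{*(q)}_d\pmb{u}_q\pmb{u}_q^{\top}$. The $d$th fitted block is $\sum_{\tilde d}\pmb{H}_{d\tilde d}\tilde{\pmb{C}}_{.\tilde d}=\sum_q \hat{\tilde v}_{qd}\pmb{u}_q$, using $\hat{\tilde{\pmb{v}}}_q=\tilde{\pmb{S}}_{\pmb{\alpha}}\tilde{\pmb{C}}^{\top}\pmb{u}_q/\|\pmb{u}_q\|^2$.

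\textbf{Step 2 (leave-one-out identity).} Use the same self-consistency argument as in Lemma 1 of \cite{huang2008functional}. The leave-$d$-out estimate also minimizes \eqref{sparse_fpca_joint_regression} when the $d$th block of $\bar{\pmb{y}}$ is replaced by its own prediction $\sum_q\hat{\tilde v}^{(-d)}_{qd}\pmb{u}_q$. Subtracting $\tilde{\pmb{C}}_{.d}$ from both sides then gives
\[
\sum_q\hat{\tilde v}^{(-d)}_{qd}\pmb{u}_q-\tilde{\pmb{C}}_{.d}=(\pmb{I}-\pmb{H}_{dd})^{-1}\pmb{s},\qquad \pmb{s}:=\sum_q\hat{\tilde v}_{qd}\pmb{u}_q-\tilde{\pmb{C}}_{.d}.
\]

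\textbf{Step 3 (inversion and norm).} Because the $\pmb{u}_q$ are orthogonal, the rank-one terms in $\pmb{H}_{dd}$ act on orthogonal directions and can be inverted one at a time:
\[
(\pmb{I}-\pmb{H}_{dd})^{-1}=\pmb{I}+\sum_q\frac{\lambda^{*(q)}_d}{1-\lambda^{*(q)}_d\|\pmb{u}_q\|^2}\pmb{u}_q\pmb{u}_q^{\top}.
\]
Expanding $\|(\pmb{I}-\pmb{H}_{dd})^{-1}\pmb{s}\|^2$, the cross terms between different $q$ vanish, again by orthogonality. Each $q$ then contributes the same scalar factor as in the single-component proof, so
\[
\|(\pmb{I}-\pmb{H}_{dd})^{-1}\pmb{s}\|^2=\|\pmb{s}\|^2+\sum_q\frac{(\pmb{u}_q^{\top}\pmb{s})^2}{\|\pmb{u}_q\|^2}\left(\frac{1}{(1-\lambda^{*(q)}_d\|\pmb{u}_q\|^2)^2}-1\right).
\]

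\textbf{Step 4 (decompose $\|\pmb{s}\|^2$).} Write $\tilde{\pmb{C}}_{.d}$ as its projection onto $\spa\{\pmb{u}_q\}$ plus an orthogonal remainder. This gives
\[
\|\pmb{s}\|^2=\tilde{\pmb{C}}_{.d}^{\top}\tilde{\pmb{C}}_{.d}-\sum_q\frac{(\tilde{\pmb{C}}_{.d}^{\top}\pmb{u}_q)^2}{\|\pmb{u}_q\|^2}+\sum_q\frac{(\pmb{u}_q^{\top}\pmb{s})^2}{\|\pmb{u}_q\|^2},
\]
and $(\pmb{u}_q^{\top}\pmb{s})^2/\|\pmb{u}_q\|^2=\bigl(\|\pmb{u}_q\|\hat{\tilde v}_{qd}-\tilde{\pmb{C}}_{.d}^{\top}\pmb{u}_q/\|\pmb{u}_q\|\bigr)^2$. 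Substituting into Step 3, the $-1$ term cancels the last sum above, which leaves exactly \eqref{00cv-A}.

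The main obstacle is Step 3. Without orthogonality of the $\pmb{u}_q$, the inverse $(\pmb{I}-\pmb{H}_{dd})^{-1}$ does not decouple over $q$, and the per-component formula \eqref{00cv-A} would fail. I would therefore check carefully that the joint algorithm, or its equivalence with the SVD of $\tilde{\mathbfcal{X}}$ noted after Corollary \ref{Corollary SVD ED}, guarantees orthogonal score columns at convergence. As a first attempt I would argue this through that SVD equivalence.
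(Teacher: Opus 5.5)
Your proof is correct and follows the paper's argument essentially step for step. Both use the self-consistency identity for the leave-one-block-out fit, invert $\pmb{I}-\pmb{H}_{dd}$ with $\pmb{H}_{dd}=\sum_{q}{\lambda^{*}_d}^{(q)}\pmb{u}_q\pmb{u}_q^{\top}$, and split $\Vert\pmb{s}\Vert^2$ so that the $-1$ terms cancel. The one difference is that you state the mutual orthogonality of $\pmb{u}_1,\ldots,\pmb{u}_Q$ as an explicit hypothesis. The paper relies on it silently, in its hat-matrix formula, its ``Woodbury'' inverse, and its cross-term-free expansion of $\Vert\pmb{s}\Vert^2$. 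Since the per-$q$ formula \eqref{00cv-A} fails without orthogonality, stating it explicitly is an improvement.
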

				
\begin{proof}[Proof of Lemma \ref{CVlemma_joint}]

Similar to Lemma \ref{CVlemma}, $\hat{\textsubtilde{\pmb{$V$}}}^{(-{d})}$ also solves  \eqref{sparse_fpca_joint_regression} when the ${d}^{th}$ block of $\bar{\pmb{y}}$ is replaced by $\sum_{q=1}^{{Q}}\hat{\tilde{v}}_{{qd}}^{(-{d})}\pmb{u}_q$. By partitioning the hat matrix $\pmb{H}$ into $D \times D$ equal-sized blocks, we obtain the ${d}^{th}$ block of $\hat{\bar{\pmb{y}}}$ as:
    \begin{equation*}
        \sum_{q=1}^{{Q}}\hat{\tilde{v}}_{{qd}}^{(-{d})}\pmb{u}_q = \sum_{\tilde d \neq {d}} \pmb{H}_{{d\tilde{d}}} \tilde{\pmb{C}}_{.{d}} + \pmb{H}_{{dd}}\sum_{q=1}^{{Q}}\hat{\tilde{v}}_{{qd}}^{(-{d})}\pmb{u}_q.
    \end{equation*}
    Subtracting $\tilde{\pmb{C}}_{.{d}}$ from both sides of the above equation and noting that $\sum_{\tilde d} \pmb{H}_{{d\tilde{d}}} \tilde{\pmb{C}}_{.{d}}$ = $\sum_{q=1}^{{Q}}\hat{\tilde{v}}_{{qd}}\pmb{u}_q$, we have:
    \begin{align*}  
   \sum_{q=1}^{{Q}}\hat{\tilde{v}}_{{qd}}^{(-{d})}\pmb{u}_q -  \tilde{\pmb{C}}_{.{d}} 
   &= 
   \sum_{\tilde d} \pmb{H}_{{d\tilde{d}}} \tilde{\pmb{C}}_{.{d}} - \tilde{\pmb{C}}_{.{d}} + \pmb{H}_{{dd}}\sum_{q=1}^{{Q}}\hat{\tilde{v}}_{{qd}}^{(-{d})}\pmb{u}_q - \pmb{H}_{{dd}} \tilde{\pmb{C}}_{.{d}}\\
    &=
    \sum_{\tilde d} \pmb{H}_{{d\tilde{d}}} \tilde{\pmb{C}}_{.{d}} - \tilde{\pmb{C}}_{.{d}} + \pmb{H}_{{dd}}(\sum_{q=1}^{{Q}}\hat{\tilde{v}}_{{qd}}^{(-{d})}\pmb{u}_q -  \tilde{\pmb{C}}_{.{d}})\\
    &=
    \sum_{q=1}^{{Q}}\hat{\tilde{v}}_{{qd}}\pmb{u}_q - \tilde{\pmb{C}}_{.{d}} + \pmb{H}_{{dd}}(\sum_{q=1}^{{Q}}\hat{\tilde{v}}_{{qd}}^{(-{d})}\pmb{u}_q -  \tilde{\pmb{C}}_{.{d}}).
    \end{align*}
    Therefore, the cross-validation residual is given by:
\begin{equation*}
  \begin{split}
		     \sum_{q=1}^{{Q}}\hat{\tilde{v}}_{{qd}}^{(-{d})}\pmb{u}_q -  \tilde{\pmb{C}}_{.{d}} = (\pmb{I}-\pmb{H}_{{dd}})^{-1}(\sum_{q=1}^{{Q}}\hat{\tilde{v}}_{{qd}}\pmb{u}_q - \tilde{\pmb{C}}_{.{d}}), \\
    \end{split}
\end{equation*}
where $\pmb{H}_{{dd}}$ = $\sum_{q=1}^{{Q}}{{\lambda^{*}_d}^{(q)}\pmb{u}_q\pmb{u}_q^{\top}}.$
Applying the Woodbury identity, we obtain:
\begin{align}\label{w1}
	(\pmb{I}-\pmb{H}_{{dd}})^{-1} = \pmb{I} +\sum_{q=1}^{{Q}}{\frac{{\lambda^{*}_d}^{(q)}}{1-{\lambda^{*}_d}^{(q)}\Vert\pmb{u}_q\Vert^2}}\pmb{u}_q\pmb{u}_q^{\top}.   
\end{align}
Denote:
	\begin{align*}
		\pmb{s} =\sum_{q=1}^{{Q}}{\hat{\tilde{v}}_{{qd}}\pmb{u}_q}-\tilde{\pmb{C}}_{.{d}}, 
	\end{align*}
and its squared norm is:
\begin{align}\label{w2}
\Vert\pmb{s}\Vert^2 
&= \tilde{\pmb{C}}_{.{d}}^{\top}\tilde{\pmb{C}}_{.{d}}-2\sum_{q=1}^{{Q}}{\tilde{\pmb{C}}_{.{d}}^{\top}\hat{\tilde{v}}_{{qd}}\pmb{u}_q}+ \sum_{q=1}^{{Q}}{\hat{\tilde{v}}_{{qd}}\pmb{u}_q^{\top}\pmb{u}_q\hat{\tilde{v}}_{{qd}}}\notag\\
&=
\tilde{\pmb{C}}_{.{d}}^{\top}\tilde{\pmb{C}}_{.{d}}+\sum_{q=1}^{{Q}}{-\frac{(\tilde{\pmb{C}}_{.{d}}^{\top}\pmb{u}_q)^2}{\Vert\pmb{u}_q\Vert^2}+\left(\Vert\pmb{u}_q\Vert\hat{\tilde{v}}_{{qd}}-\frac{\pmb{u}_q^{\top}\tilde{\pmb{C}}_{.{d}}}{\Vert\pmb{u}_q\Vert}\right)^2}.
\end{align}
Additionally, since
\begin{align}\label{w3}
\frac{(\pmb{u}_q^{\top}\pmb{s})^2}{\Vert\pmb{u}_q\Vert^2}
&=						\Vert\pmb{u}_q\Vert^2\left(\hat{\tilde{v}}_{{qd}}-\frac{\pmb{u}_q^{\top}\tilde{\pmb{C}}_{.{d}}}{\Vert\pmb{u}_q\Vert^2}\right)^2\notag\\
&= 
\left(\Vert\pmb{u}_q\Vert\hat{\tilde{v}}_{{qd}}-\frac{\pmb{u}_q^{\top}\tilde{\pmb{C}}_{.{d}}}{\Vert\pmb{u}_q\Vert}\right)^2,
\end{align}
we can rewrite equation (\ref{w2}) as:
\begin{align}\label{w4}
\Vert\pmb{s}\Vert^2 
&=
\tilde{\pmb{C}}_{.{d}}^{\top}\tilde{\pmb{C}}_{.{d}}+\sum_{q=1}^Q{-\frac{(\tilde{\pmb{C}}_{.{d}}^{\top}\pmb{u}_q)^2}{\Vert\pmb{u}_q\Vert^2}+\left(\Vert\pmb{u}_q\Vert\hat{\tilde{v}}_{{qd}}-\frac{\pmb{u}_q^{\top}\tilde{\pmb{C}}_{.{d}}}{\Vert\pmb{u}_q\Vert}\right)^2}\notag\\
&=
\tilde{\pmb{C}}_{.{d}}^{\top}\tilde{\pmb{C}}_{.{d}}+\sum_{q=1}^Q{-\frac{(\tilde{\pmb{C}}_{.{d}}^{\top}\pmb{u}_q)^2}{\Vert\pmb{u}_q\Vert^2}+\frac{(\pmb{u}_q^{\top}\pmb{s})^2}{\Vert\pmb{u}_q\Vert^2}}.
\end{align}

Combining \eqref{w1}, \eqref{w3} and \eqref{w4}, we can obtain
\begin{align*}
\MoveEqLeft
\Vert\sum_{q=1}^{{Q}}\hat{\tilde{v}}_{{qd}}^{(-{d})}\pmb{u}_q -  \tilde{\pmb{C}}_{.{d}}\Vert^2\\ 
&= 
\pmb{s}^{\top}\pmb{s}+\sum_{q=1}^{{Q}}{\frac{2{\lambda^{*}_d}^{(q)}}{1-{\lambda^{*}_d}^{(q)}\Vert\pmb{u}_q\Vert^2}}(\pmb{u}_q^{\top}\pmb{s})^2+\sum_{q=1}^{{Q}}{\frac{({\lambda^{*}_d}^{(q)})^2}{(1-{\lambda^{*}_d}^{(q)}\Vert\pmb{u}_q\Vert^2)^2}}(\pmb{u}_q^{\top}\pmb{s})^2\Vert\pmb{u}_q\Vert^2\notag\\
&= 
\pmb{s}^{\top}\pmb{s}+\sum_{q=1}^{{Q}}{\frac{(\pmb{u}_q^{\top}\pmb{s})^2}{\Vert\pmb{u}_q\Vert^2}\left(\frac{2{\lambda^{*}_d}^{(q)}}{1-{\lambda^{*}_d}^{(q)}\Vert\pmb{u}_q\Vert^2}\Vert\pmb{u}_q\Vert^2+\frac{({\lambda^{*}_d}^{(q)})^2}{(1-{\lambda^{*}_d}^{(q)}\Vert\pmb{u}_q\Vert^2)^2}\Vert\pmb{u}_q\Vert^4\right)}\notag\\
&=
\pmb{s}^{\top}\pmb{s}+\sum_{q=1}^{{Q}}{\frac{(\pmb{u}_q^{\top}\pmb{s})^2}{\Vert\pmb{u}_q\Vert^2}\frac{2{\lambda^{*}_d}^{(q)}\Vert\pmb{u}_q\Vert^2-({\lambda^{*}_d}^{(q)})^2\Vert\pmb{u}_q\Vert^4}{(1-{\lambda^{*}_d}^{(q)}\Vert\pmb{u}_q\Vert^2)^2}}\\
&=
\pmb{s}^{\top}\pmb{s}+\sum_{q=1}^{{Q}}{\frac{(\pmb{u}_q^{\top}\pmb{s})^2}{\Vert\pmb{u}_q\Vert^2}\left(\frac{1}{(1-{\lambda^{*}_d}^{(q)}\Vert\pmb{u}_q\Vert^2)^2}-1\right)}\\
&=
\tilde{\pmb{C}}_{.{d}}^{\top}\tilde{\pmb{C}}_{.{d}}-\sum_{q=1}^{{Q}}{\dfrac{(\tilde{\pmb{C}}_{.{d}}^{\top}\pmb{u}_q)^2}{\Vert\pmb{u}_q\Vert^2}}+\sum_{q=1}^{{Q}}{\frac{(\pmb{u}_q^{\top}\pmb{s})^2}{\Vert\pmb{u}_q\Vert^2}\frac{1}{(1-{\lambda^{*}_d}^{(q)}\Vert\pmb{u}_q\Vert^2)^2}}\\
&=
\tilde{\pmb{C}}_{.{d}}^{\top}\tilde{\pmb{C}}_{.{d}}-\sum_{q=1}^{{Q}}{\dfrac{(\tilde{\pmb{C}}_{.{d}}^{\top}\pmb{u}_q)^2}{\Vert\pmb{u}_q\Vert^2}}+\sum_{q=1}^{{Q}}{\dfrac{\left(\Vert\pmb{u}_q\Vert\hat{\tilde{v}}_{{qd}}-\dfrac{\tilde{\pmb{C}}_{.{d}}^{\top}\pmb{u}_q}{\Vert\pmb{u}_q\Vert}\right)^2}{(1-{\lambda^{*}_d}^{(q)}\Vert\pmb{u}_q\Vert^2)^2}}.
\end{align*}
Proof done.
\end{proof}

Given that $\tilde{\pmb{C}}$ and $\pmb{U}$ are fixed, the expressions \eqref{CV score2} and \eqref{GCV score2} are obtained by averaging the final terms on the right-hand side of \eqref{00cv-A}.

\section{Half Smoothing Implementation}
Here we present the implementation details for Corollary \ref{Corollary SVD ED}. For a given smoothing parameter $\pmb{\alpha}$, the optimal solutions for $\pmb{u}$ and $\pmb{\tilde{\varphi}}$ as stated in Corollary \ref{Corollary SVD ED} are derived from the leading pair of functional SVD components of $\tilde{\mathbfcal{X}}$. Upon obtaining $\tilde{\pmb{\varphi}}$, we apply a half-smoothing procedure to yield the smoothed functional PC, represented as $\pmb{\varphi} = \mathbfcal{S}_{\pmb{\alpha}}\tilde{\pmb{\varphi}}$. 

In the finite-dimensional functional space framework, $\tilde{\varphi}_j$ can be expressed as $\tilde{\varphi}_j = \pmb{\phi}_j(t)^\top\bar{\pmb{v}}_{j}$, where $\bar{\pmb{v}}_{j} \in \mathbb{R}^{d_j}$. Similarly, each ${{\tilde{x}}}_{i,j}$ can be represented by ${{\tilde{x}}}_{i,j} = \pmb{\nu}_{j}(t)^\top\bar{\pmb{c}}_{i,j}$, with $\bar{\pmb{c}}_{i,j} \in \mathbb{R}^{d_j}$. Under this framework, the inner product $\langle\tilde{\pmb{\varphi}},\tilde{\pmb{\varphi}}\rangle_{H}$ can be represented as $\bar{\pmb{v}}^{\top}\pmb{G}\bar{\pmb{v}}$, where $\bar{\pmb{v}} = (\bar{\pmb{v}}^{\top}_1, \ldots, \bar{\pmb{v}}^{\top}_p)^{\top}$. Additionally, this is equivalent to $\langle\tilde{\pmb{\varphi}},\tilde{\pmb{\varphi}}\rangle_{H} = {{\pmb{v}}^{\top}\pmb{S}_{\pmb{\alpha}}^{-2}\pmb{v}}$. By equating ${\bar{\pmb{v}}^{\top}\pmb{G}\bar{\pmb{v}}}$ with ${\pmb{v}}^{\top}{\pmb{S}_{\pmb{\alpha}}^{-2}}{\pmb{v}}$, we derive the relationship $\bar{\pmb{v}} = \pmb{G}^{-\frac{1}{2}}{\pmb{S}_{\pmb{\alpha}}^{-1}}\pmb{v}$. Similarly, we can establish that $\bar{\pmb{c}}{_{j}} = \pmb{G}^{-\frac{1}{2}}{\pmb{S}_{\pmb{\alpha}}}\pmb{G}\pmb{c}_{j}$, where $\bar{\pmb{c}}_i = (\bar{\pmb{c}}_{i,1}^\top, \ldots, \bar{\pmb{c}}_{i,p}^\top)^\top$ and $\bar{\pmb{C}} = \pmb{C}\pmb{G}\pmb{S}_{\pmb{\alpha}}\pmb{G}^{-\frac{1}{2}}$.

Using Theorem 3 from \cite{Haghbin2019fssa}, we initially obtain $\bar{\pmb{v}} = \pmb{G}^{-\frac{1}{2}}\bar{\pmb{u}}$, where $\bar{\pmb{u}}$ denotes the first right singular vectors of $\pmb{X} = \pmb{G}^{\frac{1}{2}}\bar{\pmb{C}}^{\top}$. Subsequently, we compute $\pmb{v} = {\pmb{S}_{\pmb{\alpha}}}\pmb{G}^{\frac{1}{2}}\bar{\pmb{v}}$.

Furthermore, multiple functional PCs and their corresponding scores can be simultaneously derived as the leading pairs of functional SVD components. When estimating the first $Q$ functional SVD components concurrently, we calculate $\pmb{V} = {\pmb{S}_{\pmb{\alpha}}}\pmb{G}^{\frac{1}{2}}\bar{\pmb{V}}$, where $\bar{\pmb{V}} = [\bar{\pmb{v}}_1,\dots,\bar{\pmb{v}}_Q]$.

\newpage

\section{Supplementary Figures}
\begin{figure}[!h] 
\centering
  \includegraphics[width=1\linewidth]{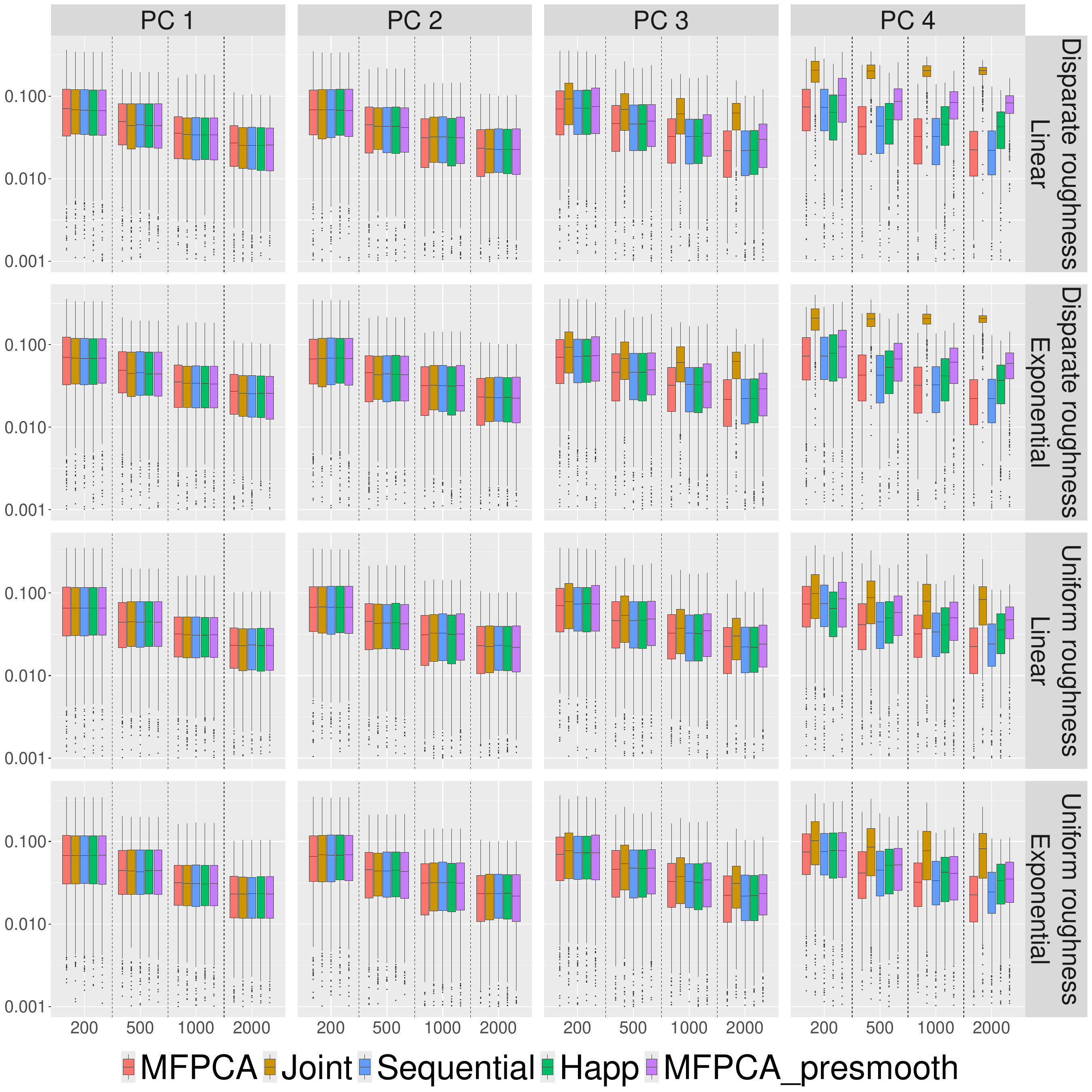}
  \caption{Relative eigenvalue estimation errors across the simulation settings. 
The error is measured by 
$Err(\hat{\lambda}_m)=|\hat{\lambda}_m-\lambda_m|/|\lambda_m|$ 
for the leading functional principal components and is compared across the competing MFPCA and ReMFPCA methods.}
  \label{eigenvalue error}
\end{figure}

\begin{figure}[!t] 
  \centering
  \begin{subfigure}{0.5\textwidth}
    \centering
    \includegraphics[width=\textwidth]{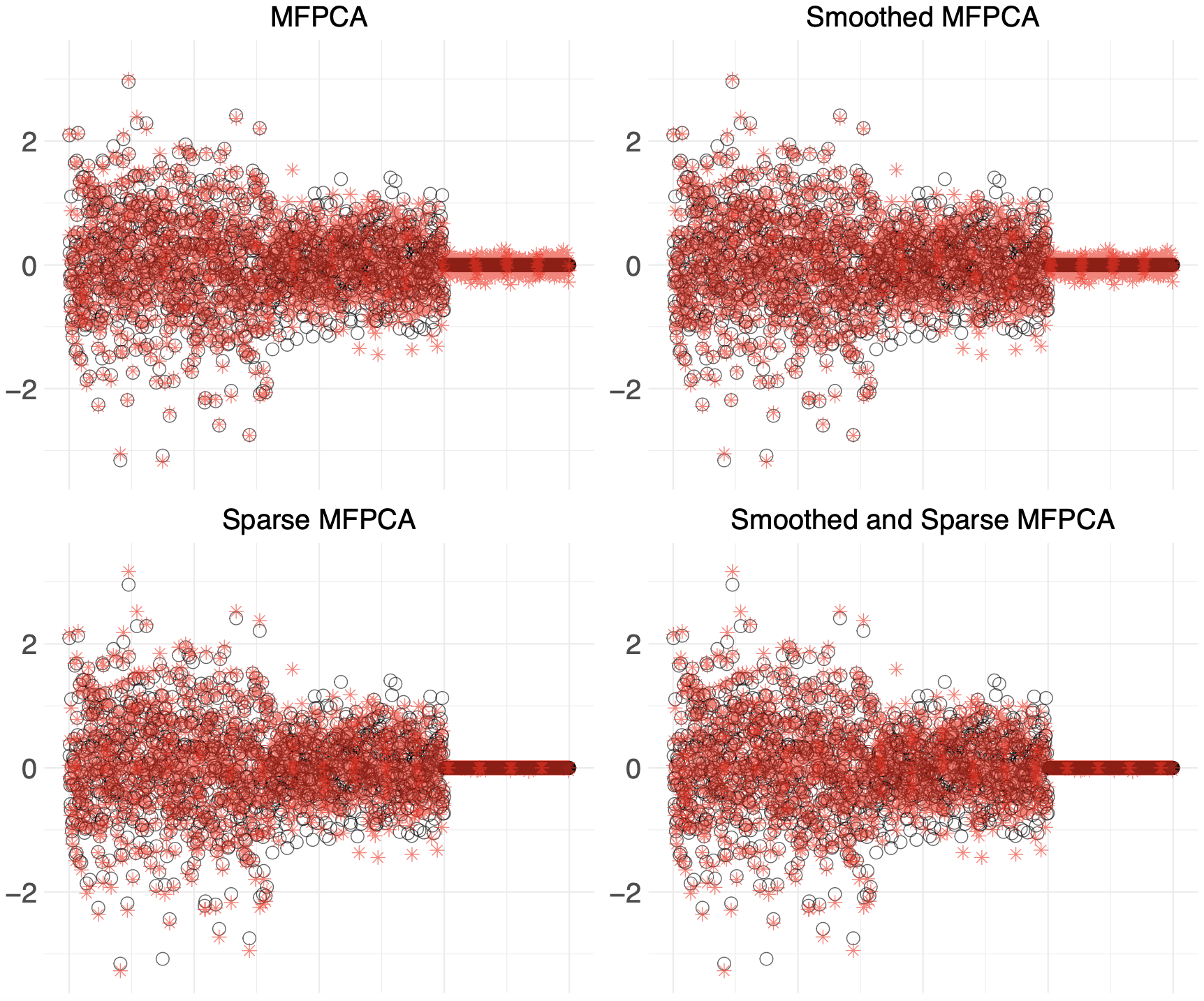}
    \caption{PC 1 scores}
    \label{pc1 score}
  \end{subfigure}%
  \begin{subfigure}{0.5\textwidth}
    \centering
    \includegraphics[width=\textwidth]{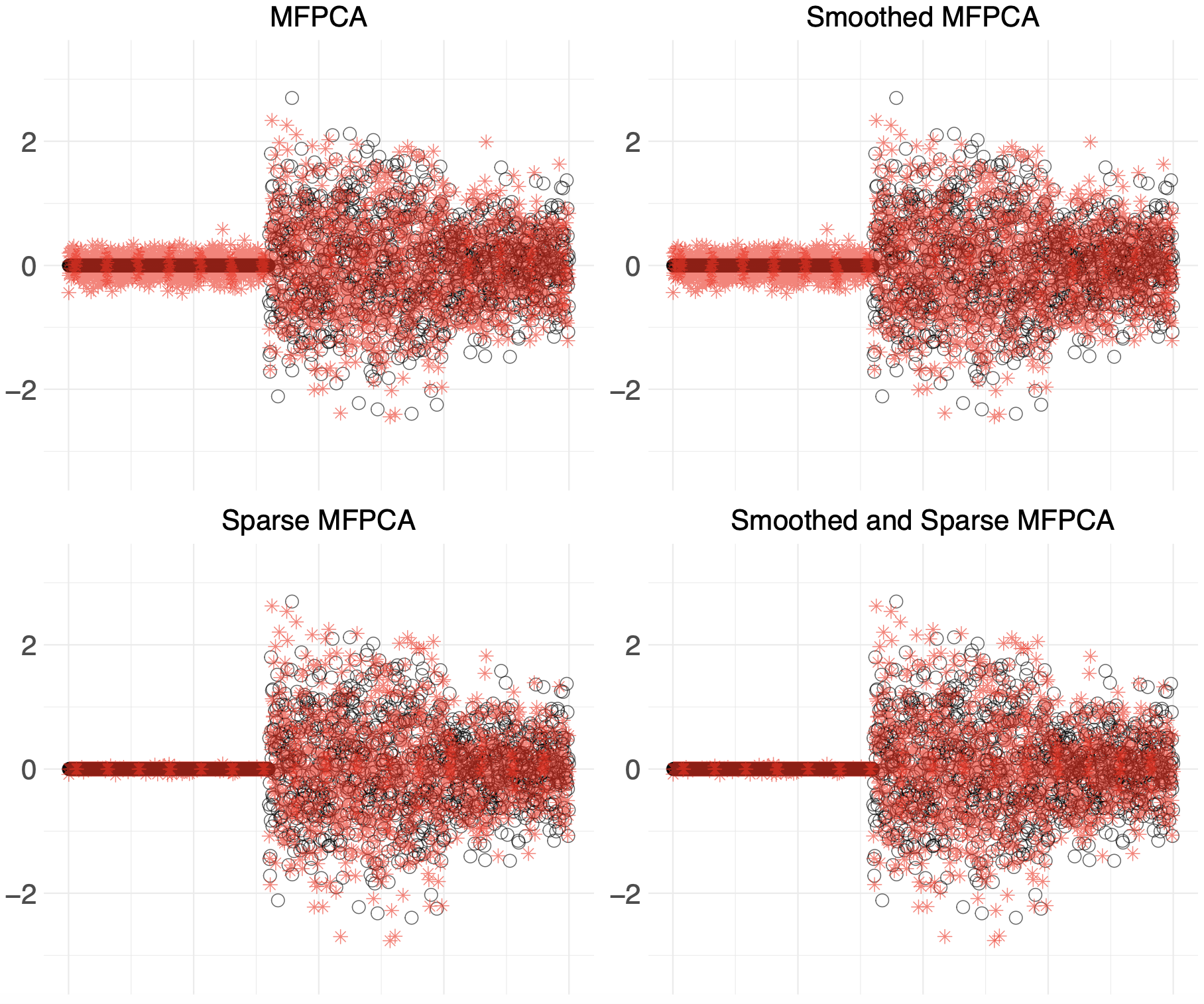}
    \caption{PC 2 scores}
    \label{pc2 score}
  \end{subfigure}
  \begin{subfigure}{0.5\textwidth}
    \centering
    \includegraphics[width=\textwidth]{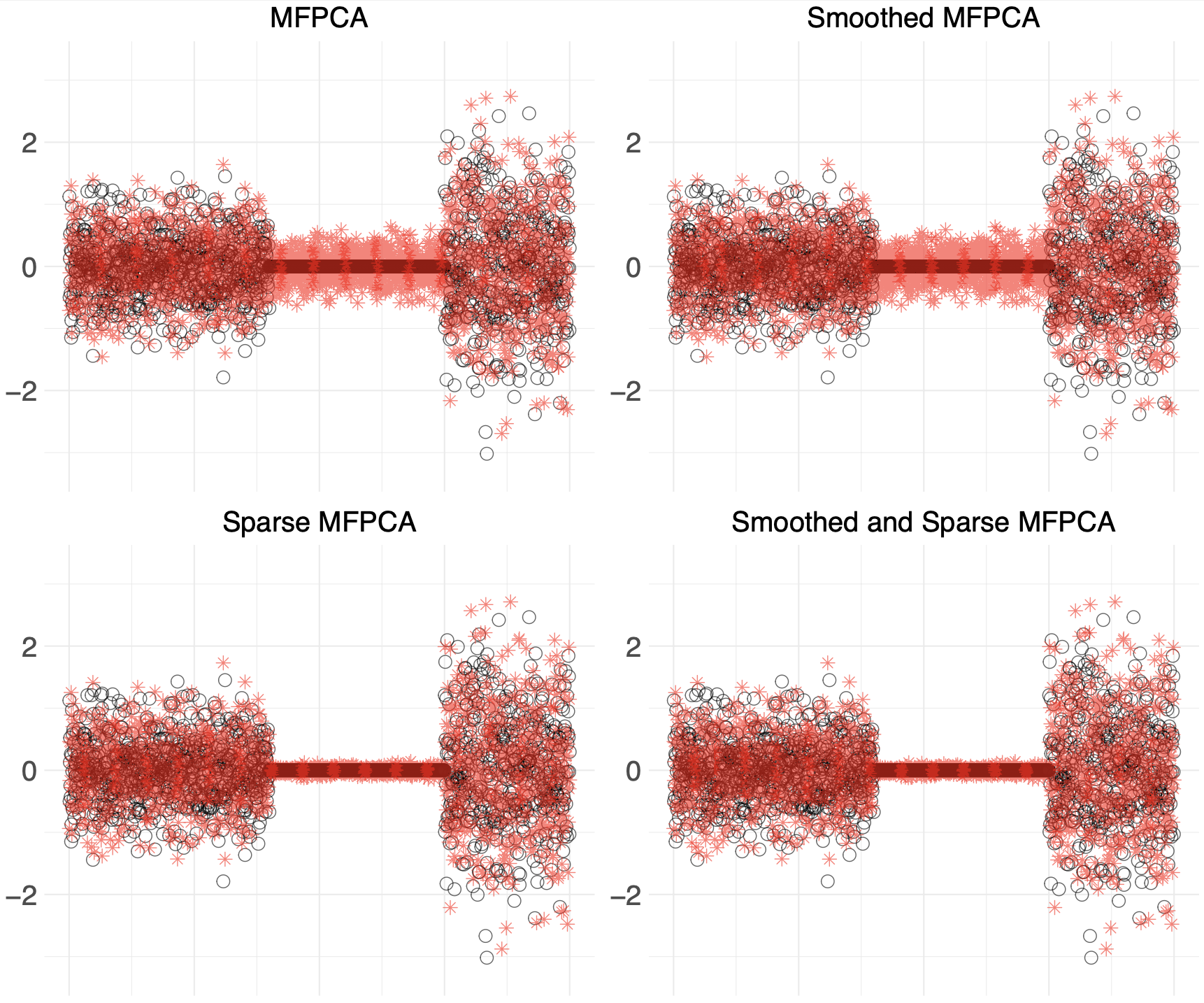}
    \caption{PC 3 scores}
    \label{pc3 score}
  \end{subfigure}
    \begin{subfigure}{0.49\textwidth}
    \centering
    \includegraphics[width=\textwidth]{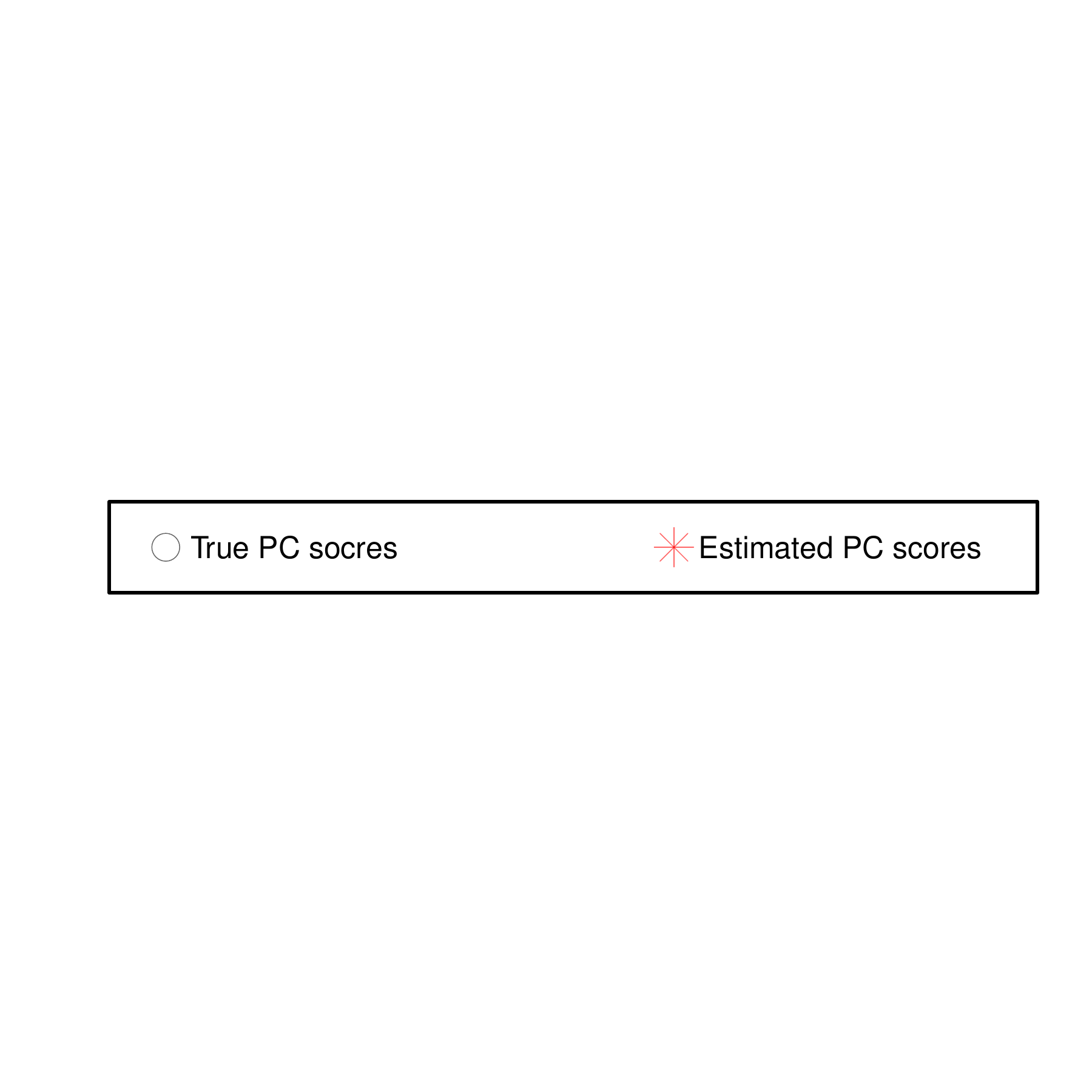}
  \end{subfigure}
  \caption{Estimated and true PC scores from a randomly selected simulation replicate under the high-sparsity setting, Scenario III. 
The estimated PC scores are shown in red, while the true PC scores are shown in black. 
The panels display the first three PC scores and illustrate the ability of the proposed smooth-and-sparse ReMFPCA method to recover the underlying sparse score structure.}
  \label{pc score}
\end{figure}

\begin{figure}[!t]
    \centering
    \includegraphics[width=1\textwidth]{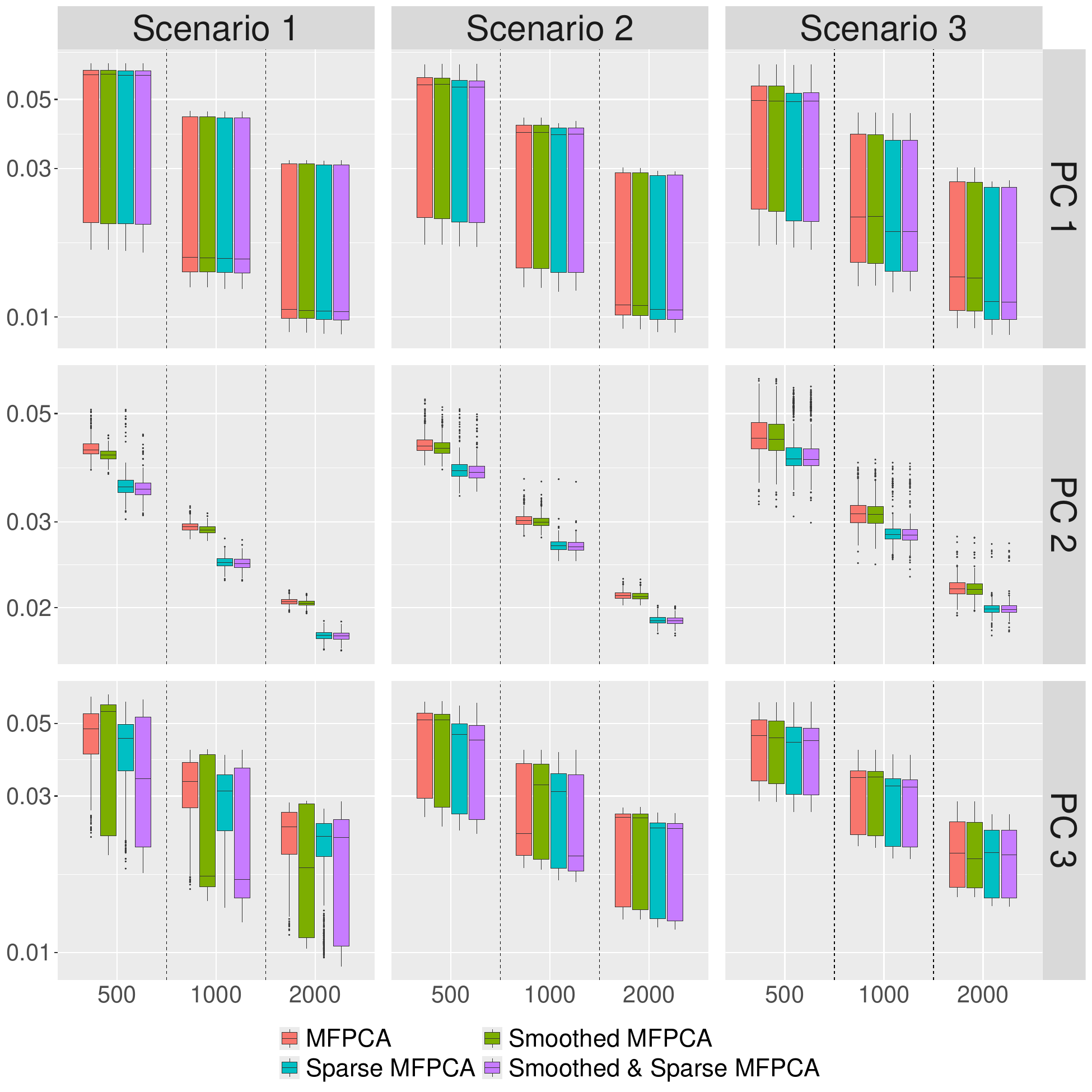}
    \caption{Estimation error of the sparse PC scores across the simulation settings. 
The error is measured by 
$Err(\hat{\rho}_{i,m}^{k}) = N^{-1}\sum_{i=1}^{N}|\hat{\rho}_{i,m}^{k}-\rho_{i,m}^{k}|$, 
summarizing the discrepancy between the estimated and true subject-specific PC scores for the leading components. 
The results compare the accuracy of the competing methods under different sparsity levels and sample sizes.}
    \label{sparse score error}
\end{figure}

\begin{figure}[!t]
    \centering
    \includegraphics[width=1\textwidth]{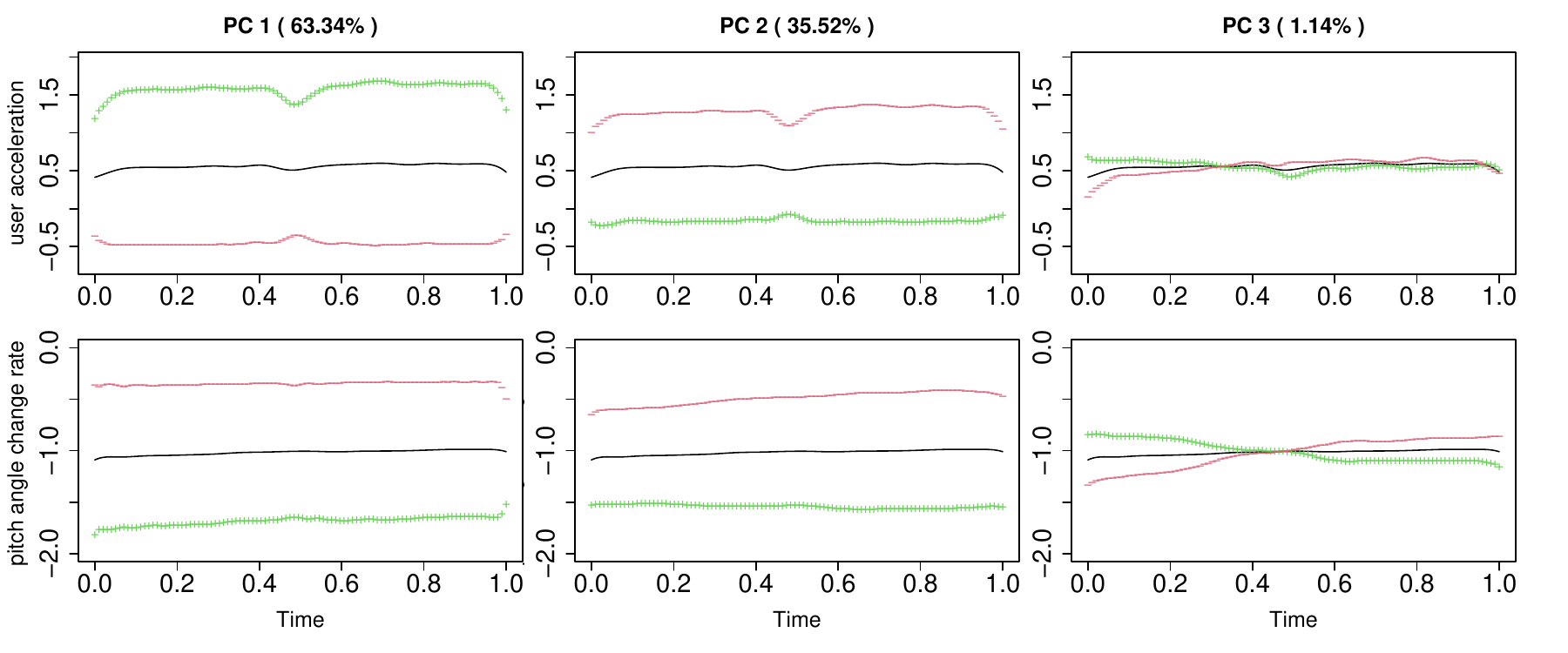}
    \caption{Effect of adding and subtracting a suitable multiple of each of the first three estimated functional PCs to the overall mean curve. 
The figure illustrates the dominant modes of variation captured by the proposed smooth-and-sparse ReMFPCA method in the MotionSense data.}
    \label{FPC plot:plot5}
\end{figure}

\begin{figure}[!h] 
\centering
  \includegraphics[width=1\linewidth]{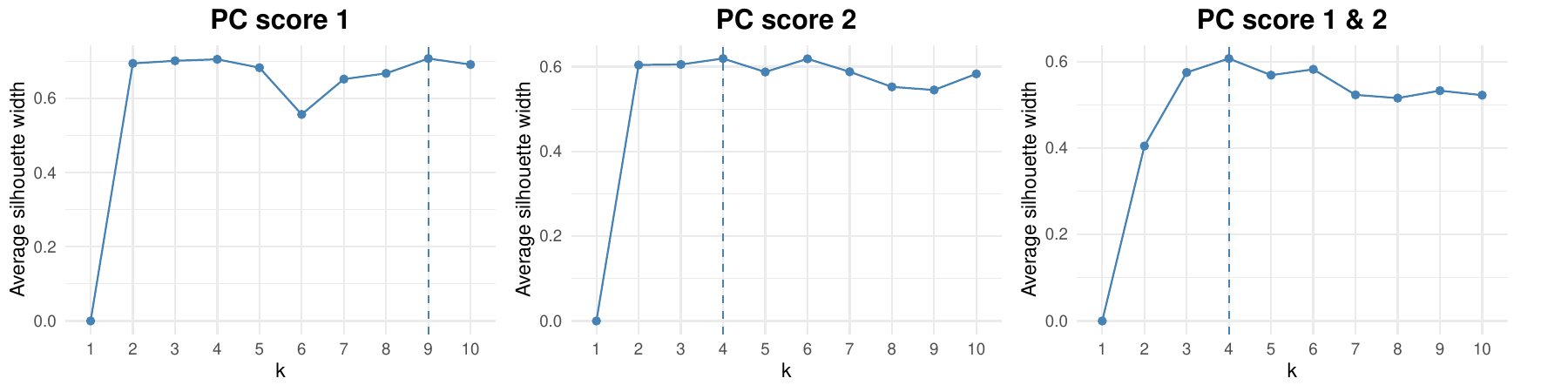}
  \caption{Silhouette analysis for selecting the number of clusters based on the estimated ReMFPCA scores from the MotionSense data. 
The criterion is evaluated using the first PC score, the second PC score, and the first two PC scores jointly.}
  \label{Silhouette plot}
\end{figure}

\begin{figure}[!t]
    \centering
    \includegraphics[width=1\textwidth]{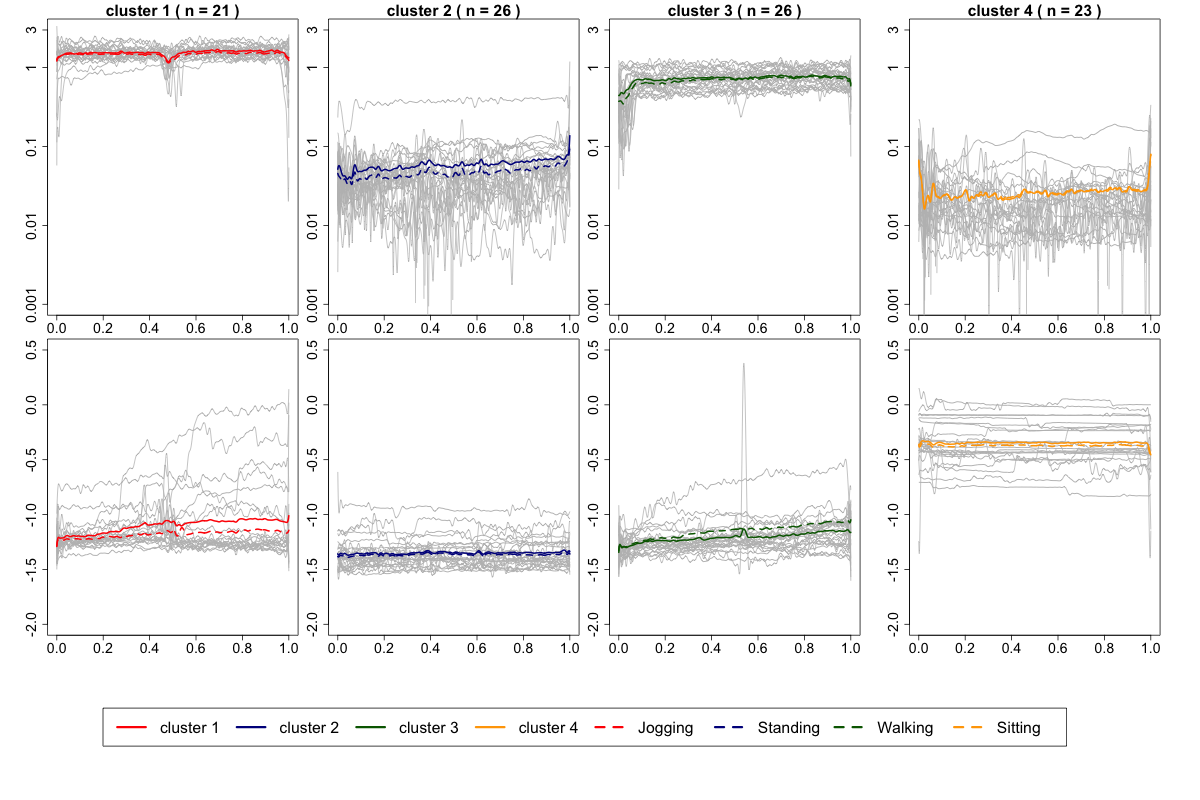}
    \caption{Clustering results for the MotionSense data based on the first two ReMFPCA scores. 
For each estimated cluster, the solid curve represents the estimated cluster mean, while the dashed curve represents the corresponding true activity-specific mean curve.}
    \label{cluster}
\end{figure}